\def\eqref#1{equation~\ref{#1}}
\def\1{\bm{1}}
\DeclareMathAlphabet{\mathsfit}{\encodingdefault}{\sfdefault}{m}{sl}
\SetMathAlphabet{\mathsfit}{bold}{\encodingdefault}{\sfdefault}{bx}{n}
\DeclareMathOperator*{\argmin}{arg\,min}
\newtheorem{theorem}{Theorem}[section]
\newtheorem{definition}[theorem]{Definition}
\newtheorem{proposition}[theorem]{Proposition}
\newtheorem{lemma}[theorem]{Lemma}
\newcommand{\eg}{e.\,g., }
\newcommand{\ie}{i.\,e., }
\newcommand{\oms}{\{\!\!\{}
\newcommand{\cms}{\}\!\!\}}
\newcommand{\bigoms}{\big\{\!\!\big\{}
\newcommand{\bigcms}{\big\}\!\!\big\}}
\title{On the Lipschitz Continuity of Set Aggregation Functions and Neural Networks for Sets}
\author{Giannis Nikolentzos \\
Department of Informatics and Telecommunications\\
University of Peloponnese\\
Tripoli, 22131, Greece \\
\texttt{nikolentzos@uop.gr} \\
\And
Kontantinos Skianis \\
Department of Computer Science \& Engineering \\
University of Ioannina \\
Ioannina, 45110, Greece \\
\texttt{kskianis@cs.uoi.gr} \\
}
\begin{document}

\maketitle
\begin{abstract}
The Lipschitz constant of a neural network is connected to several important properties of the network such as its robustness and generalization.
It is thus useful in many settings to estimate the Lipschitz constant of a model.
Prior work has focused mainly on estimating the Lipschitz constant of multi-layer perceptrons and convolutional neural networks.
Here we focus on data modeled as sets or multisets of vectors and on neural networks that can handle such data.
These models typically apply some permutation invariant aggregation function, such as the sum, mean or max operator, to the input multisets to produce a single vector for each input sample.
In this paper, we investigate whether these aggregation functions, along with an attention-based aggregation function, are Lipschitz continuous with respect to three distance functions for unordered multisets, and we compute their Lipschitz constants.
In the general case, we find that each aggregation function is Lipschitz continuous with respect to only one of the three distance functions, while the attention-based function is not Lipschitz continuous with respect to any of them.
Then, we build on these results to derive upper bounds on the Lipschitz constant of neural networks that can process multisets of vectors, while we also study their stability to perturbations and generalization under distribution shifts.
To empirically verify our theoretical analysis, we conduct a series of experiments on datasets from different domains.
\end{abstract}

\section{Introduction}
In the past decade, deep neural networks have been applied with great success to several problems in different machine learning domains ranging from computer vision~\citep{krizhevsky2012imagenet,he2016deep} to natural language processing~\citep{vaswani2017attention,peters2018deep}.
Owing to their recent success, these models are now ubiquitous in machine learning applications.
However, deep neural networks can be very sensitive to their input.
Indeed, it is well-known that if some specially designed small perturbation is applied to an image, it can cause a neural network model to make a false prediction even though the perturbed image looks ``normal'' to humans~\citep{szegedy2014intriguing,goodfellow2014explaining}.

A key metric for quantifying the robustness of neural networks to small perturbations is the Lipschitz constant.
Training neural networks with bounded Lipschitz constant has been considered a promising direction for producing models robust to adversarial examples~\citep{tsuzuku2018lipschitz,anil2019sorting,trockman2021orthogonalizing}.
However, if Lipschitz constraints are imposed, neural networks might lose a significant portion of their expressive power~\citep{zhang2022rethinking}.
Therefore, typically no constraints are imposed, and the neural network's Lipschitz constant is determined once the model is trained.
Unfortunately, even for two-layer neural networks, exact computation of this quantity is NP-hard~\citep{virmaux2018lipschitz}.
A recent line of work has thus focused on estimating the Lipschitz constant of neural networks, mainly by deriving upper bounds~\citep{virmaux2018lipschitz,fazlyab2019efficient,latorre2020lipschitz,combettes2020lipschitz,kim2021lipschitz,pabbaraju2021estimating,chuang2022tree}.
Efficiency is often sacrificed for the sake of tighter bounds (\eg use of semidefinite programming) which underlines the need for accurate estimation of the Lipschitz constant.

Prior work estimates mainly the Lipschitz constant of architectures composed of fully-connected and convolutional layers.
However, in several domains, input data might correspond to complex objects which consist of other simpler objects.
We typically model these complex objects as sets or multisets (\ie a generalization of a set).
For instance, in computer vision, a point cloud is a set of data points in the $3$-dimensional space.
Likewise, in natural language processing, documents may be represented by multisets of word embeddings.
Neural networks for sets typically consist of a series of fully-connected layers followed by an aggregation function which produces a representation for the entire multiset~\citep{zaheer2017deep,qi2017pointnet}.
For the model to be invariant to permutations of the multiset's elements, a permutation invariant aggregation function needs to be employed.
Common functions include the sum, mean and max operators.
While previous work has studied the expressive power of neural networks that employ the aforementioned aggregation functions~\citep{wagstaff2019limitations,wagstaff2022universal}, their Lipschitz continuity and stability to perturbations remains underexplored.
Besides those standard aggregation functions, other methods for embedding multisets have been proposed recently such as the Fourier Sliced-Wasserstein embedding which is bi-Lipschitz with respect to the Wasserstein distance~\citep{amir2025fourier}.

In this paper, we consider three distance functions between multisets of vectors and investigate whether the three commonly-employed aggregation functions (\ie sum, mean, max), along with an attention-based function, are Lipschitz continuous with respect those functions.
We show that for multisets of arbitrary size, each aggregation function is Lipschitz continuous with respect to only a single distance function, while the attention-based function is not Lipschitz continuous with respect to any of them.
On the other hand, if all multisets have equal cardinalities, each aggregation function is Lipschitz continuous with respect to other distance functions as well.
Our results are summarized in Table~\ref{tab:summary_results}.
We also study the Lipschitz constant of neural networks for sets which employ the aforementioned aggregation functions.
We find that for multisets of arbitrary size, the models that employ the mean and max operators are Lipschitz continuous with respect to a single metric and we provide upper bounds on their Lipschitz constants.
Strikingly, we also find that there exist models that employ the sum operator which are not Lipschitz continuous with respect to any of the considered functions.
We also relate the Lipschitz constant of those networks to their generalization performance under distribution shifts.
We verify our theoretical results empirically on real-world datasets from different domains.

\begin{table}[t]
    \caption{Summary of main results. Lipschitz constants of the different aggregation functions with respect to the three considered distance functions. $d$ denotes the dimension of the vectors. The ``--'' symbol denotes that the function is not Lipschitz continuous with respect to a given distance function. $\dagger$: all multisets have equal cardinalities ($= M$).}
    \label{tab:summary_results}
    \begin{center}
        \scriptsize
        \begin{sc}
            \begin{tabular}{|l|c|c|c|}
            \toprule
            & Sum & Mean & Max \\
            \midrule
            EMD & $^{\dagger}L = M$ & $L = 1$ & $^{\dagger}L = M$ \\
            Hausdorff Dist.& -- & -- & $L = \sqrt{d}$ \\
            Matching Dist. & $L=1$ & $^{\dagger}L = 1/M$ & $^{\dagger}L = 1$ \\
            \bottomrule
            \end{tabular}
        \end{sc}
    \end{center}
    \vspace{-.3cm}
\end{table}

\section{Preliminaries}

\subsection{Notation}
Let $\mathbb{N}$ denote the set of natural numbers. 
Then, $[n] = \{1,\ldots,n\} \subset \mathbb{N}$ for $n \geq 1$.
Let also $\oms \cms$ denote a \textit{multiset}, \ie a generalized concept of a set that allows multiple instances for its elements.
Since a set is also a multiset, in what follows we use the term ``multiset'' to refer to both sets and multisets.
Here we focus on finite multisets whose elements are $d$-dimensional real vectors.
Let $M \in \mathbb{N} \setminus \{ 1 \}$.
We denote by $\mathcal{S}_{\leq M}(\mathbb{R}^d)$ and by $\mathcal{S}_M(\mathbb{R}^d)$ the set of all those multisets that consist of at most $M$ and of exactly $M$ elements, respectively.
We drop the subscript when it is clear from context.
The elements of a multiset do not have an inherent ordering.
Therefore, the two multisets $X = \oms \mathbf{v}_1, \mathbf{v}_2, \mathbf{v}_2 \cms$ and $Y = \oms \mathbf{v}_2, \mathbf{v}_1, \mathbf{v}_2 \cms$ are equal to each other, \ie $X = Y$.
The cardinality $|X|$ of a multiset $X$ is equal to the number of elements of $X$.
Vectors are denoted by boldface lowercase letters (\eg $\mathbf{v}$ and $\mathbf{u}$) and matrices by boldface uppercase letters (\eg $\mathbf{A}$ and $\mathbf{M}$).
Given some vector $\mathbf{v}$, we denote by $[\mathbf{v}]_i$ the $i$-th element of the vector.
Likewise, given some matrix $\mathbf{M}$, we denote by $[\mathbf{M}]_{ij}$ the element in the $i$-th row and $j$-th column of the matrix.

\subsection{Lipschitz Continuous Functions}
\begin{definition}
    Given two metric spaces $(\mathcal{X}, d_\mathcal{X})$ and $(\mathcal{Y}, d_\mathcal{Y})$, a function $f \colon \mathcal{X} \rightarrow \mathcal{Y}$ is called \textit{Lipschitz continuous} if there exists a real constant $L \geq 0$ such that, for all $x_1, x_2 \in \mathcal{X}$, we have that
    \begin{equation*}
        d_{\mathcal{Y}} \big(f(x_{1}),f(x_{2}) \big) \leq L \, d_{\mathcal{X}}(x_{1},x_{2})
    \end{equation*}
    The smallest such $L$ is called the \textit{Lipschitz constant} of $f$.
\end{definition}
In this paper, we focus on functions $f \colon \mathcal{S}(\mathbb{R}^d) \rightarrow \mathbb{R}^{d'}$ that map multisets of $d$-dimensional vectors to $d'$-dimensional vectors.
Therefore, $\mathcal{X} = \mathcal{S}(\mathbb{R}^d)$, while $\mathcal{Y} = \mathbb{R}^{d'}$.
For $d_{\mathcal{X}}$, we consider three distance functions for multisets of vectors (presented in subsection~\ref{sec:distance_funcs} below), while $d_{\mathcal{Y}}$ is induced by the $\ell_2$-norm, \ie $d_{\mathcal{Y}}\big(f(x_1), f(x_2)\big) = \| f(x_1) - f(x_2) \|_2$.

\subsection{Aggregation Functions}
As already discussed, we consider three permutation invariant aggregation functions which are commonly employed in deep learning architectures, namely the \textsc{sum}, \textsc{mean} and \textsc{max} operators.
\begin{center}
    \begin{small}
        \begin{sc}
            \begin{tabular}{wc{3.2cm}|wc{3.2cm}|c}
                \toprule
                \textsc{sum} & \textsc{mean} & \textsc{max} \\ \midrule
                $\displaystyle f_\textsc{sum}(X) = \sum_{\mathbf{v} \in X} \mathbf{v}$ & $\displaystyle f_\textsc{mean}(X) = \frac{1}{|X|}\sum_{\mathbf{v} \in X} \mathbf{v}$ & $\displaystyle \big[f_\textsc{max}(X)\big]_i = \max \Big( \big\{ [\mathbf{v}]_i \colon \mathbf{v} \in X \big\} \Big), \quad \forall i \in [d]$ \\
                \bottomrule
            \end{tabular}
        \end{sc}
    \end{small}
\end{center}
The \textsc{sum} aggregator can represent a strictly larger class of functions over sets than the \textsc{mean} and \textsc{max} aggregators.
If the elements of the input sets come from a countable set $\mathcal{X}$, then for an appropriate $f \colon \mathcal{X} \rightarrow \mathbb{R}$, the function defined as $g(\{x_1, \ldots, x_n \} ) = \sum_{i=1}^n f(x_i)$ maps the input sets injectively to $\mathbb{R}$~\citep{zaheer2017deep}.
Notably, it is also shown that injectivity is sufficient for approximation.
On the other hand, the \textsc{mean} and \textsc{max} functions are not injective set functions.
These results have been also generalized to multisets~\citep{xu2019powerful}.
Note, however, that it has been empirically observed that \textsc{mean} and \textsc{max} aggregators can outperform the \textsc{sum} aggregator in certain applications~\citep{zaheer2017deep,cappart2023combinatorial}.

\subsection{Distance Functions for Unordered Multisets}~\label{sec:distance_funcs}
We next present the three considered functions for comparing multisets to each other.
Let $X = \oms \mathbf{v}_1, \ldots, \mathbf{v}_m\cms$ and $Y = \oms \mathbf{u}_1, \ldots, \mathbf{u}_n\cms$ denote two multisets of vectors, \ie $X, Y \in \mathcal{S}(\mathbb{R}^d)$.
The three functions require to compute the distance between each element of the first multiset and every element of the second multiset.
We use the distance induced by the $\ell_2$-norm (\ie Euclidean distance) to that end.
Note also that all three functions can be computed in polynomial time in the number of elements of the input multisets. 

\paragraph{Earth Mover's Distance.}
The \textit{earth mover's distance} (EMD) is a measure of dissimilarity between two distributions~\citep{rubner2000earth}.
Roughly speaking, given two distributions, the output of EMD is proportional to the minimum amount of work required to change one distribution into the other.
Over probability distributions, EMD is also known as the Wasserstein metric with $p=1$ ($\mathcal{W}_1$).
We use the formulation of the EMD where the total weights of the signatures are equal to each other which is known to be a metric on the space of sets of vectors~\citep{rubner2000earth} and a pseudometric on $\mathcal{S}(\mathbb{R}^d)$:
\begin{align*}
    d_\text{EMD}(X,Y) = \min_\mathbf{F} &\sum_{i=1}^m \sum_{j=1}^n [\mathbf{F}]_{ij} \, \| \mathbf{v}_i - \mathbf{u}_j\|_2 \\
    \text{subject to} \quad &[\mathbf{F}]_{ij} \geq 0, \qquad 1 \leq i \leq m, \quad 1 \leq j \leq n \\
    &\sum_{j=1}^n [\mathbf{F}]_{ij} = \frac{1}{m}, \qquad 1 \leq i \leq m \\
    &\sum_{i=1}^m [\mathbf{F}]_{ij} = \frac{1}{n}, \qquad 1 \leq j \leq n \\
\end{align*}

\paragraph{Hausdorff distance.}
The \textit{Hausdorff distance} is another measure of dissimilarity between two multisets of vectors~\citep{rockafellar1998variational}.
It represents the maximum distance of a multiset to the nearest point in the other multiset, and is defined as follows:
\begin{equation*}
    h(X, Y) = \max_{i \in [m]} \min_{j \in [n]} \| \mathbf{v}_i - \mathbf{u}_j\|_2
\end{equation*}
The above distance function is not symmetric and thus it is not a metric.
The bidirectional Hausdorff distance between $X$ and $Y$ is then defined as:
\begin{equation*}
    d_H(X, Y) = \max \big( h(X, Y), h(Y, X) \big)
\end{equation*}
The bidirectional Hausdorff distance is a metric on the space of sets of vectors and a pseudometric on $\mathcal{S}(\mathbb{R}^d)$.
Roughly speaking, its value is small if every point of either set is close to some point of the other set. 

\paragraph{Matching Distance.}
We also define a distance function for multisets of vectors, so-called \textit{matching distance}, where elements of one multiset are assigned to elements of the other.
If one of the multisets is larger than the other, some elements of the former are left unassigned.
The assignments are determined by a permutation of the elements of the larger multiset.
Let $\mathfrak{S}_n$ denote the set of all permutations of a multiset with $n$ elements.
The matching distance between $X$ and $Y$ is defined as:
\begin{equation*}
    d_M(X, Y) = 
    \begin{cases}
        M(X, Y) & \text{if } m \geq n \\
        M(Y, X) & \text{otherwise.}
    \end{cases}
\end{equation*}
\begin{equation*}
    \text{where } \qquad M(X, Y) = \min_{\pi \in \mathfrak{S}_m} \bigg[ \sum_{i=1}^{n} \| \mathbf{v}_{\pi(i)} - \mathbf{u}_i \|_2 + \sum_{i=n+1}^{m} \| \mathbf{v}_{\pi(i)} \|_2 \bigg]
\end{equation*}
and $M(Y, X)$ is defined analogously.
Variants of this distance function have been introduced in prior work~\citep{chuang2022tree,davidson2025h}.
If the elements of the input multisets do not contain the zero vector, the matching distance is a metric.
\begin{proposition}[Proof in Appendix~\ref{sec:proof_prop1}]
    The matching distance is a metric on $\mathcal{S}(\mathbb{R}^d \setminus \{ \mathbf{0}\})$ where $d \in \mathbb{N}$ and $\mathbf{0}$ is the zero vector.
    It is a pseudometric on $\mathcal{S}(\mathbb{R}^d)$.
    \label{prop:prop1}
\end{proposition}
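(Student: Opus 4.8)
The plan is to verify the four metric axioms for $d_M$ on $\mathcal{S}(\mathbb{R}^d \setminus \{\mathbf{0}\})$; non-negativity and symmetry are immediate, so the substance lies in the identity of indiscernibles and the triangle inequality. Non-negativity holds because $d_M(X,Y)$ is a minimum of sums of Euclidean norms, each $\geq 0$. Symmetry holds because the case split in the definition of $d_M$ is itself symmetric: whether $|X| \geq |Y|$ or not, $d_M(X,Y)$ and $d_M(Y,X)$ both evaluate to $M(X,Y)$ when $|X|\geq|Y|$ and to $M(Y,X)$ otherwise.

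For the identity of indiscernibles, if $X = Y$ then $|X| = |Y|$ and some permutation aligns the two tuples, so $M(X,Y) = 0$ and hence $d_M(X,Y) = 0$. Conversely, assume $d_M(X,Y) = 0$ and, without loss of generality, $m = |X| \geq n = |Y|$, so some $\pi \in \mathfrak{S}_m$ satisfies $\sum_{i=1}^{n} || \mathbf{v}_{\pi(i)} - \mathbf{u}_i ||_2 + \sum_{i=n+1}^{m} || \mathbf{v}_{\pi(i)} ||_2 = 0$. Every summand is non-negative, hence every summand is zero; in particular $|| \mathbf{v}_{\pi(i)} ||_2 = 0$ for $n < i \leq m$ would place $\mathbf{0}$ among the elements of $X$, which is impossible on $\mathcal{S}(\mathbb{R}^d \setminus \{\mathbf{0}\})$. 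Therefore $m = n$ and $\mathbf{v}_{\pi(i)} = \mathbf{u}_i$ for all $i$, i.e.\ $X = Y$ as multisets. This is the only step that uses the exclusion of the zero vector.

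The triangle inequality is the main obstacle, and I would handle it by reducing to the fixed-cardinality assignment distance. For multisets $A,B$ with $|A| = |B| = k$, let $\rho(A,B) = \min_{\pi \in \mathfrak{S}_k} \sum_{i=1}^{k} || \mathbf{a}_{\pi(i)} - \mathbf{b}_i ||_2$ be the minimum-weight perfect-matching distance under the Euclidean ground metric. First I would record that $\rho$ obeys the triangle inequality on $\mathcal{S}_k(\mathbb{R}^d)$: if $\pi$ and $\sigma$ are optimal assignments for $(A,B)$ and $(B,C)$, then the composition sending $i \mapsto \pi(\sigma(i))$ is feasible for $(A,C)$, and term by term $|| \mathbf{a}_{\pi(\sigma(i))} - \mathbf{c}_i ||_2 \leq || \mathbf{a}_{\pi(\sigma(i))} - \mathbf{b}_{\sigma(i)} ||_2 + || \mathbf{b}_{\sigma(i)} - \mathbf{c}_i ||_2$; summing and re-indexing by $j = \sigma(i)$ gives $\rho(A,C) \leq \rho(A,B) + \rho(B,C)$. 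Next comes the key bridging lemma: for any $A,B$ and any $N \geq \max(|A|,|B|)$, if $A^{(N)}$ and $B^{(N)}$ are obtained from $A$ and $B$ by adjoining copies of $\mathbf{0}$ up to cardinality $N$, then $d_M(A,B) = \rho(A^{(N)}, B^{(N)})$. The inequality ``$\leq$'' is easy: pad an optimal matching for the $M(\cdot,\cdot)$ problem by pairing the surplus zero vectors with one another. For ``$\geq$'', take an optimal assignment for $\rho(A^{(N)}, B^{(N)})$ and apply the rewiring move: whenever a padded zero on one side is matched to a nonzero $\mathbf{u}$ while a padded zero on the other side is matched to a nonzero $\mathbf{v}$, re-pair the two zeros together (cost $0$) and $\mathbf{u}$ with $\mathbf{v}$; since $|| \mathbf{v} - \mathbf{u} ||_2 \leq || \mathbf{v} ||_2 + || \mathbf{u} ||_2$ the total cost does not increase, and the number of zero-to-nonzero pairs strictly decreases, so the process terminates at an optimal assignment in a normal form in which padded zeros are matched to padded zeros as far as the cardinalities allow, and whose cost is exactly the quantity defining $M(A,B)$; a pigeonhole count using $\max(|A|,|B|) \leq N$ shows the move remains available until the normal form is reached. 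Finally, taking $N = \max(|X|,|Y|,|Z|)$ and writing $\tilde X, \tilde Y, \tilde Z$ for the corresponding zero-paddings, the bridging lemma together with the triangle inequality for $\rho$ yields $d_M(X,Z) = \rho(\tilde X, \tilde Z) \leq \rho(\tilde X, \tilde Y) + \rho(\tilde Y, \tilde Z) = d_M(X,Y) + d_M(Y,Z)$.

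I expect the bridging lemma to be the crux: the rewiring/normal-form argument must be made precise for multisets of unequal cardinality, including the bookkeeping that guarantees the move stays available and that the resulting cost coincides with the definition of $M(\cdot,\cdot)$. Everything else is routine --- non-negativity and symmetry by inspection, the identity of indiscernibles a one-line consequence of the nonzero hypothesis, and the triangle inequality for $\rho$ the classical permutation-composition estimate.
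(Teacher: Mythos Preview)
Your proof is correct, but it takes a genuinely different route from the paper's. The paper proves the triangle inequality by a direct case analysis on the relative cardinalities of $X$, $Y$, $Z$: it fixes optimal permutations $\pi_1^*$ for $d_M(X,Z)$ and $\pi_2^*$ for $d_M(Z,Y)$, and in each of three cases (e.g.\ $|X|\geq|Y|\geq|Z|$, $|X|\geq|Z|\geq|Y|$, $|Z|\geq|Y|\geq|X|$) combines the two matchings termwise via the triangle inequality for $\|\cdot\|_2$ to exhibit a feasible matching for $d_M(X,Y)$. Your approach instead factors through the equal-cardinality assignment metric $\rho$ by zero-padding all three multisets to a common size $N$, proves the triangle inequality for $\rho$ once by permutation composition, and transfers the result back via the bridging lemma $d_M(A,B)=\rho(A^{(N)},B^{(N)})$. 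The paper's argument is self-contained but requires careful bookkeeping in each case (its Case~3, $|Z|\geq|Y|\geq|X|$, is particularly delicate, splitting into four subtypes of matched pairs). Your padding trick trades that casework for a single rewiring/normal-form lemma, yielding a uniform proof that also makes transparent why the exclusion of $\mathbf{0}$ matters only for the identity of indiscernibles and not for the triangle inequality; the same argument would work verbatim over any ground metric, not just the Euclidean one.
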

For multisets of the same size, the matching distance is related to EMD.
\begin{proposition}[Proof in Appendix~\ref{sec:proof_prop2}]
    Let $X, Y \in \mathcal{S}(\mathbb{R}^d)$ denote two multisets of the same size, \ie $|X| = |Y| = M$.
    Then, we have that $d_M(X, Y) = M d_\text{EMD}(X,Y)$.
    \label{prop:prop2}
\end{proposition}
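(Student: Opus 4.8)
The plan is to reduce the EMD linear program to an optimization over permutations and then recognize the resulting quantity as the matching distance. Since $|X| = |Y| = M$, we have $m = n = M$, so the trailing sum over $i = n+1, \ldots, m$ in $M(X,Y)$ is empty and the matching distance collapses to $d_M(X,Y) = M(X,Y) = \min_{\pi \in \mathfrak{S}_M} \sum_{i=1}^M \|\mathbf{v}_{\pi(i)} - \mathbf{u}_i\|_2$. The whole proof is then about showing the EMD value equals $\frac{1}{M}$ times this.

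First I would rescale the transport plan. Setting $\mathbf{P} = M\mathbf{F}$, the EMD constraints $[\mathbf{F}]_{ij} \geq 0$, $\sum_j [\mathbf{F}]_{ij} = 1/M$, $\sum_i [\mathbf{F}]_{ij} = 1/M$ become $[\mathbf{P}]_{ij} \geq 0$, $\sum_j [\mathbf{P}]_{ij} = 1$, $\sum_i [\mathbf{P}]_{ij} = 1$, i.e. $\mathbf{P}$ ranges exactly over the Birkhoff polytope $\mathcal{B}_M$ of doubly stochastic $M \times M$ matrices, and the objective becomes $\frac{1}{M} \sum_{i,j} [\mathbf{P}]_{ij} \, \|\mathbf{v}_i - \mathbf{u}_j\|_2$. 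Hence $d_\text{EMD}(X,Y) = \frac{1}{M} \min_{\mathbf{P} \in \mathcal{B}_M} \sum_{i,j} [\mathbf{P}]_{ij} \, \|\mathbf{v}_i - \mathbf{u}_j\|_2$.

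Then comes the key step: the objective is linear in $\mathbf{P}$ and $\mathcal{B}_M$ is a compact polytope, so the minimum is attained at an extreme point, and by the Birkhoff--von Neumann theorem the extreme points of $\mathcal{B}_M$ are precisely the permutation matrices. For the permutation matrix $\mathbf{P}_\pi$ of $\pi \in \mathfrak{S}_M$ we have $\sum_{i,j} [\mathbf{P}_\pi]_{ij} \, \|\mathbf{v}_i - \mathbf{u}_j\|_2 = \sum_{i=1}^M \|\mathbf{v}_{\pi(i)} - \mathbf{u}_i\|_2$, so $d_\text{EMD}(X,Y) = \frac{1}{M} \min_{\pi \in \mathfrak{S}_M} \sum_{i=1}^M \|\mathbf{v}_{\pi(i)} - \mathbf{u}_i\|_2 = \frac{1}{M} d_M(X,Y)$, which rearranges to the claim. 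If I wanted to avoid quoting Birkhoff--von Neumann outright I would split into two inequalities: ``$\leq$'' is immediate because $\frac{1}{M}\mathbf{P}_\pi$ is a feasible $\mathbf{F}$ for any $\pi$, and ``$\geq$'' is exactly where the extreme-point argument is needed.

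The main obstacle is really only making that extreme-point reduction rigorous; everything else is bookkeeping about the constraint sets and indices. One small point to note for cleanliness: $X$ and $Y$ are multisets, so some $\mathbf{v}_i$ or $\mathbf{u}_j$ may coincide, but since both the EMD program and the matching-distance minimization are phrased over \emph{indices} rather than values this causes no difficulty, and the hypothesis that the vectors are nonzero is not used here (it was only needed in Proposition~\ref{prop:prop1} for $d_M$ to be a genuine metric).
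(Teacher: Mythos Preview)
Your proposal is correct and follows essentially the same approach as the paper: rescale the transport plan to land in the Birkhoff polytope, invoke Birkhoff--von Neumann together with the fact that a linear objective over a polytope attains its optimum at a vertex, and identify the resulting minimum over permutation matrices with the matching distance. Your write-up is in fact slightly cleaner about the rescaling step and about the irrelevance of the nonzero hypothesis here.
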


\section{Lipschitz Continuity of Set Aggregation Functions and Neural Networks}

\subsection{Lipschitz Continuity of Aggregation Functions}\label{sec:lipschitz_aggr}
We first investigate whether the three aggregation functions which are key components in several neural network architectures are Lipschitz continuous with respect to the three considered distance functions for unordered multisets.
\begin{theorem}[Proof in Appendix~\ref{sec:proof_thm1}]
    \leavevmode
    \begin{enumerate}[topsep=0pt, partopsep=0pt, parsep=0.1cm, leftmargin=0.8cm]
        \itemsep0em 
        \item The \textsc{mean} function defined on $\mathcal{S}_{\leq M}(\mathbb{R}^d)$ is Lipschitz continuous with respect to EMD and its Lipschitz constant is $L=1$, but is not Lipschitz continuous with respect to the Hausdorff distance and with respect to the matching distance.
        \item The \textsc{sum} function defined on $\mathcal{S}_{\leq M}(\mathbb{R}^d)$ is Lipschitz continuous with respect to the matching distance and its Lipschitz constant is $L=1$, but is not Lipschitz continuous with respect to EMD and with respect to the Hausdorff distance.
        \item The \textsc{max} function defined on $\mathcal{S}_{\leq M}(\mathbb{R}^d)$ is Lipschitz continuous with respect to the Hausdorff distance and its Lipschitz constant is $L=\sqrt{d}$, but is not Lipschitz continuous with respect to EMD and with respect to the matching distance.
    \end{enumerate}
    \label{thm:thm1}
\end{theorem}
The above theoretical result suggests that there is some correspondence between the three aggregation functions and the three distance functions for unordered multisets.
In fact, each aggregation function seems to be closely related to a single metric.
We also observe that while the Lipschitz constants of the \textsc{sum} and \textsc{mean} functions with respect to the matching distance and to EMD, respectively, is constant (equal to $1$), the Lipschitz constant of the \textsc{max} function with respect to the Hausdorff distance depends on the dimension $d$ of the vectors (which is typically larger than $1$).
A direct consequence of the above Theorem is that no two of the considered distance functions are bi-Lipschitz equivalent.

As discussed above, the \textsc{sum} function is theoretically more expressive than the rest of the functions.
However, in practice it has been observed that in certain tasks \textsc{mean} and \textsc{max} aggregators lead to higher levels of performance~\citep{zaheer2017deep,cappart2023combinatorial}.
While this discrepancy between theory and empirical evidence may be attributed to the training procedure rather than to expressivity, it suggests that there is no single aggregation function that
provides a superior performance under all possible circumstances and motivates the study of all three of them.

Our previous result showed that each aggregation function is Lipschitz continuous only with respect to a single distance function for multisets.
It turns out that if the multisets have fixed size, then the aggregation functions are Lipschitz continuous also with respect to other functions. 
This is not surprising given Proposition~\ref{prop:prop2}.
\begin{lemma}[Proof in Appendix~\ref{sec:proof_lem1}]
    \leavevmode
    \begin{enumerate}[topsep=0pt, partopsep=0pt, parsep=0.1cm, leftmargin=0.8cm]
        \itemsep0em 
        \item The \textsc{mean} function defined on $\mathcal{S}_M(\mathbb{R}^d)$ is Lipschitz continuous with respect to the matching distance and its Lipschitz constant is $L=\frac{1}{M}$, but is not Lipschitz continuous with respect to the Hausdorff distance.
        \item The \textsc{sum} function defined on $\mathcal{S}_M(\mathbb{R}^d)$ is Lipschitz continuous with respect to EMD and its Lipschitz constant is $L=M$, but is not Lipschitz continuous with respect to the Hausdorff distance.
        \item The \textsc{max} function defined on $\mathcal{S}_M(\mathbb{R}^d)$ is Lipschitz continuous with respect to EMD and its Lipschitz constant is $L=M$, and it is also Lipschitz continuous with respect to the matching distance and its Lipschitz constant is $L=1$.
    \end{enumerate}
    \label{lem:lem1}
\end{lemma}

According to the above Lemma, the \textsc{max} function is Lipschitz continuous with respect to all three distance functions when all multisets have the same cardinality.
This suggests that in such a setting, if any of the three distance functions is insensitive to some perturbation applied to the input data, this perturbation will not result in a large variation in the output of the \textsc{max} function. 

In addition to the standard aggregation functions described above, recent work has also explored neural-based approaches for aggregating multisets of vectors. 
Here, we consider an \textit{attention mechanism}, which is commonly employed to produce a vector from a multiset of vectors, and has achieved significant success in the fields of natural language processing~\citep{yang2016hierarchical,nikolentzos2020message} and graph learning~\citep{velivckovic2018graph,brody2022attentive}.
Given a multiset $X = \oms \mathbf{v}_1, \ldots, \mathbf{v}_m \cms \in \mathcal{S}(\mathbb{R}^d)$, the attention mechanism is defined as follows:
\begin{align*}
   f_{\textsc{att}}(X) = \sum_{i=1}^m \alpha_i \mathbf{v}_i \quad \text{where} \quad \alpha_i &= \frac{\exp \big( \mathbf{q}^\top g(\mathbf{W} \, \mathbf{v}_i) \big)}{\sum_{j=1}^m \exp \big( \mathbf{q}^\top g(\mathbf{W} \, \mathbf{v}_j) \big)}
\end{align*}
where $\mathbf{W} \in \mathbb{R}^{d' \times d}$ and $\mathbf{q} \in \mathbb{R}^{d'}$ denote a trainable matrix and a trainable vector, respectively, while $g$ denotes some activation function.
The output of the mechanism is a convex combination of the multiset's elements.

We next investigate whether the attention mechanism $f_{\textsc{att}}(X)$ is Lipschitz continuous with respect to the three considered distance functions for multisets of vectors.
\begin{proposition}[Proof in Appendix~\ref{sec:proof_prop3}]
    There exist instances of $f_{\textsc{att}}(X)$ defined on $\mathcal{S}_M(\mathbb{R}^d)$ which are not Lipschitz continuous with respect to any of the three considered distance functions.
    \label{prop:prop3}
    \vspace{-.2cm}
\end{proposition}
Our result aligns with the finding of~\cite{kim2021lipschitz} who showed that the standard self-attention mechanism is not Lipschitz.
Note, however, that the definition of the considered attention mechanism differs from that of self-attention.
\cite{kim2021lipschitz} also proposed an alternative $\ell_2$ self-attention that is Lipschitz.
Incorporating $\ell_2$ attention into the definition of $f_{\textsc{att}}(X)$, unfortunately, does not make it Lipschitz (more details in Appendix~\ref{sec:att_l2}).

\subsection{Upper Bounds of Lipschitz Constants of Neural Networks for Sets}\label{sec:lipschitz_nn}
Neural networks that are designed for multisets typically consist of a series of fully-connected layers (\ie a multi-layer perceptron (MLP)) followed by an aggregation function which is then potentially followed by further fully-connected layers.
Exact computation of the Lipschitz constant of MLPs is NP-hard~\citep{virmaux2018lipschitz}.
However, as already discussed, there exist several approximation algorithms which can compute tight upper bounds for MLPs~\citep{virmaux2018lipschitz,fazlyab2019efficient,combettes2020lipschitz}.
An MLP that consists of $K$ layers is actually a function $f_\text{MLP} \colon \mathbb{R}^d \rightarrow \mathbb{R}^{d'}$ defined as:
$f_\text{MLP}(\mathbf{v}) = T_K \circ \rho_{K-1} \circ \ldots \circ \rho_1 \circ T_1(\mathbf{v})$
where $T_i \colon \mathbf{v} \mapsto \mathbf{W}_i \mathbf{v} + \mathbf{b}_i$ is an affine function and $\rho_i$ is a non-linear activation function for all $i \in [K]$.
Let $\text{Lip}(f_\text{MLP})$ denote the Lipschitz constant of the MLP.
Note that the Lipschitz constant depends on the choice of norm, and here we assume $\ell_2$-norms for the domain and codomain of $f_\text{MLP}$.

Given a multiset of vectors $X = \oms \mathbf{v}_1, \ldots, \mathbf{v}_m\cms$, let $\textsc{NN}_g$ denote a neural network model which computes its output as $
    \textsc{NN}_g(X) = f_{\text{MLP}_2} \Big( g\big( \bigoms f_{\text{MLP}_1}(\mathbf{v}_1), \ldots, f_{\text{MLP}_1}(\mathbf{v}_m) \bigcms \big) \Big)
$
where $g$ denotes the employed aggregation function (\ie \textsc{mean}, \textsc{sum} or \textsc{max}).

Here we investigate whether $\textsc{NN}_g$ is Lipschitz continuous with respect to the three considered distance functions for multisets of vectors.
In fact, the next Theorem utilizes the Lipschitz constants for \textsc{mean} and \textsc{max} from Theorem~\ref{thm:thm1} to upper bound the Lipschitz constants of those neural networks.
\begin{theorem}[Proof in Appendix~\ref{sec:proof_thm2}]
    \leavevmode
    \begin{enumerate}[topsep=0pt, partopsep=0pt, parsep=0.1cm, leftmargin=0.8cm]
        \itemsep0em 
        \item $\textsc{NN}_\textsc{mean}$ defined on $\mathcal{S}_{\leq M}(\mathbb{R}^d)$ is Lipschitz continuous with respect to EMD and its Lipschitz constant is upper bounded by $\text{Lip}(f_{\text{MLP}_2})  \cdot \text{Lip}(f_{\text{MLP}_1})$.
        \item There exist instances of $\textsc{NN}_\textsc{sum}$ defined on $\mathcal{S}_{\leq M}(\mathbb{R}^d)$ which are not Lipschitz continuous with respect to the matching distance.
        \item $\textsc{NN}_\textsc{max}$ defined on $\mathcal{S}_{\leq M}(\mathbb{R}^d)$ is Lipschitz continuous with respect to the Hausdorff distance and its Lipschitz constant is upper bounded by $\sqrt{d} \cdot \text{Lip}(f_{\text{MLP}_2}) \cdot \text{Lip}(f_{\text{MLP}_1})$.
    \end{enumerate}
    \label{thm:thm2}
\end{theorem}
The above result suggests that if the Lipschitz constants of the MLPs are small, then the Lipschitz constant of the $\textsc{NN}_\textsc{mean}$ and $\textsc{NN}_\textsc{max}$ models with respect to EMD and Hausdorff distance, respectively, will also be small.
Therefore, if proper weights are learned (or a method is employed that restricts the Lipschitz constant of the MLPs), we can obtain models stable under perturbations of the input multisets with respect to EMD or Hausdorff distance.
On the other hand, $\textsc{NN}_\textsc{sum}$ is not necessarily Lipschitz continuous with respect to the matching distance.
This is due to the \textit{bias parameters} of $f_{\text{MLP}_1}$.
Interestingly, if we omit the bias terms of that layer, $\textsc{NN}_\textsc{sum}$ also becomes Lipschitz continuous with respect to the matching distance.

If the input multisets have fixed size, then we can derive upper bounds for the Lipschitz constant of $\textsc{NN}_\textsc{sum}$, but also of $\textsc{NN}_\textsc{mean}$ and $\textsc{NN}_\textsc{max}$ with respect to other metrics.
\begin{lemma}[Proof in Appendix~\ref{sec:proof_lem2}]
    \leavevmode
    \begin{enumerate}[topsep=0pt, partopsep=0pt, parsep=0.1cm, leftmargin=0.8cm]
        \itemsep0em 
        \item $\textsc{NN}_\textsc{mean}$ defined on $\mathcal{S}_M(\mathbb{R}^d)$ is Lipschitz continuous with respect to the matching distance and its Lipschitz constant is upper bounded by $\frac{1}{M} \cdot \text{Lip}(f_{\text{MLP}_2}) \cdot \text{Lip}(f_{\text{MLP}_1})$.
        \item $\textsc{NN}_\textsc{sum}$ defined on $\mathcal{S}_M(\mathbb{R}^d)$ is Lipschitz continuous with respect to the matching distance and its Lipschitz constant is upper bounded by $\text{Lip}(f_{\text{MLP}_2}) \cdot \text{Lip}(f_{\text{MLP}_1})$, and is also Lipschitz continuous with respect to EMD and its Lipschitz constant is upper bounded by $M \cdot \text{Lip}(f_{\text{MLP}_2}) \cdot \text{Lip}(f_{\text{MLP}_1})$.
        \item $\textsc{NN}_\textsc{max}$ defined on $\mathcal{S}_M(\mathbb{R}^d)$ is Lipschitz continuous with respect to EMD and its Lipschitz constant is upper bounded by $M \cdot \text{Lip}(f_{\text{MLP}_2}) \cdot \text{Lip}(f_{\text{MLP}_1})$, and it is also Lipschitz continuous with respect to the matching distance and its Lipschitz constant is upper bounded by $\text{Lip}(f_{\text{MLP}_2}) \cdot \text{Lip}(f_{\text{MLP}_1})$.
    \end{enumerate}
    \label{lem:lem2}
\end{lemma}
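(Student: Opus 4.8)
The plan is to view $\textsc{NN}_g$ as a composition of three maps and bound the Lipschitz constant of each factor separately. Write $\textsc{NN}_g = f_{\text{MLP}_2} \circ g \circ \Phi$, where $\Phi$ applies the first MLP elementwise, $\Phi\big(\oms \mathbf{v}_1, \ldots, \mathbf{v}_M \cms\big) = \bigoms f_{\text{MLP}_1}(\mathbf{v}_1), \ldots, f_{\text{MLP}_1}(\mathbf{v}_M) \bigcms$. Since an elementwise map preserves multiplicities, $\Phi$ sends a size-$M$ multiset to a size-$M$ multiset, so $\Phi(\mathcal{X})$ is again a finite family of equal-cardinality multisets and Lemma~\ref{lem:lem1} applies to the restriction of $g$ to $\Phi(\mathcal{X})$; moreover $f_{\text{MLP}_2}$ is Lipschitz with constant $\text{Lip}(f_{\text{MLP}_2})$ by definition. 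It then remains to bound the Lipschitz constant of $\Phi$ with respect to $d_\text{EMD}$ and $d_M$, and to chain the three inequalities.

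The key step is to show that $\Phi$ is Lipschitz from $(\mathcal{X}, d)$ to $(\Phi(\mathcal{X}), d)$ with constant $\text{Lip}(f_{\text{MLP}_1})$ for $d \in \{ d_\text{EMD}, d_M\}$. For two multisets $X, Y$ of the same size $M$, both $d_\text{EMD}(X,Y)$ and $d_M(X,Y)$ are obtained by minimising a sum of pairwise Euclidean distances $\|\mathbf{v}_i - \mathbf{u}_j\|_2$ over a feasible set that does not depend on the actual vectors: the polytope of flows $\mathbf{F}$ with uniform marginals $1/M$ for EMD, and the symmetric group $\mathfrak{S}_M$ for the matching distance, where crucially for equal cardinalities the leftover terms $\|\mathbf{v}_{\pi(i)}\|_2$ do not appear and $d_M(X,Y) = \min_{\pi \in \mathfrak{S}_M} \sum_{i=1}^M \|\mathbf{v}_{\pi(i)} - \mathbf{u}_i\|_2$. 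Hence, taking an optimiser $\mathbf{F}^\star$ (resp.\ $\pi^\star$) for $(X,Y)$ and plugging it into the objective for $(\Phi(X), \Phi(Y))$, and using $\|f_{\text{MLP}_1}(\mathbf{v}) - f_{\text{MLP}_1}(\mathbf{u})\|_2 \le \text{Lip}(f_{\text{MLP}_1}) \, \|\mathbf{v} - \mathbf{u}\|_2$ termwise, yields $d(\Phi(X), \Phi(Y)) \le \text{Lip}(f_{\text{MLP}_1}) \, d(X, Y)$. Chaining then gives, for instance for $\textsc{NN}_\textsc{mean}$ with respect to $d_M$, the estimate $\|\textsc{NN}_\textsc{mean}(X) - \textsc{NN}_\textsc{mean}(Y)\|_2 \le \text{Lip}(f_{\text{MLP}_2}) \, \|f_\text{mean}(\Phi(X)) - f_\text{mean}(\Phi(Y))\|_2 \le \tfrac{1}{M} \text{Lip}(f_{\text{MLP}_2}) \, d_M(\Phi(X), \Phi(Y)) \le \tfrac{1}{M} \text{Lip}(f_{\text{MLP}_2}) \text{Lip}(f_{\text{MLP}_1}) \, d_M(X, Y)$, where the middle step is the $L = 1/M$ bound for \textsc{mean} from Lemma~\ref{lem:lem1}. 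The remaining four bounds are obtained verbatim, substituting the relevant Lipschitz constant of $g$: for $\textsc{NN}_\textsc{sum}$ with respect to $d_M$ we use $L = 1$ (Theorem~\ref{thm:thm1}, valid a fortiori here), for $\textsc{NN}_\textsc{sum}$ and $\textsc{NN}_\textsc{max}$ with respect to EMD we use $L = M$ (Lemma~\ref{lem:lem1}), and for $\textsc{NN}_\textsc{max}$ with respect to $d_M$ we use $L = 1$ (Lemma~\ref{lem:lem1}).

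The only delicate point is the matching-distance bookkeeping inside the key step: one must check that on $\mathcal{S}_M$ the matching objective genuinely reduces to a pure assignment cost with no norm-of-a-single-vector terms, so that the ``plug in $\pi^\star$'' argument is legitimate and no zero-vector subtlety arises; and one should note that $\Phi(\mathcal{X})$ need not avoid $\mathbf{0}$, but since Lemma~\ref{lem:lem1}'s matching-distance statement enters only as a numerical inequality between fixed-size multisets (the quantity $d_M$ being well defined irrespective of the metric axioms), this is harmless — alternatively, one simply restricts to architectures with $f_{\text{MLP}_1}(\mathbf{v}) \neq \mathbf{0}$ on $\mathcal{X}$ if a genuine metric space is desired. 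The EMD case is cleaner, as the flow polytope is literally independent of the data; everything else is a one-line composition of the three Lipschitz factors.
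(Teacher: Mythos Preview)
Your proposal is correct and follows essentially the same approach as the paper: both arguments pull the optimal permutation $\pi^\star$ (or flow $\mathbf{F}^\star$) for $(X,Y)$ through $f_{\text{MLP}_1}$ termwise and then apply the aggregation bound from Lemma~\ref{lem:lem1} together with the Lipschitz constant of $f_{\text{MLP}_2}$. Your presentation is slightly more modular---you isolate ``$\Phi$ is $\text{Lip}(f_{\text{MLP}_1})$-Lipschitz with respect to $d$'' as a single reusable step and chain three factors uniformly, whereas the paper sometimes merges the $\Phi$ and $g$ steps and sometimes routes through Proposition~\ref{prop:prop2} to convert between $d_M$ and $d_\text{EMD}$ at the end---but the underlying mechanics are identical.
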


\subsection{Stability of Neural Networks for Sets under Perturbations}
The Lipschitz constant is a well-established tool for assessing the stability of neural networks to small perturbations.
Due to space limitations, we only present a single perturbation, namely element addition.
Other types of perturbations (\eg element disruption) are provided in Appendix~\ref{sec:stability}.
Theorem~\ref{thm:thm2} implies that the output variation of $\textsc{NN}_\textsc{mean}$ and $\textsc{NN}_\textsc{max}$ under perturbations of the elements of an input multiset can be bounded via the EMD and Hausdorff distance between the input and perturbed multisets, respectively.
It can be combined with the following result to determine the robustness of $\textsc{NN}_\textsc{mean}$ and $\textsc{NN}_\textsc{max}$ to the addition of a single element to a multiset.
\begin{proposition}[Proof in Appendix~\ref{sec:proof_prop4}]
    Given a multiset of vectors $X = \oms \mathbf{v}_1,\ldots, \mathbf{v}_n \cms \in \mathcal{S}_{\leq M}(\mathbb{R}^d)$, let $X' = \oms \mathbf{v}_1,\ldots, \mathbf{v}_n, \mathbf{v}_{n+1} \cms \in \mathcal{S}_{\leq M}(\mathbb{R}^d)$ be the multiset where element $\mathbf{v}_{n+1}$ has been added to $X$, where $n+1 \leq M$.
    Then,
    \begin{enumerate}[topsep=0pt, partopsep=0pt, parsep=0.1cm, leftmargin=0.8cm]
        \itemsep0em 
        \item The EMD between $X$ and $X'$ is bounded as $d_\text{EMD}(X,X') \leq \frac{1}{n(n+1)} \sum_{i=1}^n \| \mathbf{v}_i - \mathbf{v}_{n+1} \|$
        \item The Hausdorff distance between $X$ and $X'$ is equal to $d_\text{H}(X,X') = \min_{i \in [n]} \| \mathbf{v}_i - \mathbf{v}_{n+1} \|$
    \end{enumerate}
    \label{prop:prop4}
\end{proposition}

\subsection{Generalization of Neural Networks for Sets under Distribution Shifts}
Finally, we capitalize on a prior result~\citep{shen2018wasserstein}, and bound the generalization error of neural networks for sets under distribution shifts. 
Let $\mathcal{X}$ denote the set of input data and $\mathcal{Y}$ the output space.
Here we focus on binary classification tasks, \ie $\mathcal{Y} = \{0, 1\}$.
Let $\mu_S$ and $\mu_T$ denote the distribution of \textit{source} and \textit{target} instances, respectively.
In domain adaptation, a single labeling function $f \colon \mathcal{X} \rightarrow [0,1]$ is associated with both the source and target domains.
A hypothesis class $\mathcal{H}$ is a set of predictor functions, \ie $\forall h \in \mathcal{H}$, $h \colon \mathcal{X} \rightarrow \mathcal{Y}$.
To estimate the adaptability of a hypothesis $h$, \ie its generalization to the target distribution, the objective is to bound the target error (a.k.a. risk) $\epsilon_T(h) = \mathbb{E}_{x \sim \mu_T} [|h(x) - f(x)|]$ with respect to the source error $\epsilon_S(h) = \mathbb{E}_{x \sim \mu_S} [|h(x) - f(x)|]$~\citep{ben2010theory}.
\cite{shen2018wasserstein} show that if the hypothesis class is Lipschitz continuous, then the target error can be bounded by the Wasserstein distance with $p=1$ for empirical measures on the source and target domain samples.
\begin{theorem}[\citep{shen2018wasserstein}]
    For all hypotheses $h \in \mathcal{H}$, the target error is bounded as:
    \begin{equation*}
        \epsilon_T(h) \leq \epsilon_S(h) + 2 \, L \, \mathcal{W}_1(\mu_S, \mu_T) + \lambda
    \end{equation*}
    where $L$ is the Lipschitz constant of $h$ and $\lambda$ is the combined error of the ideal hypothesis $h^*$ that minimizes the combined error $\epsilon_S(h) + \epsilon_T(h)$.
    \label{thm:thm3}
\end{theorem}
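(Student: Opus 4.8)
The plan is to reproduce the classical chain of triangle inequalities underlying Wasserstein-based domain-adaptation bounds, where the Kantorovich--Rubinstein duality supplies the crucial passage to $\mathcal{W}_1$. For a distribution $\mu_D$ on $\mathcal{X}$ and predictors $h, h'$, write $\epsilon_D(h, h') = \mathbb{E}_{x \sim \mu_D}[|h(x) - h'(x)|]$, so that $\epsilon_D(h) = \epsilon_D(h, f)$. The pointwise triangle inequality $|h(x) - f(x)| \leq |h(x) - h^*(x)| + |h^*(x) - f(x)|$, integrated against $\mu_T$, gives $\epsilon_T(h) \leq \epsilon_T(h, h^*) + \epsilon_T(h^*)$. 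This isolates the target-domain disagreement $\epsilon_T(h, h^*)$ between $h$ and the ideal hypothesis, which is the quantity we must transport to the source domain.

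Next I would add and subtract the source-domain disagreement: $\epsilon_T(h, h^*) \leq \epsilon_S(h, h^*) + |\epsilon_T(h, h^*) - \epsilon_S(h, h^*)|$, and bound $\epsilon_S(h, h^*) \leq \epsilon_S(h) + \epsilon_S(h^*)$ by the same pointwise triangle inequality. Setting $\lambda := \epsilon_S(h^*) + \epsilon_T(h^*)$ (the combined error of $h^*$), the theorem reduces to the discrepancy estimate $|\epsilon_T(h, h^*) - \epsilon_S(h, h^*)| \leq 2 L \, \mathcal{W}_1(\mu_S, \mu_T)$.

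Establishing this discrepancy estimate is the main technical step. Consider $\phi(x) = |h(x) - h^*(x)|$. Since both $h$ and $h^*$ belong to the $L$-Lipschitz hypothesis class $\mathcal{H}$, the difference $h - h^*$ is $2L$-Lipschitz, and composing with the $1$-Lipschitz map $t \mapsto |t|$ shows $\phi$ is $2L$-Lipschitz. The Kantorovich--Rubinstein duality states $\mathcal{W}_1(\mu_S, \mu_T) = \sup\{\, \mathbb{E}_{\mu_S}[\psi] - \mathbb{E}_{\mu_T}[\psi] : \psi \text{ is } 1\text{-Lipschitz}\,\}$, so applying it to $\psi = \phi/(2L)$ (and to $-\psi$) yields $|\mathbb{E}_{\mu_S}[\phi] - \mathbb{E}_{\mu_T}[\phi]| \leq 2L \, \mathcal{W}_1(\mu_S, \mu_T)$, which is exactly the required bound on $|\epsilon_S(h, h^*) - \epsilon_T(h, h^*)|$. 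Chaining the displayed inequalities gives $\epsilon_T(h) \leq \epsilon_S(h) + 2L\,\mathcal{W}_1(\mu_S, \mu_T) + \lambda$. The only points to watch are the standard regularity hypotheses that make this rigorous --- $\mathcal{W}_1(\mu_S,\mu_T)$ finite (e.g.\ $\mu_S,\mu_T$ with bounded support or finite first moment) and $\mathcal{H}$ genuinely Lipschitz so that $\phi$ is admissible in the duality --- both of which are already built into the statement's premise that the hypothesis class is Lipschitz continuous.
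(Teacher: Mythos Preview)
Your argument is correct and is exactly the standard proof of this Wasserstein domain-adaptation bound: triangle-inequality decomposition through the ideal hypothesis $h^*$, followed by Kantorovich--Rubinstein duality applied to the $2L$-Lipschitz function $\phi(x)=|h(x)-h^*(x)|$. Note, however, that the paper does \emph{not} supply its own proof of this theorem --- it is quoted verbatim as a result of Shen et al.\ (the citation \texttt{shen2018wasserstein}) and used as a black box, so there is no in-paper proof to compare your proposal against. Your write-up is precisely the argument one finds in that original reference.
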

This bound can be applied to neural networks for sets that are Lipschitz continuous with respect to a given metric.
Since $\textsc{NN}_\textsc{mean}$ and $\textsc{NN}_\textsc{max}$ are Lipschitz continuous for arbitrary multisets, EMD and Hausdorff distance can serve as ground metrics for these models.
Specifically, for the two aforementioned models, the domain discrepancy $\mathcal{W}_1(\mu_S, \mu_T)$ is defined as $\mathcal{W}_1(\mu_S, \mu_T) = \inf_{\pi \in \Pi(\mu_S, \mu_T)} \int d(X,Y) d \pi(X, Y)$ where $d(X, Y)$ is EMD or the Hausdorff distance, respectively.

\begin{figure}[t]
    \centering
    \subfigure{\includegraphics[width=0.25\textwidth]{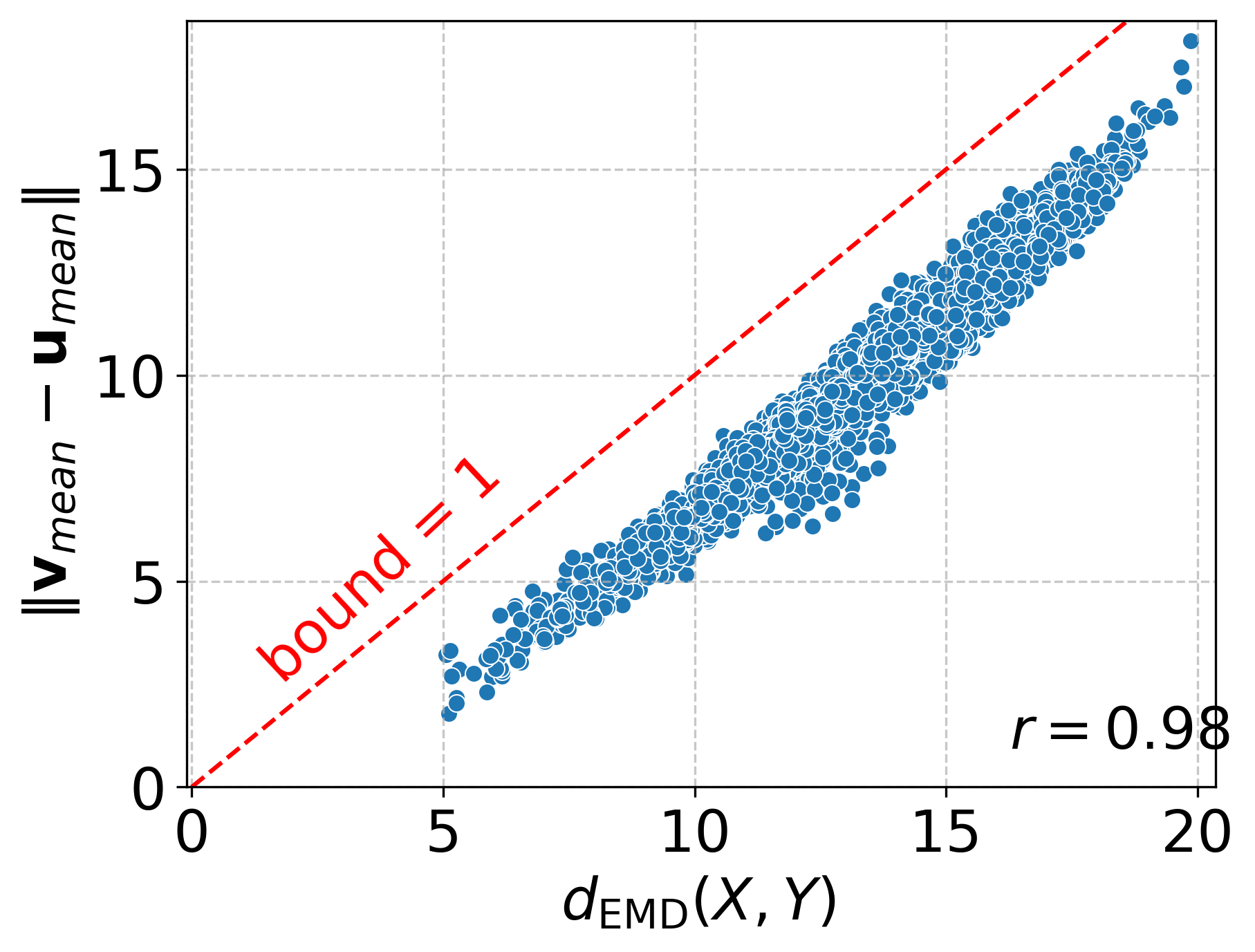}}
    \subfigure{\includegraphics[width=0.25\textwidth]{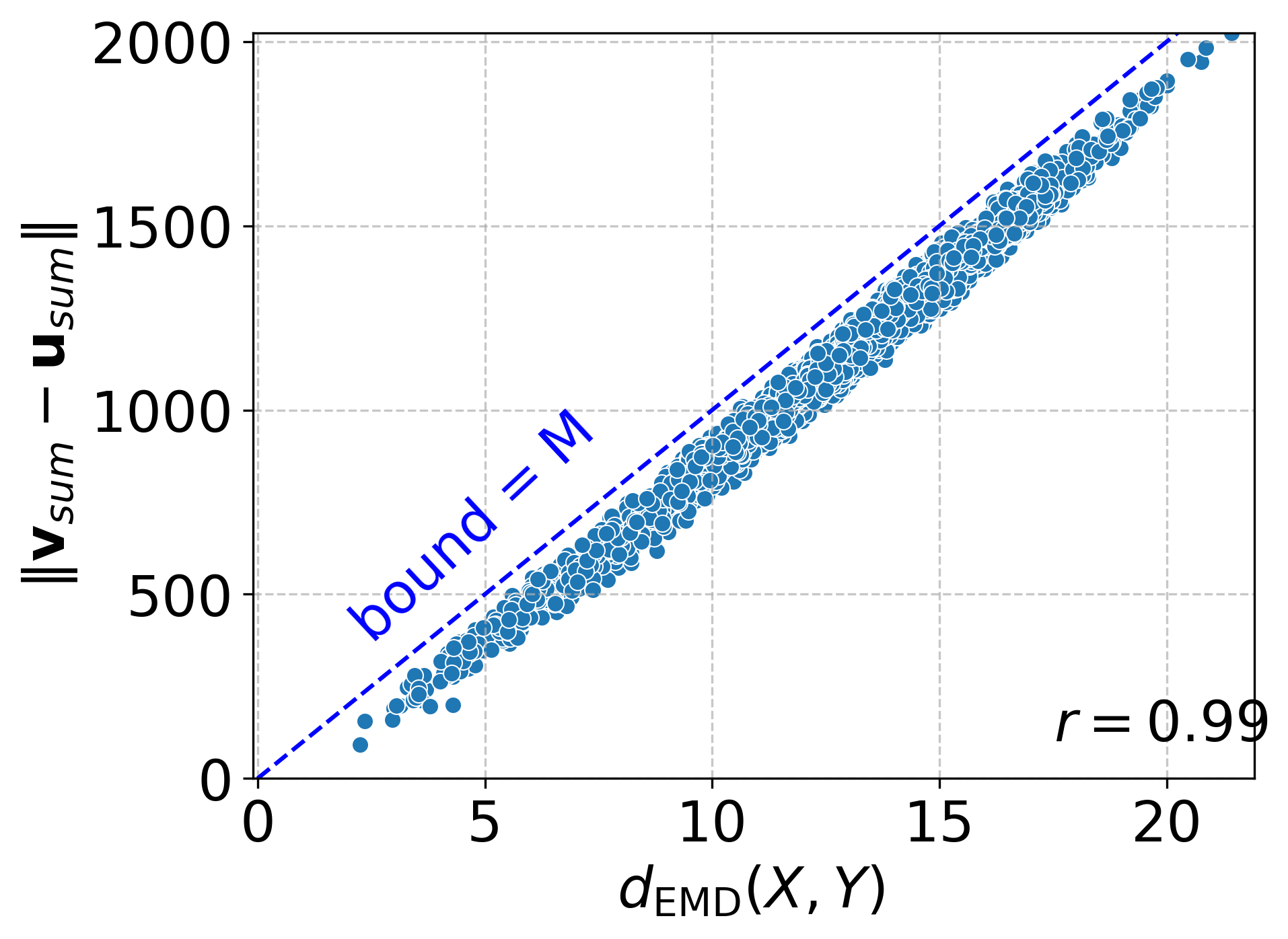}}
    \subfigure{\includegraphics[width=0.25\textwidth]{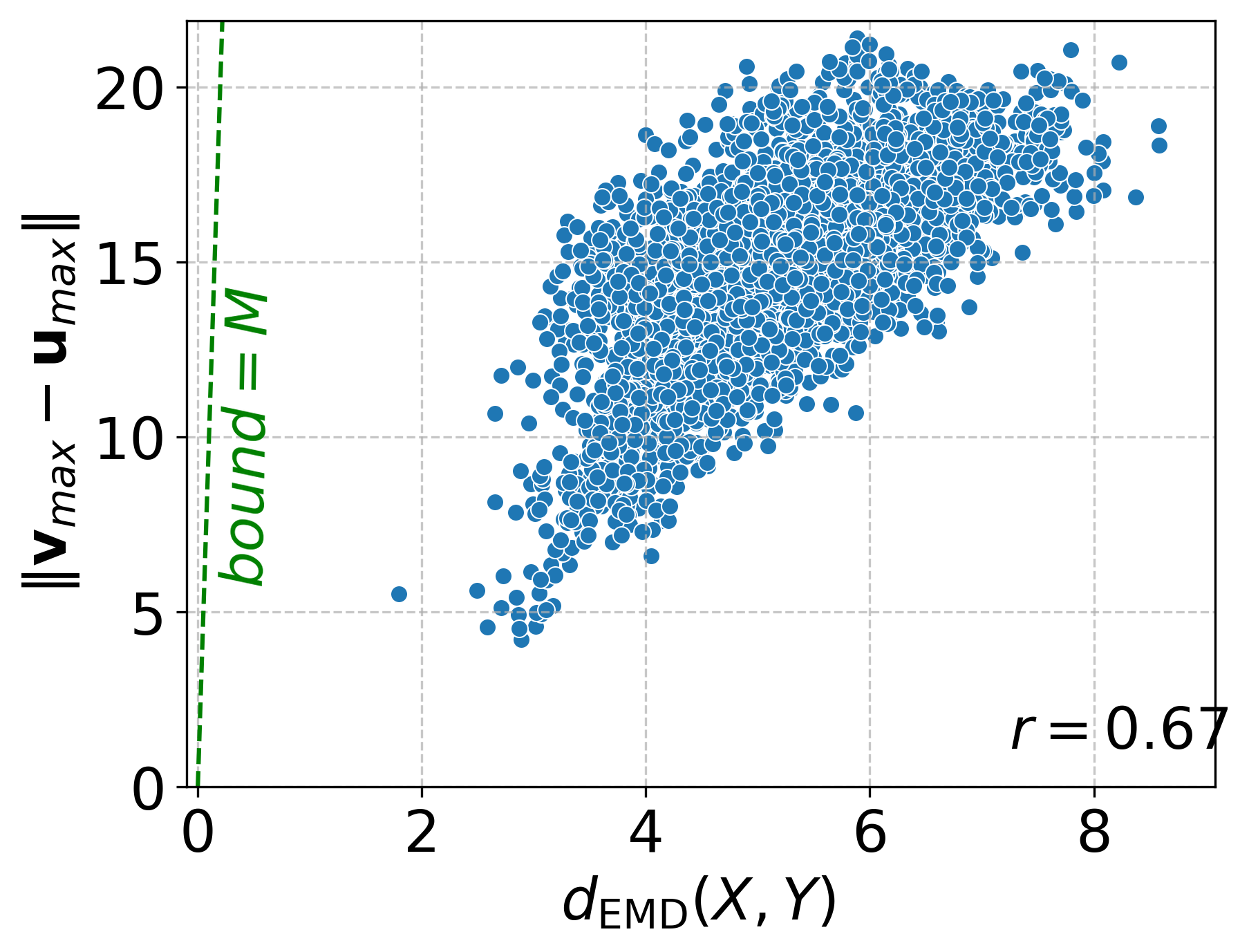}}\\
    \vspace{-.4cm}
    \subfigure{\includegraphics[width=0.25\textwidth]{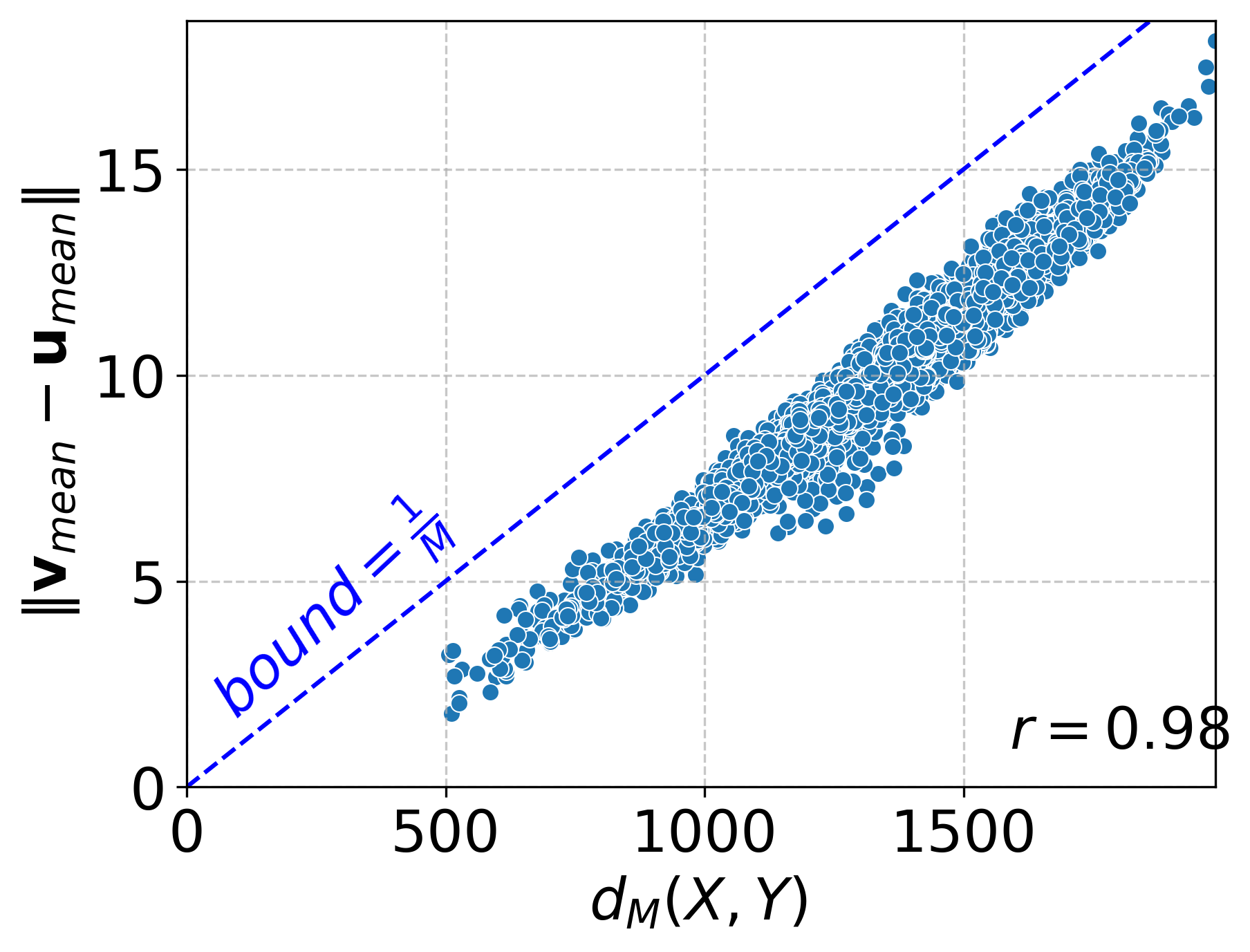}}
    \subfigure{\includegraphics[width=0.25\textwidth]{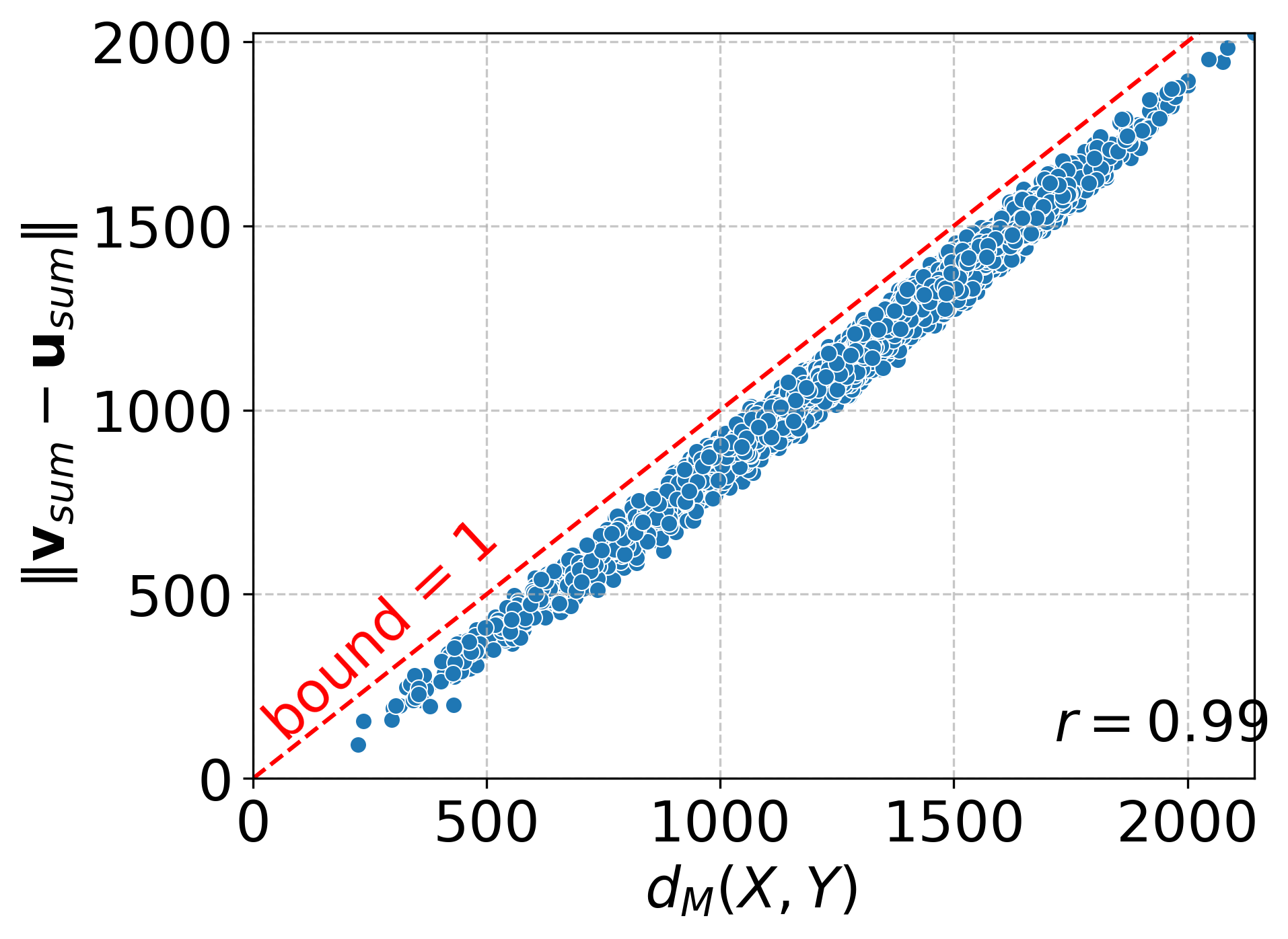}}
    \subfigure{\includegraphics[width=0.25\textwidth]{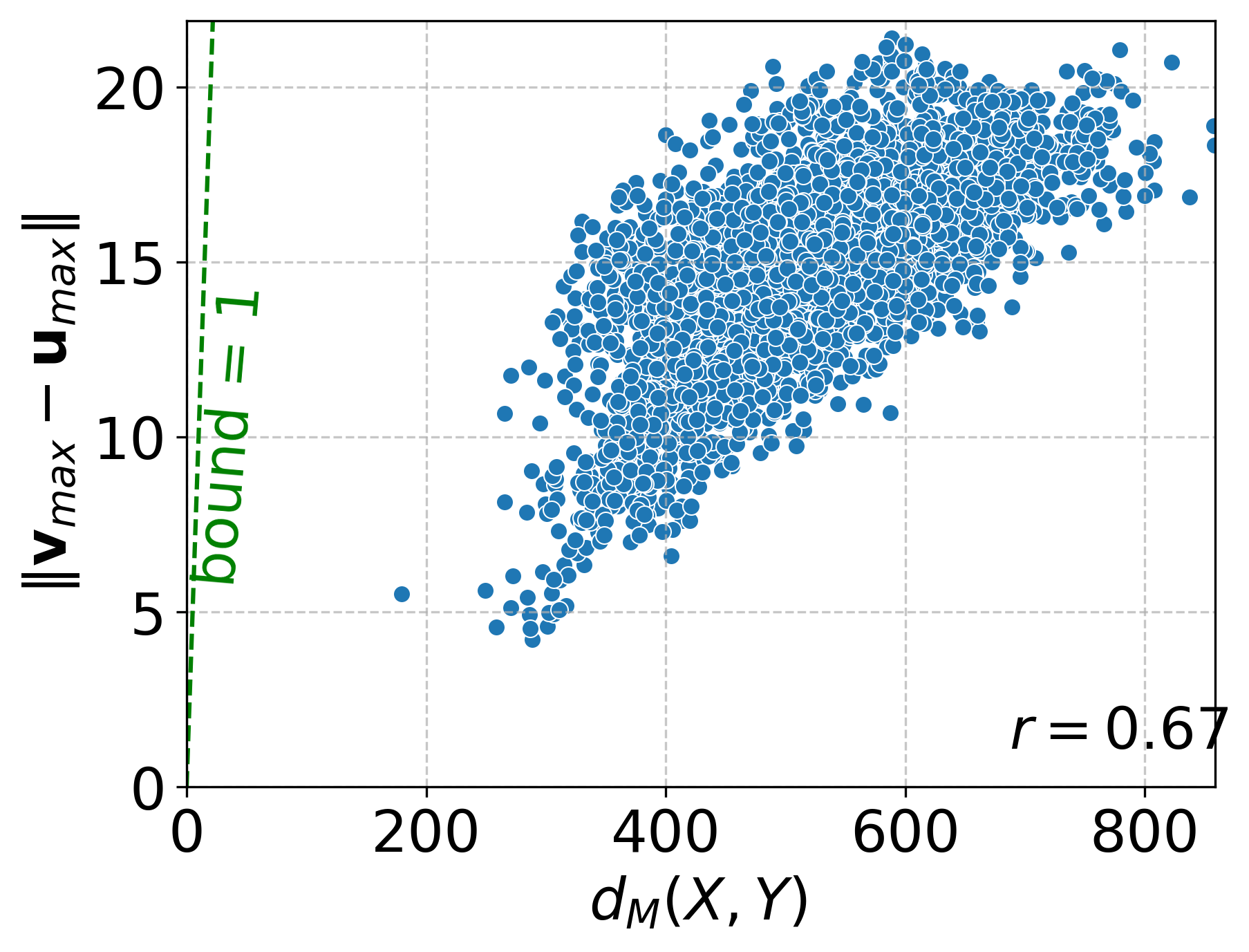}}\\
    \vspace{-.4cm}
    \subfigure{\includegraphics[width=0.25\textwidth]{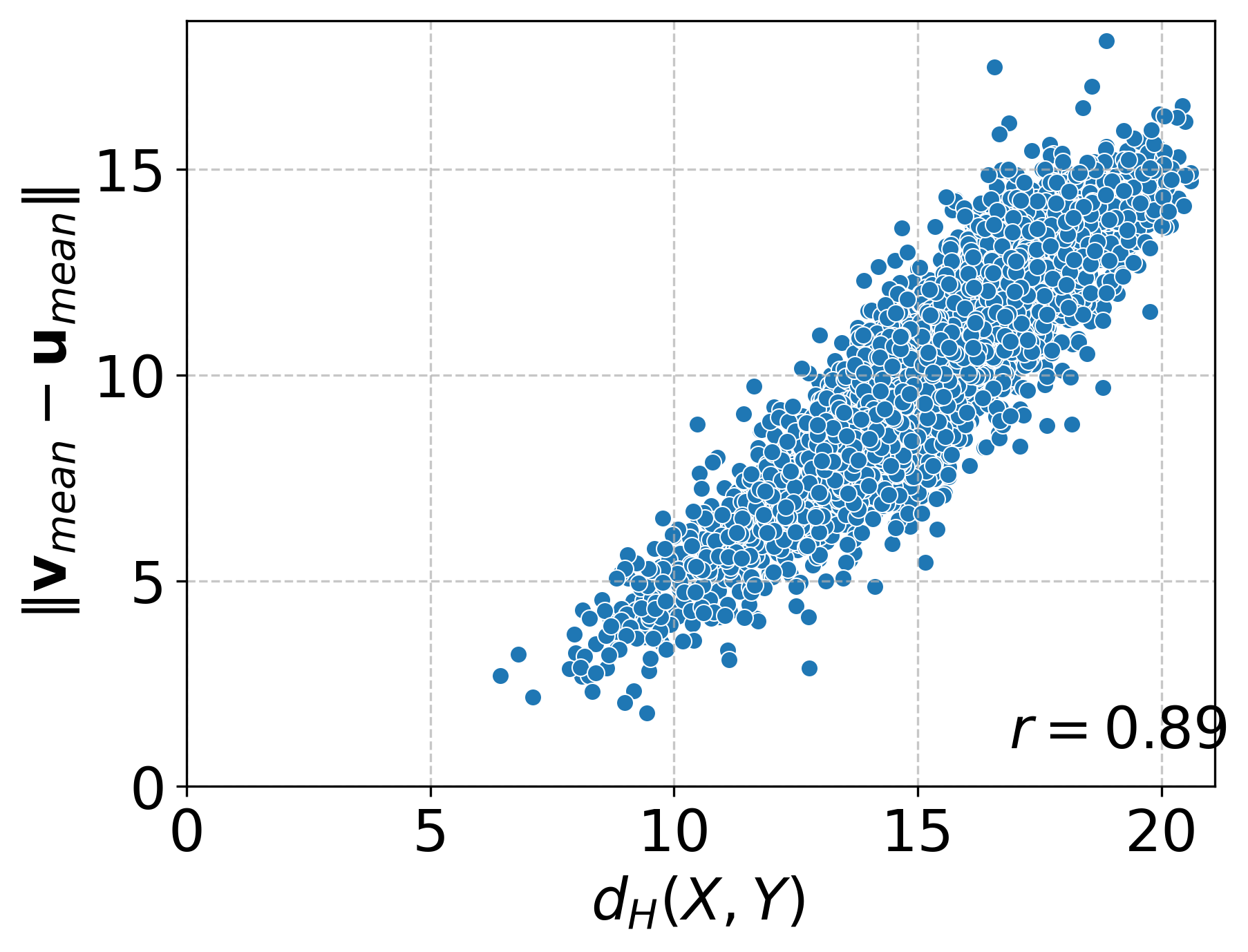}}
    \subfigure{\includegraphics[width=0.25\textwidth]{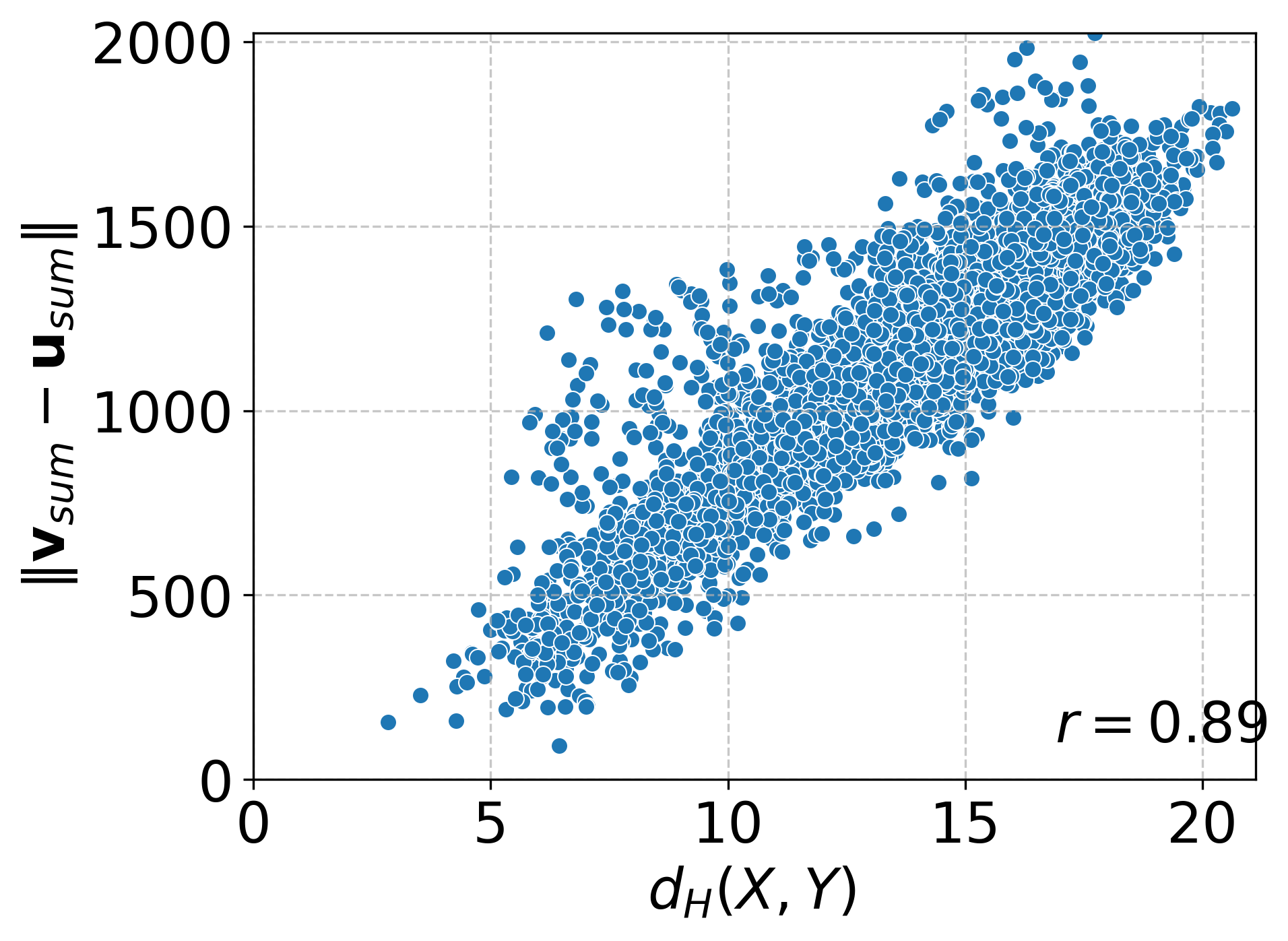}}
    \subfigure{\includegraphics[width=0.25\textwidth]{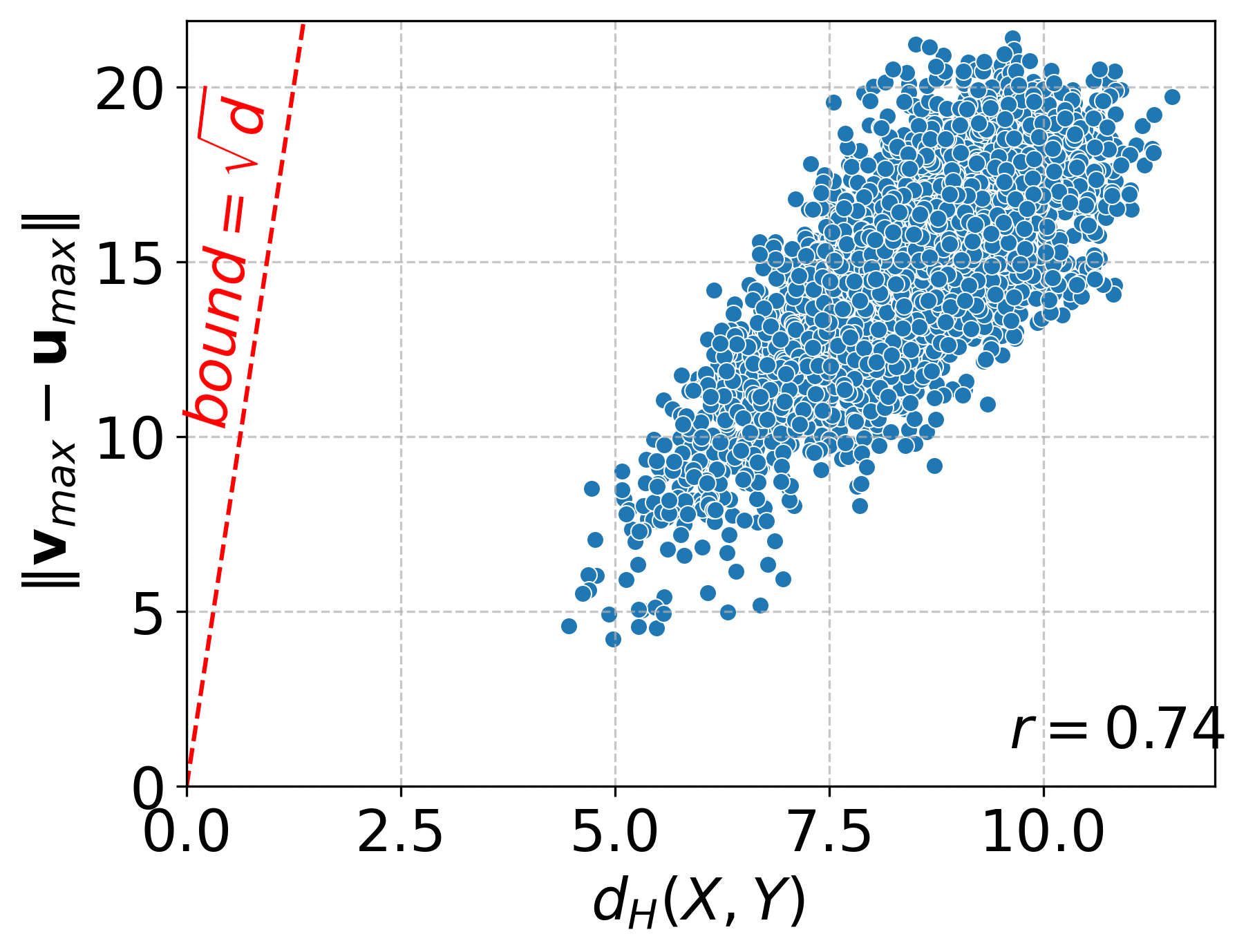}}
    \vspace{-.5cm}
    \caption{Each dot corresponds to a pair of point clouds from the test set of ModelNet40. Each subfigure compares the distance between the latent representations of pairs of point clouds computed by a distance function for multisets (\ie EMD, Hausdorff distance or matching distance) with the Euclidean distance between the representations of the pairs obtained after applying an aggregation function (\ie \textsc{mean}, \textsc{sum} or \textsc{max}). The correlation between the two distances is also computed and visualized. The Lipschitz bounds are illustrated with dashed lines.}
    \label{fig:corr_modelnet}
    \vspace{-.2cm}
\end{figure}

\section{Numerical Experiments}\label{sec:experiments}
We experiment with two datasets from different domains: (i) \textit{ModelNet40}: it contains $12{,}311$ 3D CAD models that belong to $40$ object categories~\citep{wu20153d}; and
(ii) \textit{Polarity}: it contains $10{,}662$ positive and negative labeled movie review snippets from Rotten Tomatoes~\citep{pang2004sentimental}.
Note that the samples of both datasets can be thought of as multisets of vectors.
Each sample of ModelNet40 is a multiset of $3$-dimensional vectors.
Polarity consists of textual documents, and each document is represented as a multiset of word vectors.
The word vectors are obtained from a publicly available pre-trained model~\citep{mikolov2013distributed}.

\subsection{Lipschitz Constant of Aggregation Functions}\label{sec:exp_lip_con}
In the first set of experiments, we empirically validate the results of Theorem~\ref{thm:thm1} and Lemma~\ref{lem:lem1}.
To obtain a collection of multisets of vectors, we train three different neural network models on the ModelNet40 and Polarity datasets.
The difference between the three models lies in the employed aggregation function: \textsc{mean}, \textsc{sum} or \textsc{max}.
More details about the different layers of those models are given in Appendix~\ref{sec:experimental_setup}.
Note that the multisets used to verify the Lipschitz constants of the aggregation functions could, in principle, be generated by any means.
We use those models to create multisets since the objective is to investigate how these functions behave in comparison to the derived bounds when the inputs are sampled from real distributions, rather than artificially generated data that do not occur in practice.
Once the models are trained, we feed the test samples to them.
For each test sample, we store the multiset of vectors produced by the layer of the model that precedes the aggregation function, and we also store the output of the aggregation function (a vector for each multiset). 
We then randomly choose $100$ test samples, and for each pair of those samples, we compute the EMD, Hausdorff distance and matching distance of their multisets of vectors, and also the Euclidean distance of their vector representations produced by the aggregation function.
This gives rise to $9$ combinations of distance functions and aggregation functions in total.
Due to limited available space, we only show results for ModelNet40 in Figure~\ref{fig:corr_modelnet}.
The results for Polarity can be found in Appendix~\ref{sec:additional_aggregations}.
Note that there are $\binom{100}{2} = 4{,}950$ distinct pairs in total.
Therefore, $4{,}950$ dots are visualized in each subfigure.
To quantify the relationship between the output of the distance functions for multisets and the Euclidean distances of their vector representations, we compute and report the Pearson correlation coefficient.
We observe from Figure~\ref{fig:corr_modelnet} that the Lipschitz bounds (dash lines) successfully upper bound the Euclidean distance of the outputs of the aggregation functions. 
Note that all point clouds contained in the ModelNet40 dataset have equal cardinalities.
Therefore, the conclusions of both Theorem~\ref{thm:thm1} and Lemma~\ref{lem:lem1} apply to this case, and thus we can derive Lipschitz constants for $7$ out of the $9$ combinations of distance functions for multisets and aggregation functions.
We can see that the bounds that are associated with the \textsc{mean} and \textsc{sum} functions are tight, while those associated with the \textsc{max} function are relatively loose.
We also observe that the distances of the representations produced by the \textsc{mean} and \textsc{sum} functions are very correlated with the distances produced by all three considered distance functions for multisets.
On the other hand, the \textsc{max} function gives rise to representations that are less correlated with the produced distances.

\begin{figure}[t]
    \vspace{-.3cm}
    \centering
    \subfigure{\includegraphics[width=0.25\textwidth]{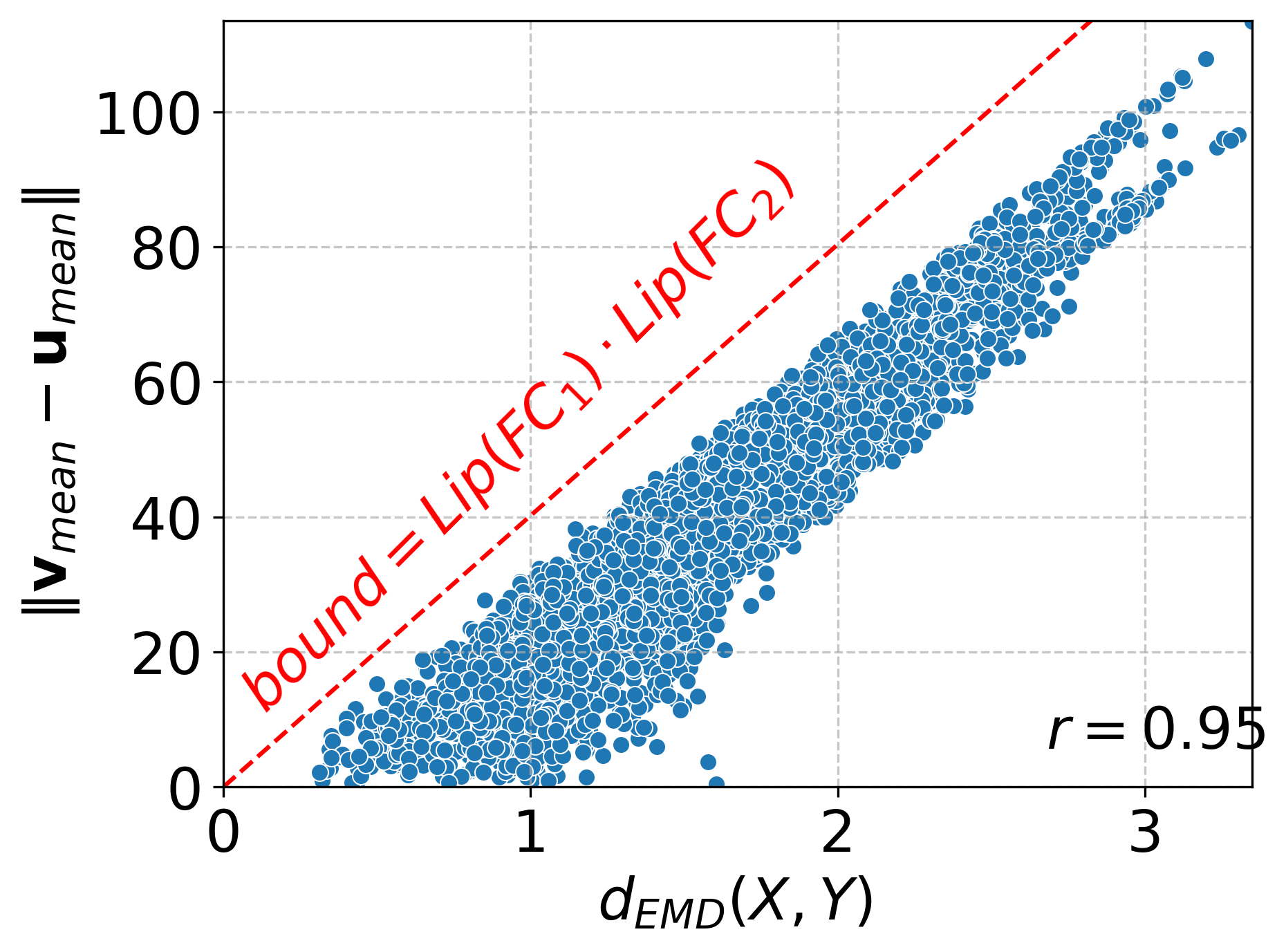}}
    \subfigure{\includegraphics[width=0.25\textwidth]{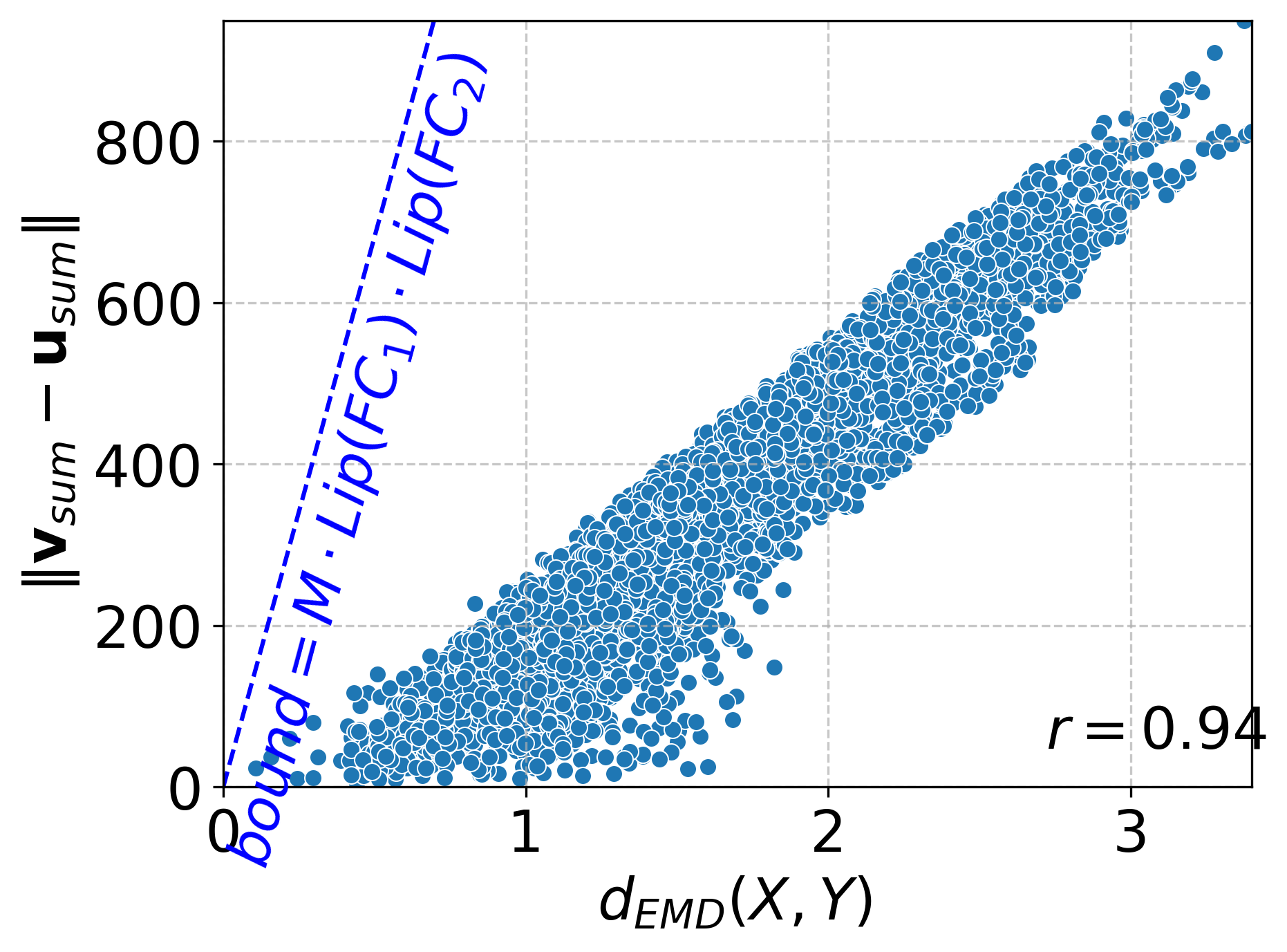}}
    \subfigure{\includegraphics[width=0.25\textwidth]{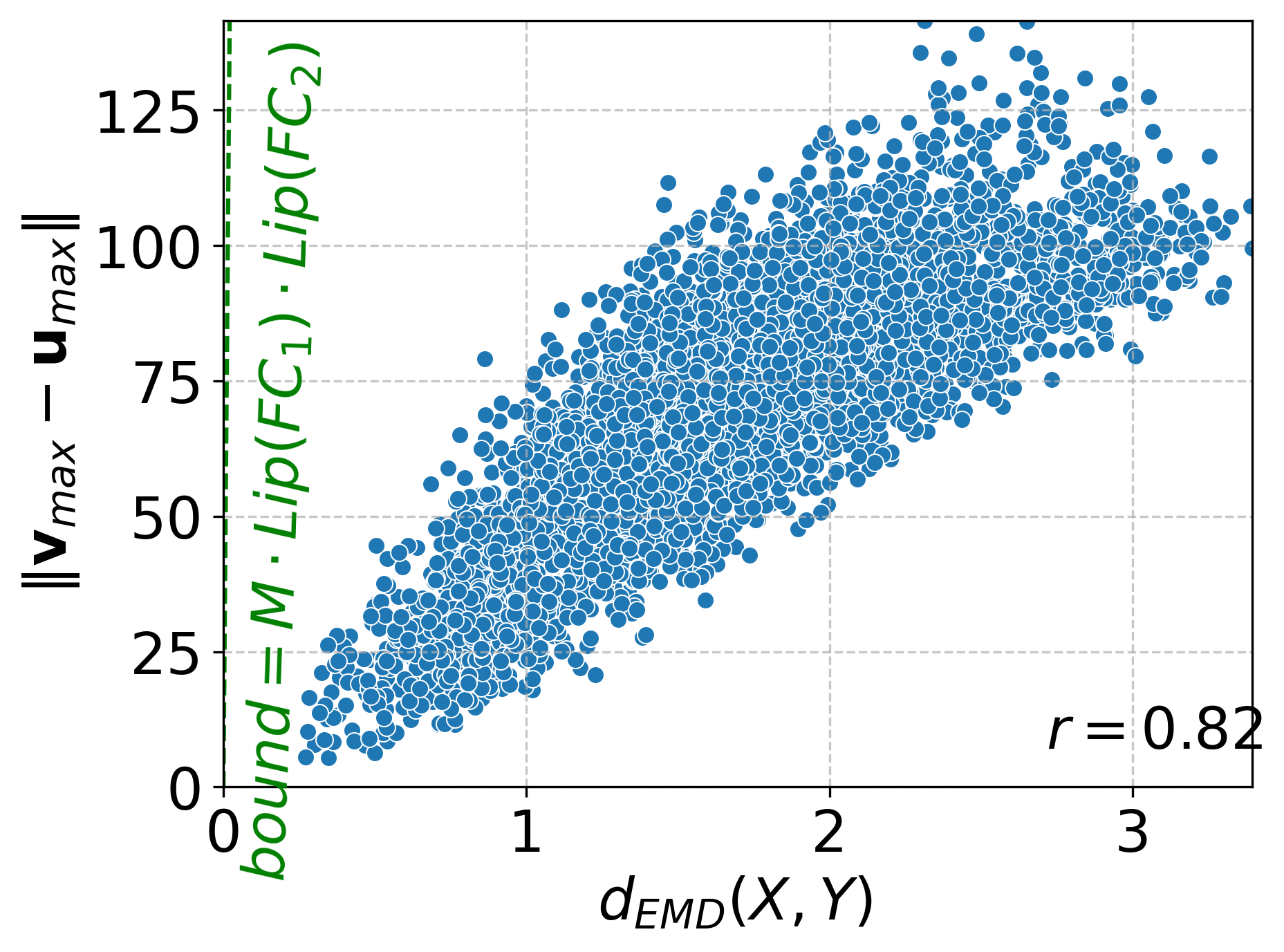}}\\
    \vspace{-.4cm}
    \subfigure{\includegraphics[width=0.25\textwidth]{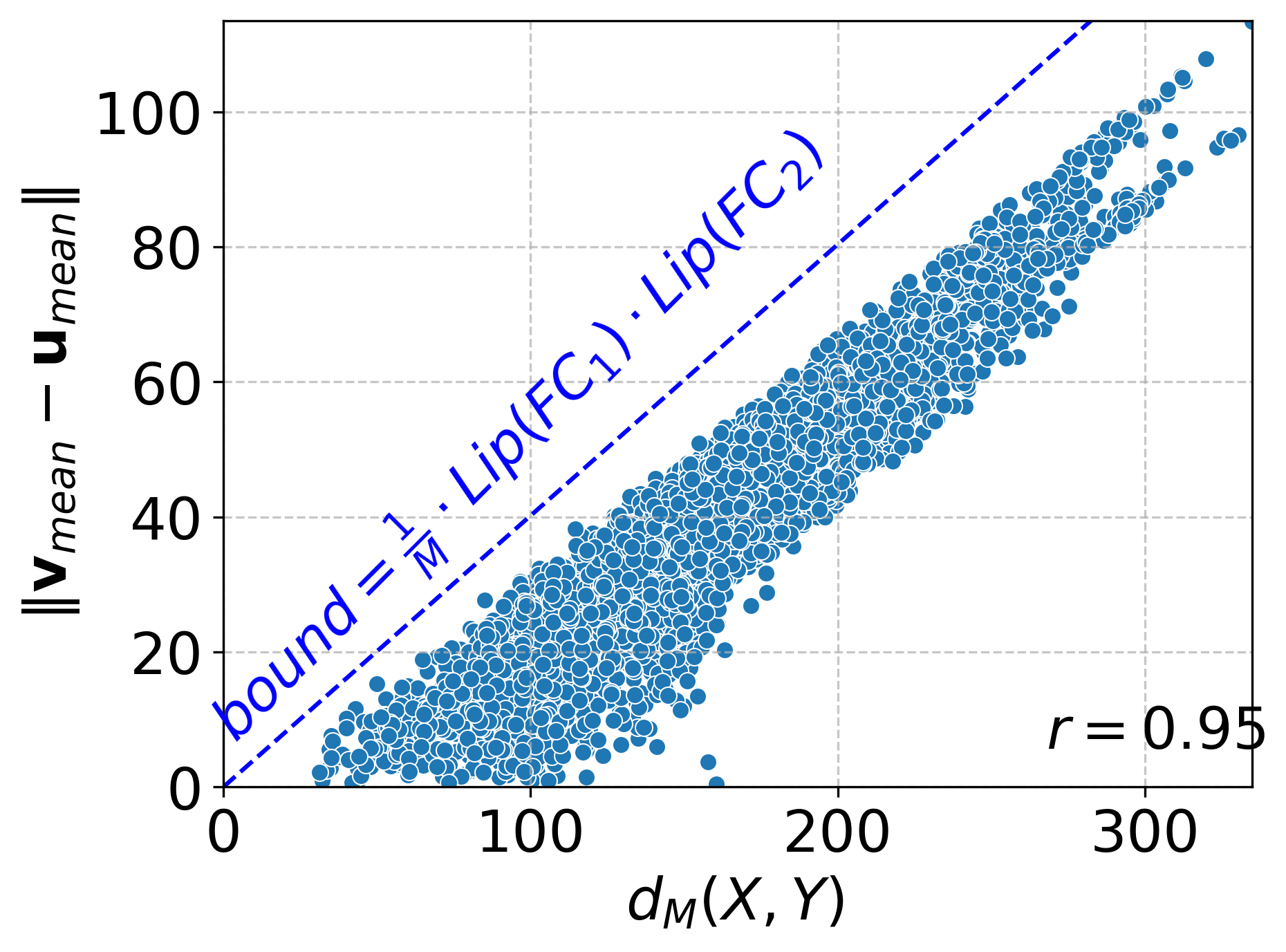}}
    \subfigure{\includegraphics[width=0.25\textwidth]{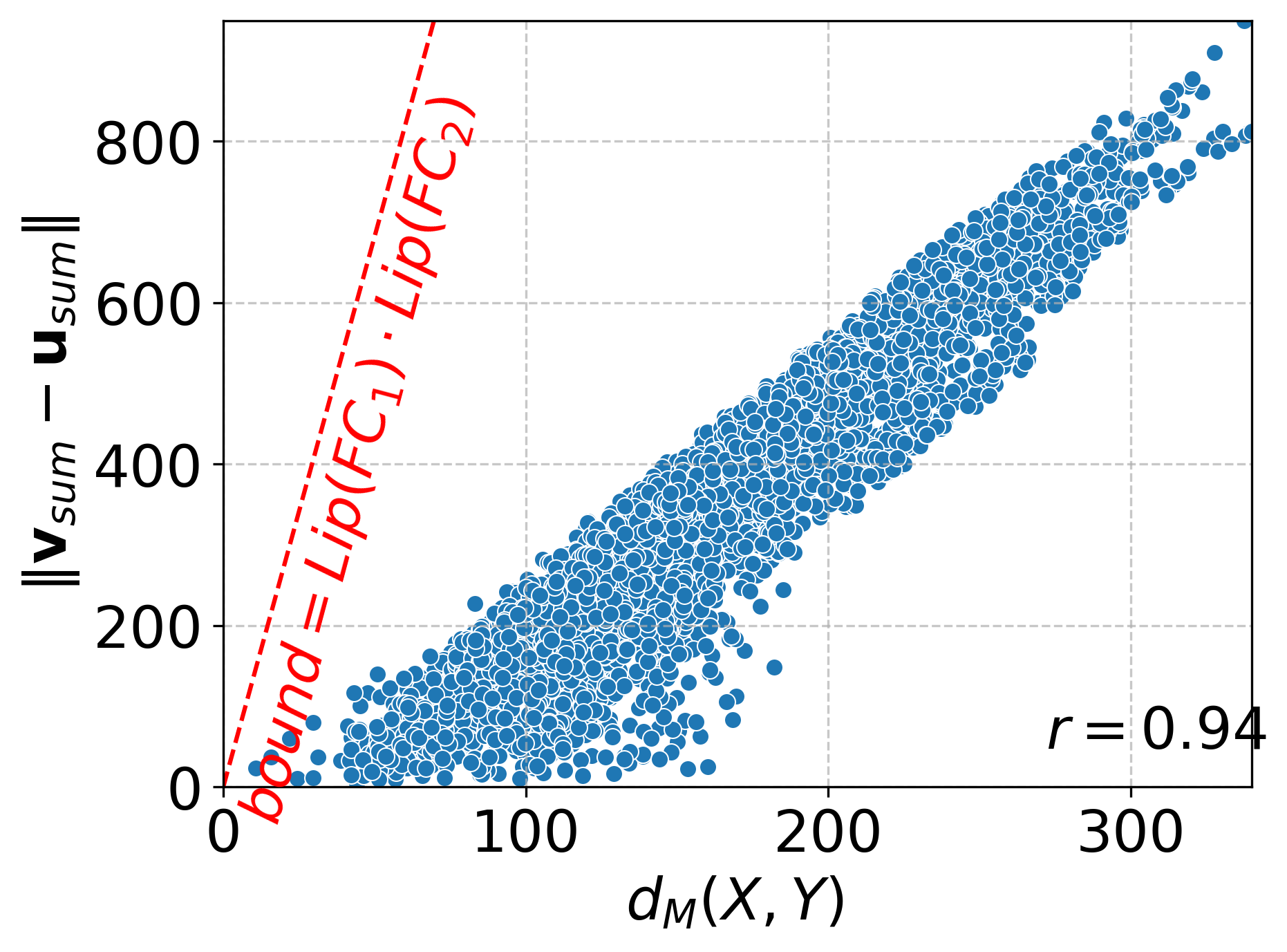}}
    \subfigure{\includegraphics[width=0.25\textwidth]{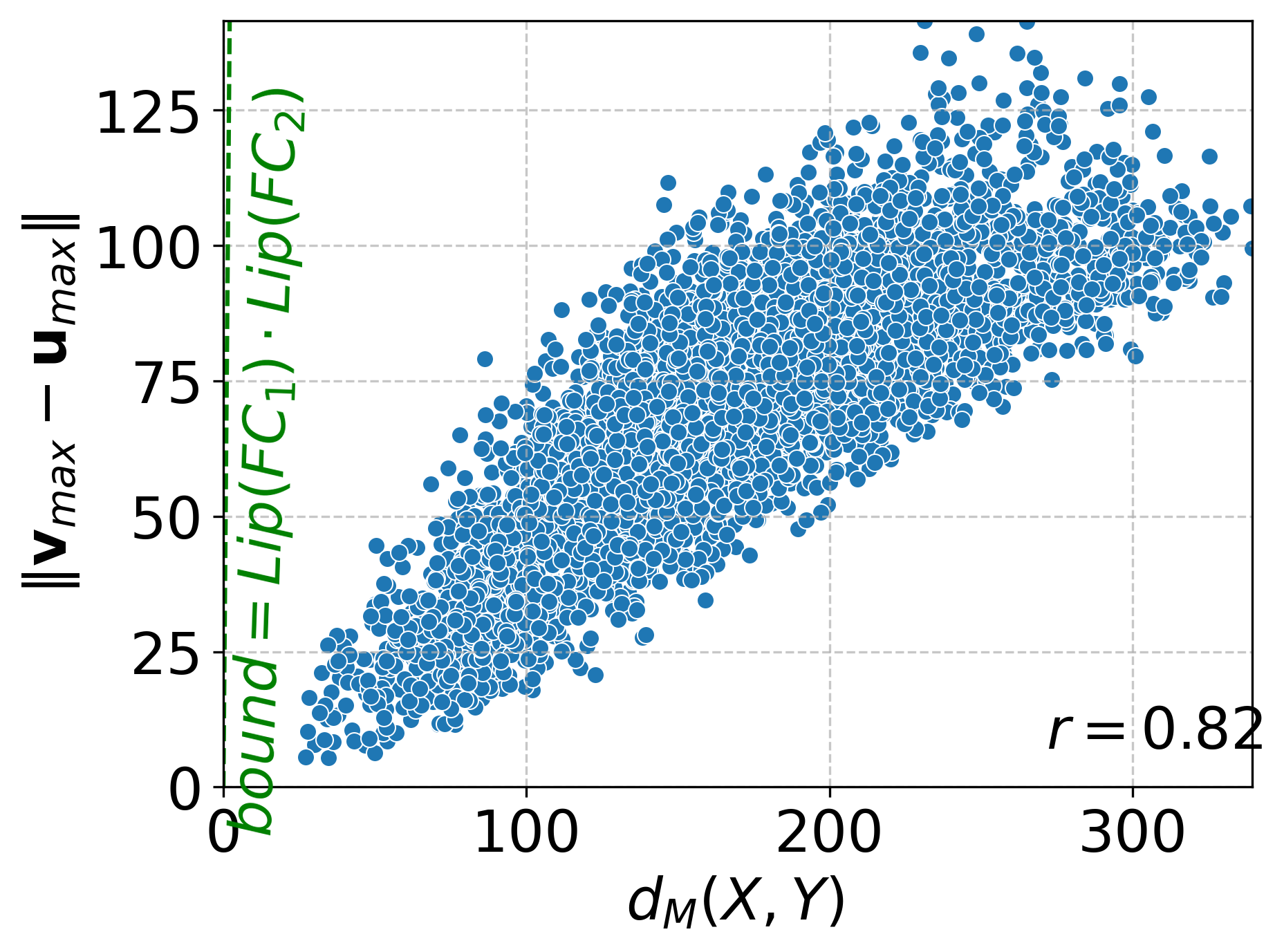}}\\
    \vspace{-.4cm}
    \subfigure{\includegraphics[width=0.25\textwidth]{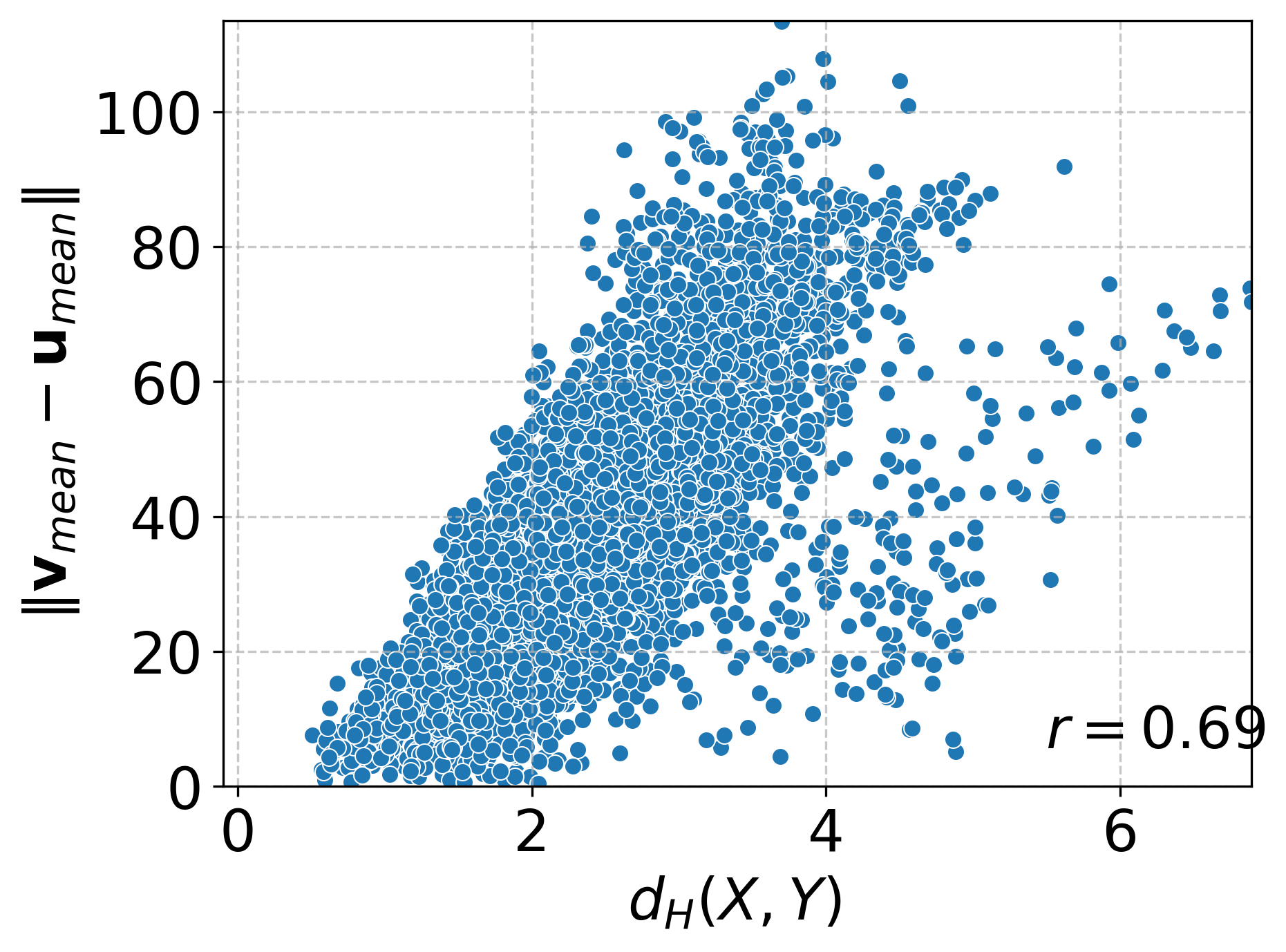}}
    \subfigure{\includegraphics[width=0.25\textwidth]{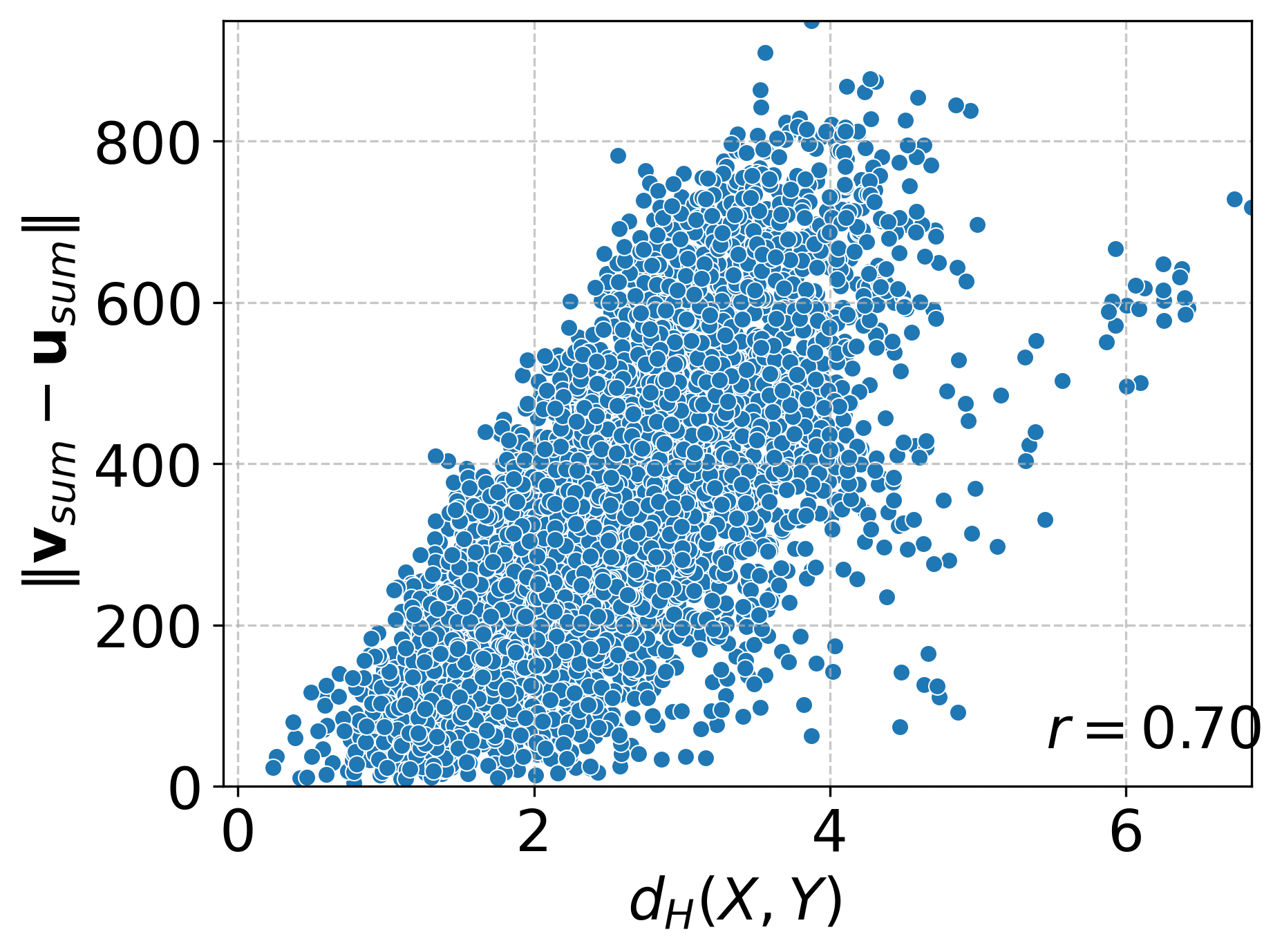}}
    \subfigure{\includegraphics[width=0.25\textwidth]{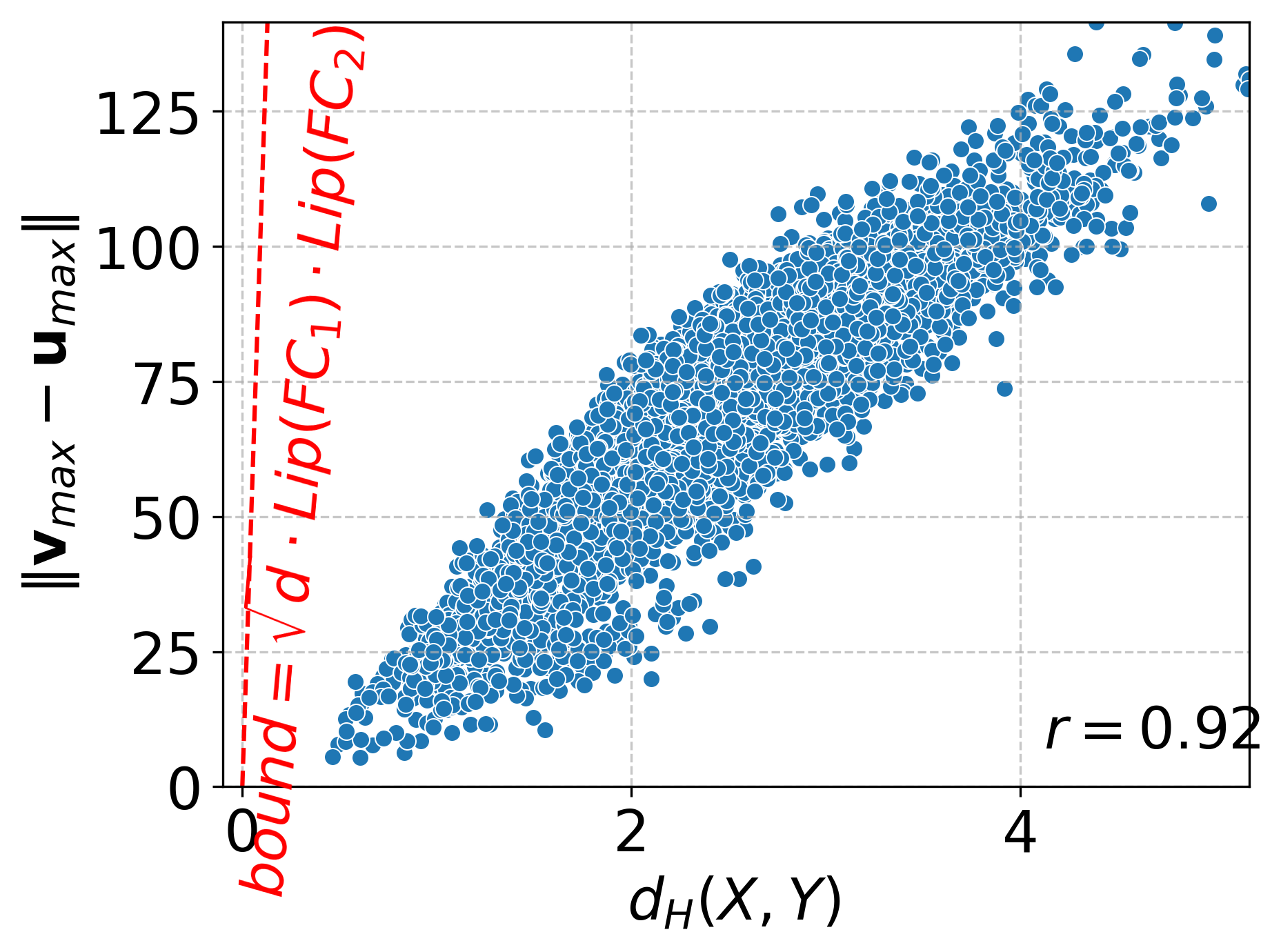}}
    \vspace{-.5cm}
    \caption{Each dot corresponds to a pair of point clouds from the test set of ModelNet40. Each subfigure compares the distance between the pairs of point clouds computed by EMD, Hausdorff distance or matching distance with the Euclidean distance between the representations of the pairs that emerge at the second-to-last layer of $\textsc{NN}_\textsc{mean}$, $\textsc{NN}_\textsc{sum}$ or $\textsc{NN}_\textsc{max}$.}
    \label{fig:corr_nn_modelnet}
    \vspace{-.2cm}
\end{figure}

\subsection{Upper Bounds of Lipschitz Constants of Neural Networks for Sets}
In the second set of experiments, we empirically validate the results of Theorem~\ref{thm:thm2} and Lemma~\ref{lem:lem2} on the ModelNet40 and Polarity datasets.
We build neural networks that consist of three layers: (i) a fully-connected layer; (ii) an aggregation function; and (iii) a second fully-connected layer.
Therefore, those models first transform the elements of the input multisets using an affine function, then aggregate the representations of the elements of each multiset and finally they transform the aggregated representations using a another affine function.
Note that the Lipschitz constant of an affine function is equal to the largest singular value of the associated  weight matrix, and can be exactly computed in polynomial time.
We thus denote by $\text{Lip}(\text{FC}_1)$ and $\text{Lip}(\text{FC}_2)$ the Lipschitz constants of the two fully-connected layers, respectively.
Note also that the Lipschitz constant of most activation functions (\eg ReLU, LeakyReLU, Tanh) is equal to $1$.
Therefore, we can compute an upper bound of the Lipschitz constant of some models using Theorem~\ref{thm:thm2} and Lemma~\ref{lem:lem2}.
To train the models, we add a final layer to them which transforms the vector representations of the multisets into class probabilities.
We use the same experimental protocol as in subsection~\ref{sec:exp_lip_con} above (\ie we randomly choose $100$ test samples).
We only provide results for ModelNet40 in Figure~\ref{fig:corr_nn_modelnet}, while the results for Polarity can be found in Appendix~\ref{sec:additional_nns}.
Since all point clouds contained in the ModelNet40 dataset have equal cardinalities, the conclusions of both Theorem~\ref{thm:thm2} and Lemma~\ref{lem:lem2} apply to this setting, and thus, once again, we can derive upper bounds on the Lipschitz constants for $7$ out of the $9$ combinations of distance functions for multisets and aggregation functions.
We observe that the dash lines (Lipschitz upper bounds from Theorem~\ref{thm:thm2} and Lemma~\ref{lem:lem2}) indeed upper bound the Euclidean distance of the outputs of the aggregation functions. 
We can also see that the bounds that are associated with the \textsc{mean} function are tight, while those associated with the \textsc{sum} and \textsc{max} functions are relatively loose and very loose, respectively.
We also observe that the distances of the representations produced by the \textsc{mean} and \textsc{sum} functions are very correlated with the distances produced by EMD and matching distance.
On the other hand, the \textsc{max} function gives rise to representations whose distances are less correlated with the distances produced by the distance functions for multisets.

\vspace{-.2cm}
\subsection{Stability under Perturbations of Input Multisets}\label{sec:experiments_stability}
\vspace{-.2cm}
\begin{wraptable}{r}{4cm}
    \vspace{-.7cm}
    \caption{Average drop in accuracy of $\textsc{NN}_\textsc{mean}$ and $\textsc{NN}_\textsc{max}$ after perturbations Pert. \#1 and Pert. \#2 are applied to the multisets of the test set.}\label{tab:pert}
    \tiny
    \begin{tabular}{ccc}\\\toprule  
        \multirow{2}{*}{Model} & ModelNet40 & Polarity \\
        & Pert. \#1 & Pert. \#2 \\ \midrule
        $\textsc{NN}_\textsc{mean}$ & 2.0 ($\pm$ 1.3) & 13.6 ($\pm$ 7.1) \\  \midrule
        $\textsc{NN}_\textsc{max}$ & 20.1 ($\pm$ 1.8) & 4.8 ($\pm$ 3.7) \\ \bottomrule
    \end{tabular}
    \vspace{-.5cm}
\end{wraptable} 
We now empirically study the stability of the two Lipschitz continuous models ($\textsc{NN}_\textsc{mean}$ and $\textsc{NN}_\textsc{max}$) under perturbations of the input multisets.
Our objective is to apply small perturbations to test samples such that the models misclassify the perturbed samples.
We consider two different perturbations, Pert. \#1 and Pert. \#2.
Both perturbations are applied to test samples once each model has been trained.
We then examine whether the perturbation leads to a decrease in the accuracy achieved on the test set.
Pert. \#1 is the perturbation described in Proposition~\ref{prop:prop4} and is applied to the multisets of ModelNet40.
Specifically, we add to each test sample a single element.
We choose to add the element that has the highest norm across the elements of all samples.
Pert. \#2 is applied to the multisets of Polarity.
It adds random noise to each element of each multiset of the test set.
Specifically, a random vector is sampled from $\mathcal{U}(0,0.2)^d$ and is added to each element of each test sample.
We selected this distribution because its mean and standard deviation closely match the empirical mean and standard deviation of the dataset’s word vectors.
The results are provided in Table~\ref{tab:pert}.
$\textsc{NN}_\textsc{mean}$ appears to be insensitive to Pert. \#1, while $\textsc{NN}_\textsc{max}$ is insensitive to Pert. \#2.
The results indicate that $\textsc{NN}_\textsc{mean}$ is more robust than $\textsc{NN}_\textsc{max}$ to a larger perturbation that is associated with a single or a few elements of the multiset.
On the other hand, $\textsc{NN}_\textsc{max}$ is more robust to smaller perturbations applied to all elements of the multiset.

\begin{figure*}[t]
    \centering
    \subfigure{\includegraphics[width=0.25\textwidth]{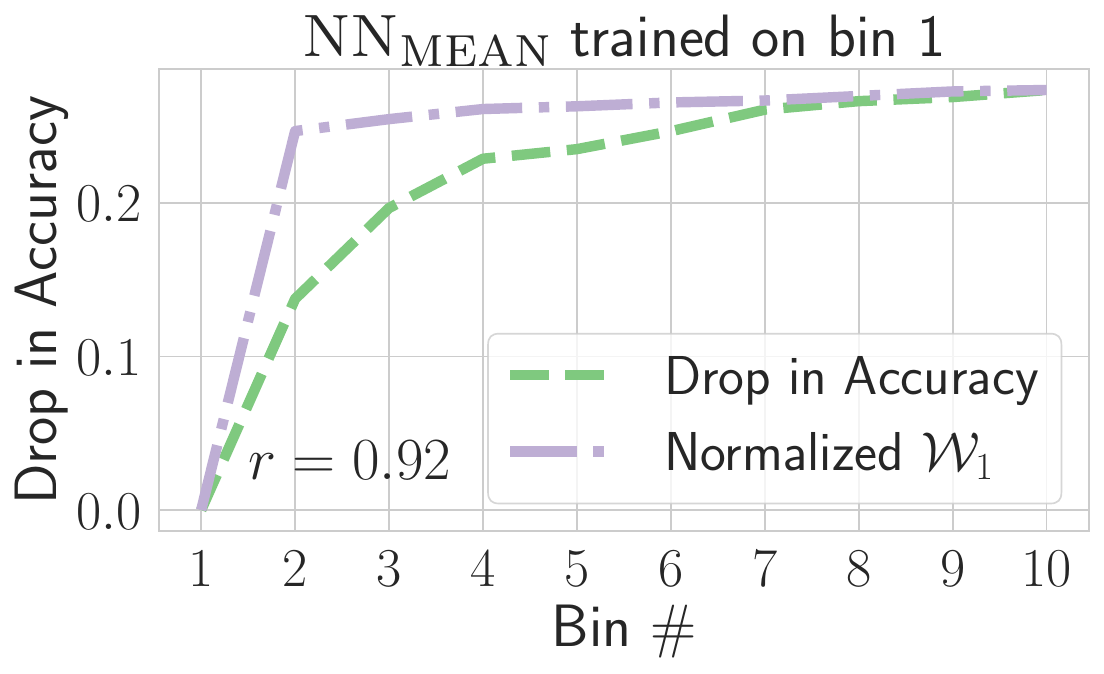}}
    \subfigure{\includegraphics[width=0.24\textwidth]{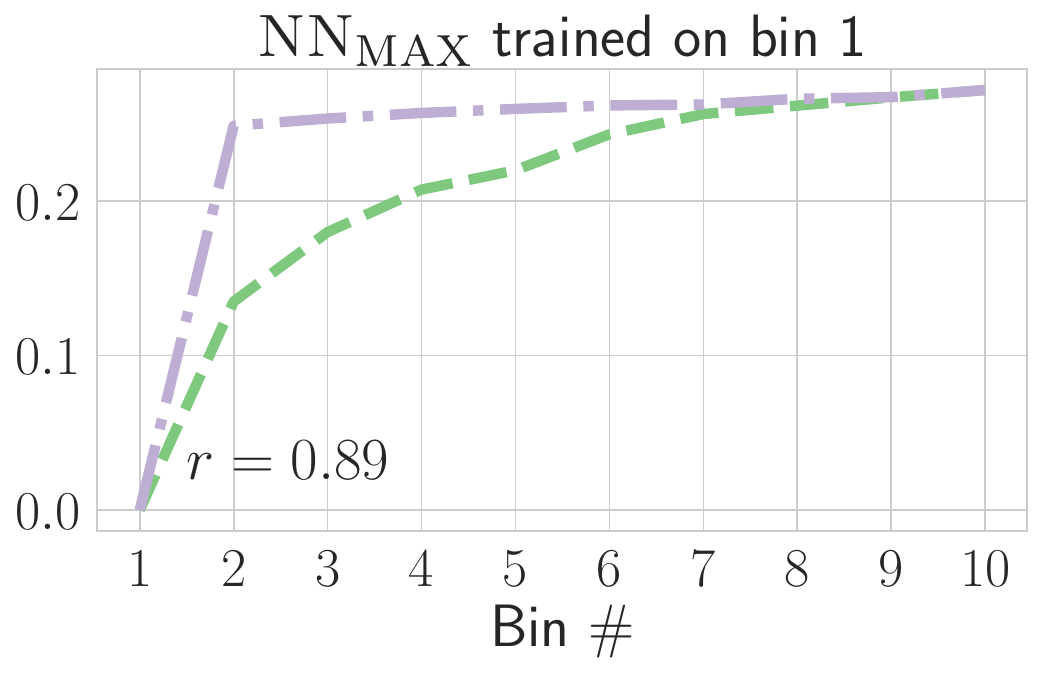}}
    \subfigure{\includegraphics[width=0.24\textwidth]{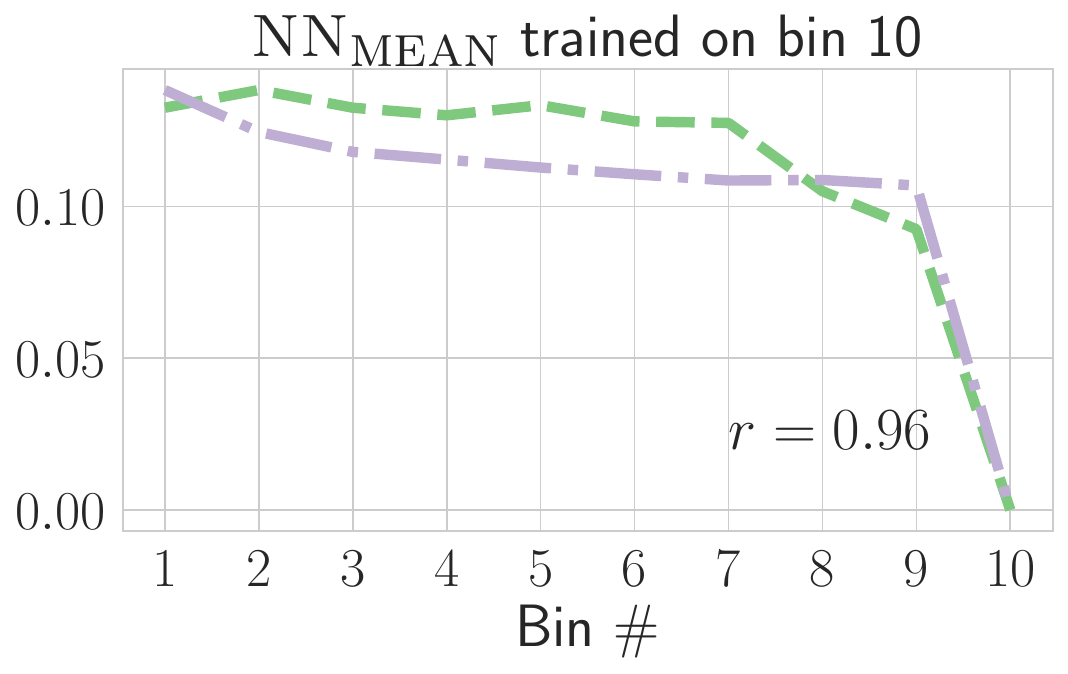}}
    \subfigure{\includegraphics[width=0.24\textwidth]{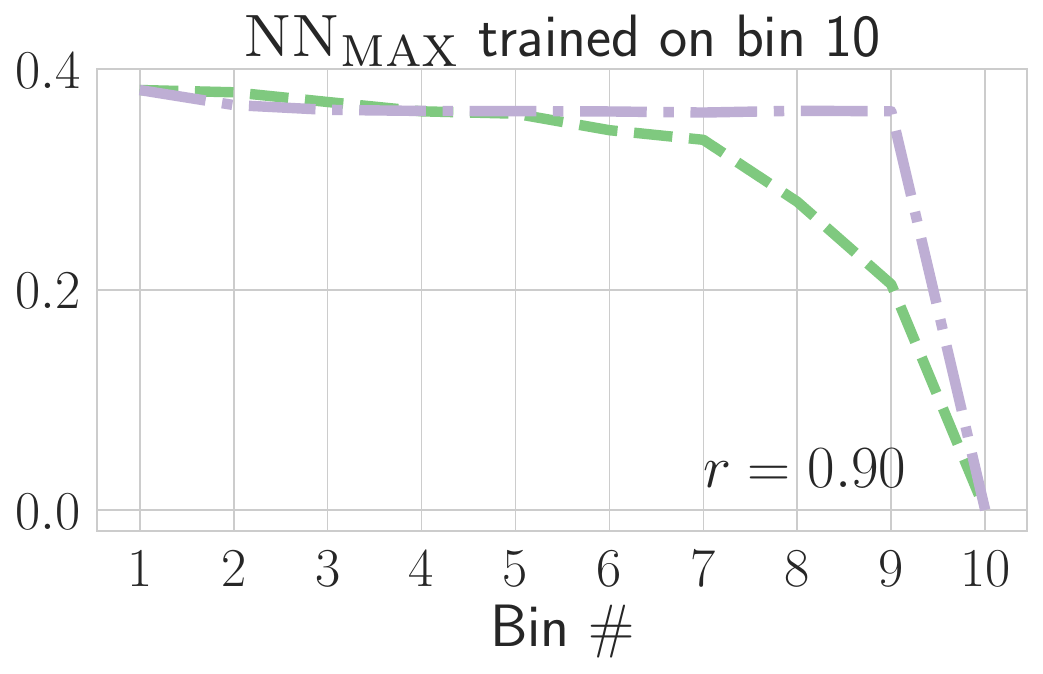}}
    \vspace{-.4cm}
    \caption{Size Generalization of $\textsc{NN}_\textsc{mean}$ and $\textsc{NN}_\textsc{max}$ models. For illustration purposes, the Wasserstein distances $\mathcal{W}_1$ are normalized to make the maximal distance equal to the greatest performance drops. The models in the left plots are trained on the first bucket, while those in the right plots are trained on the last bucket.}
    \label{fig:shift}
    \vspace{-.2cm}
\end{figure*}

\vspace{-.2cm}
\subsection{Generalization under Distribution Shifts}
\vspace{-.2cm}
Finally, we investigate whether neural networks for multisets can generalize to multisets of different cardinalities.
We randomly sample $2,000$ documents from the Polarity dataset, and we represent them as multisets of word vectors.
We then sort the multisets of word vectors based on their cardinality, and construct $10$ bins, each containing $200$ multisets.
The $i$-th bin contains multisets $X_{(200 \cdot i)+1},X_{(200 \cdot i)+2},\ldots,X_{(200 \cdot i)+200}$ from the sorted list of multisets.
We then train $\textsc{NN}_\textsc{mean}$ and $\textsc{NN}_\textsc{max}$ (which are Lipschitz continuous) on the first and the last bin and once the models are trained, we compute their accuracy on all $10$ bins.
We also compute the Wasserstein distance with $p=1$ between domain distributions (\ie between the first bin and the rest of the bins, and also between the last bin and the rest of the bins).
We then aim to validate Theorem~\ref{thm:thm3} which states that the error on different domains can be bounded by the Wasserstein distance between the data distributions.
We thus compute the correlation between the accuracy drop and the Wasserstein distance between the two distributions.
The results for $\textsc{NN}_\textsc{mean}$ and $\textsc{NN}_\textsc{max}$ are illustrated in Figure~\ref{fig:shift}.
The results are averaged over $10$ runs.
We observe that the Wasserstein distance between the data distributions using EMD (for $\textsc{NN}_\textsc{mean}$) and Hausdorff distance (for $\textsc{NN}_\textsc{max}$) as ground metrics highly correlates with the accuracy drop both in the case where the $\textsc{NN}_\textsc{mean}$ and $\textsc{NN}_\textsc{max}$ models are trained on small multisets and tested on larger multisets ($r = 0.92$ and $r = 0.90$, respectively) and also in the case where the models are trained on large multisets and tested on smaller multisets ($r = 0.94$ and $r = 0.90$, respectively).
The correlation is slightly weaker in the case of the $\textsc{NN}_\textsc{max}$ model.
Our results suggest that the drop in accuracy is indeed related to the Wasserstein distance between the data distributions, and that it can provide insights into the generalization performance of the models. 

\vspace{-.3cm}
\section{Conclusion and Discussion}
\vspace{-.3cm}
In this paper, we studied the Lipschitz continuity of multiset aggregation functions with respect to three distance functions.
We also explored the Lipschitz constants of neural networks that process multisets of vectors.
As a general guideline, one should choose the aggregation function that is Lipschitz continuous with respect to the distance function that best captures the distances between the multisets in the considered dataset or problem.
For example, in problems where the shape of the input object matters (\eg shapes extracted from medical images or 3D scans), Hausdorff distance is preferable to EMD and the matching distance since we would like to detect whether any part of one shape is far away from the other shape, even if the rest of the shapes are well-aligned.
However, in some cases, not a single distance function is suitable for a single problem.
For instance, consider the problem of text categorization, where documents are represented as multisets of word vectors.
If two documents are considered similar when they contain similar terms, regardless of their length, the EMD is likely to best capture the distance between them.
On the other hand, if similarity is determined by the presence of just one or a few extreme shared words, the Hausdorff distance is more appropriate.
This illustrates that selecting an aggregation function typically requires some domain knowledge.
In the absence of such knowledge, choosing an aggregation function can be challenging, except in special cases, such as when multisets have the same cardinality where our results indicate that the max function is Lipschitz continuous with respect to all distance functions.


\section*{Acknowledgements}
We thank the anonymous reviewers for their helpful comments and suggestions.
This work has been partially supported by project MIS 5154714 of the National Recovery and Resilience Plan Greece 2.0 funded by the European Union under the NextGenerationEU Program.

\bibliography{iclr2026_conference}

@inproceedings{krizhevsky2012imagenet,
  title={{ImageNet Classification with Deep Convolutional Neural Networks}},
  author={Krizhevsky, Alex and Sutskever, Ilya and Hinton, Geoffrey E},
  booktitle={Advances in Neural Information Processing Systems},
  pages={1097--1105},
  year={2012}
}

@inproceedings{he2016deep,
  title={{Deep Residual Learning for Image Recognition}},
  author={He, Kaiming and Zhang, Xiangyu and Ren, Shaoqing and Sun, Jian},
  booktitle={Proceedings of the IEEE Conference on Computer Vision and Pattern Recognition},
  pages={770--778},
  year={2016}
}

@inproceedings{vaswani2017attention,
  title={{Attention Is All You Need}},
  author={Vaswani, Ashish and Shazeer, Noam and Parmar, Niki and Uszkoreit, Jakob and Jones, Llion and Gomez, Aidan N and Kaiser, Lukasz and Polosukhin, Illia},
  booktitle={Advances in Neural Information Processing Systems},
  pages={6000--6010},
  year={2017}
}

@inproceedings{peters2018deep,
  title={{Deep Contextualized Word Representations}},
  author={Peters, Matthew E. and Neumann, Mark and Iyyer, Mohit and Gardner, Matt and Clark, Christopher and Lee, Kenton and Zettlemoyer, Luke},
  booktitle={Proceedings of the 2018 Conference of the North American Chapter of the Association for Computational Linguistics: Human Language Technologies, Volume 1 (Long Papers)},
  pages={2227--2237},
  year={2018}  
}

@inproceedings{szegedy2014intriguing,
  title={{Intriguing Properties of Neural Networks}},
  author={Szegedy, Christian and Zaremba, Wojciech and Sutskever, Ilya and Bruna, Joan and Erhan, Dumitru and Goodfellow, Ian and Fergus, Rob},
  booktitle={2nd International Conference on Learning Representations},
  year={2014}
}

@inproceedings{goodfellow2014explaining,
  title={{Explaining and Harnessing Adversarial Examples}},
  author={Goodfellow, Ian J and Shlens, Jonathon and Szegedy, Christian},
  booktitle={3rd International Conference on Learning Representations},
  year={2015}
}

@inproceedings{tsuzuku2018lipschitz,
  title={{Lipschitz-Margin Training: Scalable Certification of Perturbation Invariance for Deep Neural Networks}},
  author={Tsuzuku, Yusuke and Sato, Issei and Sugiyama, Masashi},
  booktitle={Advances in Neural Information Processing Systems},
  pages={6542--6551},
  year={2018}
}

@inproceedings{anil2019sorting,
  title={{Sorting Out Lipschitz Function Approximation}},
  author={Anil, Cem and Lucas, James and Grosse, Roger},
  booktitle={Proceedings of the 36th International Conference on Machine Learning},
  pages={291--301},
  year={2019}
}

@inproceedings{trockman2021orthogonalizing,
  title={{Orthogonalizing Convolutional Layers with the Cayley Transform}},
  author={Trockman, Asher and Kolter, J Zico},
  booktitle={9th International Conference on Learning Representations },
  year={2021}
}

@inproceedings{zhang2022rethinking,
  title={{Rethinking Lipschitz Neural Networks and Certified Robustness: A Boolean Function Perspective}},
  author={Zhang, Bohang and Jiang, Du and He, Di and Wang, Liwei},
  booktitle={Advances in Neural Information Processing Systems},
  pages={19398--19413},
  year={2022}
}

@inproceedings{virmaux2018lipschitz,
  title={Lipschitz regularity of deep neural networks: analysis and efficient estimation},
  author={Virmaux, Aladin and Scaman, Kevin},
  booktitle={Advances in Neural Information Processing Systems},
  pages={3839--3848},
  year={2018}
}

@inproceedings{fazlyab2019efficient,
  title={{Efficient and Accurate Estimation of Lipschitz Constants for Deep Neural Networks}},
  author={Fazlyab, Mahyar and Robey, Alexander and Hassani, Hamed and Morari, Manfred and Pappas, George},
  booktitle={Advances in neural information processing systems},
  pages={11427--11438},
  year={2019}
}

@inproceedings{latorre2020lipschitz,
  title={Lipschitz constant estimation of neural networks via sparse polynomial optimization},
  author={Latorre, Fabian and Rolland, Paul and Cevher, Volkan},
  booktitle={8th International Conference on Learning Representations},
  year={2020}
}

@article{combettes2020lipschitz,
  title={{Lipschitz Certificates for Layered Network Structures Driven by Averaged Activation Operators}},
  author={Combettes, Patrick L and Pesquet, Jean-Christophe},
  journal={SIAM Journal on Mathematics of Data Science},
  volume={2},
  number={2},
  pages={529--557},
  year={2020},
  publisher={SIAM}
}

@inproceedings{kim2021lipschitz,
  title={{The Lipschitz Constant of Self-Attention}},
  author={Kim, Hyunjik and Papamakarios, George and Mnih, Andriy},
  booktitle={Proceedings of the 38th International Conference on Machine Learning},
  pages={5562--5571},
  year={2021}
}

@inproceedings{pabbaraju2021estimating,
  title={{Estimating Lipschitz constants of monotone deep equilibrium models}},
  author={Pabbaraju, Chirag and Winston, Ezra and Kolter, J Zico},
  booktitle={9th International Conference on Learning Representations},
  year={2021}
}

@inproceedings{zaheer2017deep,
  title={{Deep Sets}},
  author={Zaheer, Manzil and Kottur, Satwik and Ravanbhakhsh, Siamak and P{\'o}czos, Barnab{\'a}s and Salakhutdinov, Ruslan and Smola, Alexander J},
  booktitle={Advances in Neural Information Processing Systems},
  pages={3394--3404},
  year={2017}
}

@inproceedings{qi2017pointnet,
  title={{PointNet: Deep Learning on Point Sets for 3D Classification and Segmentation}},
  author={Qi, Charles R and Su, Hao and Mo, Kaichun and Guibas, Leonidas J},
  booktitle={Proceedings of the IEEE Conference on Computer Vision and Pattern Recognition},
  pages={652--660},
  year={2017}
}

@inproceedings{wagstaff2019limitations,
  title={{On the Limitations of Representing Functions on Sets}},
  author={Wagstaff, Edward and Fuchs, Fabian and Engelcke, Martin and Posner, Ingmar and Osborne, Michael A},
  booktitle={Proceedings of the 36th International Conference on Machine Learning},
  pages={6487--6494},
  year={2019}
}

@article{wagstaff2022universal,
  title={{Universal Approximation of Functions on Sets}},
  author={Wagstaff, Edward and Fuchs, Fabian B and Engelcke, Martin and Osborne, Michael A and Posner, Ingmar},
  journal={Journal of Machine Learning Research},
  volume={23},
  number={151},
  pages={1--56},
  year={2022}
}

@inproceedings{xu2019powerful,
  title={{How Powerful are Graph Neural Networks?}},
  author={Xu, Keyulu and Hu, Weihua and Leskovec, Jure and Jegelka, Stefanie},
  booktitle={7th International Conference on Learning Representations},
  year={2019}
}

@article{cappart2023combinatorial,
  title={{Combinatorial Optimization and Reasoning with Graph Neural Networks}},
  author={Cappart, Quentin and Ch{\'e}telat, Didier and Khalil, Elias B and Lodi, Andrea and Morris, Christopher and Veli{\v{c}}kovi{\'c}, Petar},
  journal={Journal of Machine Learning Research},
  volume={24},
  number={130},
  pages={1--61},
  year={2023}
}

@article{rubner2000earth,
  title={{The Earth Mover’s Distance as a Metric for Image Retrieval}},
  author={Rubner, Yossi and Tomasi, Carlo and Guibas, Leonidas J},
  journal={International Journal of Computer Vision},
  volume={40},
  pages={99--121},
  year={2000},
  publisher={Springer}
}

@inproceedings{wu20153d,
  title={{3D ShapeNets: A Deep Representation for Volumetric Shapes}},
  author={Wu, Zhirong and Song, Shuran and Khosla, Aditya and Yu, Fisher and Zhang, Linguang and Tang, Xiaoou and Xiao, Jianxiong},
  booktitle={Proceedings of the IEEE Conference on Computer Vision and Pattern Recognition},
  pages={1912--1920},
  year={2015}
}

@inproceedings{pang2004sentimental,
  title={{A Sentimental Education: Sentiment Analysis Using Subjectivity Summarization Based on Minimum Cuts}},
  author={Pang, Bo and Lee, Lillian},
  booktitle={Proceedings of the 42nd Annual Meeting of the Association for Computational Linguistics},
  pages={271--278},
  year={2004}
}

@book{bertsimas1997introduction,
  title={{Introduction to Linear Optimization}},
  author={Bertsimas, Dimitris and Tsitsiklis, John N},
  volume={6},
  year={1997},
  publisher={Athena Scientific}
}

@inproceedings{amir2025fourier,
  title={{Fourier Sliced-Wasserstein Embedding for Multisets and Measures}},
  author={Amir, Tal and Dym, Nadav},
  booktitle={13th International Conference on Learning Representations},
  year={2025}
}

@inproceedings{shen2018wasserstein,
  title={{Wasserstein Distance Guided Representation Learning for Domain Adaptation}},
  author={Shen, Jian and Qu, Yanru and Zhang, Weinan and Yu, Yong},
  booktitle={Proceedings of the 32nd AAAI Conference on Artificial Intelligence},
  pages={4058--4065},
  year={2018}
}

@article{ben2010theory,
  title={A theory of learning from different domains},
  author={Ben-David, Shai and Blitzer, John and Crammer, Koby and Kulesza, Alex and Pereira, Fernando and Vaughan, Jennifer Wortman},
  journal={Machine Learning},
  volume={79},
  pages={151--175},
  year={2010}
}

@inproceedings{mikolov2013distributed,
  title={{Distributed Representations of Words and Phrases and their Compositionality}},
  author={Mikolov, Tomas and Sutskever, Ilya and Chen, Kai and Corrado, Greg and Dean, Jeffrey},
  booktitle={Advances in Neural Information Processing Systems},
  pages={3111--3119},
  year={2013}
}

@inproceedings{blitzer2007learning,
  title={{Learning Bounds for Domain Adaptation}},
  author={Blitzer, John and Crammer, Koby and Kulesza, Alex and Pereira, Fernando and Wortman, Jennifer},
  booktitle={Advances in Neural Information Processing Systems},
  pages={129--136},
  year={2007}
}

@inproceedings{yang2016hierarchical,
  title={{Hierarchical Attention Networks for Document Classification}},
  author={Yang, Zichao and Yang, Diyi and Dyer, Chris and He, Xiaodong and Smola, Alex and Hovy, Eduard},
  booktitle={Proceedings of the 2016 Conference of the North American Chapter of the Association for Computational Linguistics: Human Language Technologies},
  pages={1480--1489},
  year={2016}
}

@inproceedings{nikolentzos2020message,
  title={{Message Passing Attention Networks for Document Understanding}},
  author={Nikolentzos, Giannis and Tixier, Antoine and Vazirgiannis, Michalis},
  booktitle={Proceedings of the 34th AAAI Conference on Artificial Intelligence},
  pages={8544--8551},
  year={2020}
}

@inproceedings{velivckovic2018graph,
  title={{Graph Attention Networks}},
  author={Veli{\v{c}}kovi{\'c}, Petar and Cucurull, Guillem and Casanova, Arantxa and Romero, Adriana and Li{\`o}, Pietro and Bengio, Yoshua},
  booktitle={6th International Conference on Learning Representations},
  year={2018}
}

@inproceedings{brody2022attentive,
  title={{How Attentive are Graph Attention Networks?}},
  author={Brody, Shaked and Alon, Uri and Yahav, Eran},
  booktitle={10th International Conference on Learning Representations},
  year={2022}
}

@inproceedings{chuang2022tree,
  title={{Tree Mover’s Distance: Bridging Graph Metrics and Stability of Graph Neural Networks}},
  author={Chuang, Ching-Yao and Jegelka, Stefanie},
  booktitle={Advances in Neural Information Processing Systems},
  pages={2944--2957},
  year={2022}
}

@inproceedings{davidson2025h,
  title={{On the H$\backslash$"$\{$o$\}$lder Stability of Multiset and Graph Neural Networks}},
  author={Davidson, Yair and Dym, Nadav},
  booktitle={13th International Conference on Learning Representations},
  year={2024}
}

@book{rockafellar1998variational,
  title={{Variational Analysis}},
  author={Rockafellar, R Tyrrell and Wets, Roger JB},
  year={1998},
  publisher={Springer}
}

@inproceedings{yanardag2015deep,
  title={{Deep Graph Kernels}},
  author={Yanardag, Pinar and Vishwanathan, SVN},
  booktitle={Proceedings of the 21th ACM SIGKDD International Conference on Knowledge Discovery and Data Mining},
  pages={1365--1374},
  year={2015}
}

@inproceedings{maas2011learning,
  title={{Learning Word Vectors for Sentiment Analysis}},
  author={Maas, Andrew and Daly, Raymond E and Pham, Peter T and Huang, Dan and Ng, Andrew Y and Potts, Christopher},
  booktitle={Proceedings of the 49th Annual Meeting of the Association for Computational Linguistics: Human Language Technologies},
  pages={142--150},
  year={2011}
}

@article{clarke2024integration,
  title={Integration of variant annotations using deep set networks boosts rare variant association testing},
  author={Clarke, Brian and Holtkamp, Eva and {\"O}zt{\"u}rk, Hakime and M{\"u}ck, Marcel and Wahlberg, Magnus and Meyer, Kayla and Munzlinger, Felix and Brechtmann, Felix and H{\"o}lzlwimmer, Florian R and Lindner, Jonas and others},
  journal={Nature Genetics},
  volume={56},
  number={10},
  pages={2271--2280},
  year={2024}
}

@article{boulougouri2024molecular,
  title={Molecular set representation learning},
  author={Boulougouri, Maria and Vandergheynst, Pierre and Probst, Daniel},
  journal={Nature Machine Intelligence},
  volume={6},
  number={7},
  pages={754--763},
  year={2024}
}

@article{zhang2022composition,
  title={Composition design of high-entropy alloys with deep sets learning},
  author={Zhang, Jie and Cai, Chen and Kim, George and Wang, Yusu and Chen, Wei},
  journal={npj Computational Materials},
  volume={8},
  number={1},
  pages={89},
  year={2022}
}

@inproceedings{amir2023neural,
  title={{Neural Injective Functions for Multisets, Measures and Graphs via a Finite Witness Theorem}},
  author={Amir, Tal and Gortler, Steven and Avni, Ilai and Ravina, Ravina and Dym, Nadav},
  booktitle={Advances in Neural Information Processing Systems},
  pages={42516--42551},
  year={2023}
}

@article{balan2025permutation,
  title={Permutation-invariant representations with applications to graph deep learning},
  author={Balan, Radu and Haghani, Naveed and Singh, Maneesh},
  journal={Applied and Computational Harmonic Analysis},
  pages={101798},
  year={2025}
}

@inproceedings{vinyals2016order,
  title={{Order Matters: Sequence to sequence for sets}},
  author={Vinyals, Oriol and Bengio, Samy and Kudlur, Manjunath},
  booktitle={4th International Conference on Learning Representations},
  year={2016}
}

@inproceedings{murphy2019janossy,
  title={{Janossy Pooling: Learning Deep Permutation-Invariant Functions for Variable-Size Inputs}},
  author={Murphy, Ryan L and Srinivasan, Balasubramaniam and Rao, Vinayak and Ribeiro, Bruno},
  booktitle={7th International Conference on Learning Representations},
  year={2019}
}

@inproceedings{qi2017pointnet++,
  title={{PointNet++: Deep Hierarchical Feature Learning on Point Sets in a Metric Space}},
  author={Qi, Charles Ruizhongtai and Yi, Li and Su, Hao and Guibas, Leonidas J},
  booktitle={Advances in Neural Information Processing Systems},
  pages={5105--5114},
  year={2017}
}

@inproceedings{lee2019set,
  title={{Set Transformer: A Framework for Attention-based Permutation-Invariant Neural Networks}},
  author={Lee, Juho and Lee, Yoonho and Kim, Jungtaek and Kosiorek, Adam and Choi, Seungjin and Teh, Yee Whye},
  booktitle={Proceedings of the 36th International Conference on Machine Learning},
  pages={3744--3753},
  year={2019}
}

@inproceedings{skianis2020rep,
  title={{Rep the Set: Neural Networks for Learning Set Representations}},
  author={Skianis, Konstantinos and Nikolentzos, Giannis and Limnios, Stratis and Vazirgiannis, Michalis},
  booktitle={Proceedings of the 23rd International Conference on Artificial Intelligence and Statistics},
  pages={1410--1420},
  year={2020}
}

@inproceedings{zhang2020fspool,
  title={{FSPool: Learning Set Representations with Featurewise Sort Pooling}},
  author={Zhang, Yan and Hare, Jonathon and Pr{\"u}gel-Bennett, Adam},
  booktitle={8th International Conference on Learning Representations},
  year={2020}
}

@inproceedings{lim2023sign,
  title={{Sign and Basis Invariant Networks for Spectral Graph Representation Learning}},
  author={Lim, Derek and Robinson, Joshua and Zhao, Lingxiao and Smidt, Tess and Sra, Suvrit and Maron, Haggai and Jegelka, Stefanie},
  booktitle={11th International Conference on Learning Representations},
  year={2023}
}

@inproceedings{joshi2023expressive,
  title={{On the Expressive Power of Geometric Graph Neural Networks}},
  author={Joshi, Chaitanya K and Bodnar, Cristian and Mathis, Simon V and Cohen, Taco and Lio, Pietro},
  booktitle={Proceedings of the 40th International Conference on Machine Learning},
  pages={15330--15355},
  year={2023}
}

@inproceedings{pellegrini2021learning,
  title={{Learning Aggregation Functions}},
  author={Pellegrini, Giovanni and Tibo, Alessandro and Frasconi, Paolo and Passerini, Andrea and Jaeger, Manfred},
  booktitle={Proceedings of the 30th International Joint Conference on Artificial Intelligence},
  pages={2892--2898},
  year={2021}
}

@article{kimura2024permutation,
  title={On permutation-invariant neural networks},
  author={Kimura, Masanari and Shimizu, Ryotaro and Hirakawa, Yuki and Goto, Ryosuke and Saito, Yuki},
  journal={arXiv preprint arXiv:2403.17410},
  year={2024}
}

@article{xie2025advances,
  title={{Advances in Set Function Learning: A Survey of Techniques and Applications}},
  author={Xie, Jiahao and Tong, Guangmo},
  journal={ACM Computing Surveys},
  volume={57},
  number={7},
  pages={1--37},
  year={2025}
}
\bibliographystyle{iclr2026_conference}

\appendix

\section{Related Work}
In recent years, there has been an increasing interest in applying machine learning algorithms to set-structured data.
Since sets and multisets are inherently unordered, these models need to be invariant to permutations of the elements in the input set.
It has been shown that, otherwise, the ordering of inputs strongly affects performance~\citep{vinyals2016order}.

The seminal work of~\cite{zaheer2017deep} introduced DeepSets, a model that uses the sum aggregator to produce permutation-invariant set representations.
They showed that when the elements of the input sets come from a countable set $\mathcal{X}$, then for an appropriate $f \colon \mathcal{X} \rightarrow \mathbb{R}$, the function defined as $g(\{x_1, \ldots, x_n \} ) = \sum_{i=1}^n f(x_i)$ maps the input sets injectively to $\mathbb{R}$.
This result was later extended to multisets~\citep{xu2019powerful}.
In the countable case, an embedding dimension of $1$ already suffices for injectivity.
When $\mathcal{X}=\mathbb{R}$, multiset cardinalities are bounded by $m$ and $f$ is continuous, an embedding dimension of at least $m$ is both necessary and sufficient for injectivity~\citep{wagstaff2019limitations,wagstaff2022universal}.
For $\mathcal{X}=\mathbb{R}^d$, an embedding dimension of at least $md$ is necessary~\citep{joshi2023expressive}.
Most of these results rely on polynomial constructions to build injective multiset functions, after which the universal approximation theorem is invoked to argue that MLPs can approximate such polynomials.
\cite{amir2023neural} investigate whether MLP-based multiset functions are actually injective.
They show that injectivity depends on the activation function.
Specifically, analytic non-polynomial activation functions yield injective models, while networks with piecewise linear activation functions are injective only when $\mathcal{X}$ is finite or corresponds to certain irregular, countably infinite sets.

Besides DeepSets, several other architectures and aggregation functions have been proposed recently.
PointNet is another important architecture, primarily designed for point cloud data~\citep{qi2017pointnet}.
It consists of the same components as DeepSets, but instead of a sum aggregation function, it employs a max aggregator.
To allow PointNet to capture local structures at different scales,~\cite{qi2017pointnet++} proposed PointNet++, a hierarchical model which applies PointNet recursively to nested partitions of the input set. 
Janossy pooling applies a neural network to all permutations of the input data and averages their outputs~\citep{murphy2019janossy}.
Since computing all permutations is generally intractable, the authors also propose some practical approximations.
Set Transformer is a variant of the Transformer architecture designed for sets~\citep{lee2019set}.
Due to its attention mechanism, the model can capture interactions between elements in the input set.
RepSet is another model designed for set-structured data which generates set representations by comparing the input set against learnable latent sets using a network flow algorithm~\citep{skianis2020rep}.
FSPool sorts each feature across the elements of the set, and then computes a weighted sum of the elements where different weights can be learned for each feature dimension~\citep{zhang2020fspool}.
\cite{pellegrini2021learning} propose a a learnable aggregation function which can approximate common aggregation functions (\eg mean, sum, max), but also more complex functions.
\cite{kimura2024permutation} introduce the H\"{o}lder's Power DeepSets, a model tha generalized DeepSets by employing function known as power‑mean (a.k.a. H\"{o}lder mean), controlled by an exponent $p$.

Recently, a line of works has studied the Lipschitz continuity of aggregation functions and has developed embeddings that are bi-Lipschitz.
\cite{amir2023neural} showed that although DeepSets models that use analytic non-polynomial activation functions are injective, they are not bi-Lipschitz with respect to the 2-Wasserstein distance.
\cite{davidson2025h} investigated the H\"{o}lder continuity of neural networks for sets, a relaxation of Lipschitz continuity.
They relied on a probabilistic framework of H\"{o}lder stability in expectation and showed that DeepSets with ReLU activation functions have an expected lower-H\"{o}lder exponent of $3/2$, whereas smooth activation functions yield a much worse expected lower-H\"{o}lder exponent.
\cite{balan2025permutation} presented an embedding scheme based on sorting random projections of the multiset elements.
The embedding is shown to be injective and bi-Lipschitz.
The Fourier Sliced–Wasserstein (FSW) embedding is another theoretically grounded method for learning representations of sets~\citep{amir2025fourier}.
It computes random projections of the input data, and for each projection it samples the cosine transform of the corresponding quantile function.
From a theoretical standpoint, the FSW embedding has a significant advantage over most previous methods, as it is proven to be both injective and bi-Lipschitz.

Neural network models that can handle set-structured data have been applied across diverse domains, including biology~\citep{clarke2024integration}, chemistry~\citep{boulougouri2024molecular} and materials science~\citep{zhang2022composition}.
In some applications, domain knowledge is incorporated into those models.
For instance,~\cite{lim2023sign} introduce neural architectures specifically designed for eigenvector-based inputs, which can be viewed as variants of DeepSets, while explicitly accounting for the symmetries inherent in eigenvectors.
For a recent overview of neural network models for set-structured data, we refer the reader to the survey by~\cite{xie2025advances}.

\section{Proofs}
We next provide the proofs of the theoretical claims made in the main paper.

\subsection{Proof of Proposition~\ref{prop:prop1}}\label{sec:proof_prop1}
We will show that the matching distance is a metric on $\mathcal{S}(\mathbb{R}^d \setminus \{ \mathbf{0}\})$.
Let $X, Y \in \mathcal{S}(\mathbb{R}^d \setminus \{ \mathbf{0}\})$.
Non-negativity and symmetry hold trivially in all cases.
Furthermore, $d_M(X, X) = 0$, while the distance between two distinct points is always positive.
Suppose that $m = |X| > |Y| = n$.
Then, we have that:
\begin{equation*}
    d_M(X, Y) = \min_{\pi \in \mathfrak{S}_m} \bigg[ \sum_{i=1}^{n} \| \mathbf{v}_{\pi(i)} - \mathbf{u}_i \| + \sum_{i=n+1}^{m} \| \mathbf{v}_{\pi(i)} \| \bigg] \geq \sum_{i=n+1}^{m} \| \mathbf{v}_{\pi(i)} \| > 0
\end{equation*}
since $\| \mathbf{v} \| > 0$ for all $\mathbf{v} \in \mathbb{R}^d \setminus \{ \mathbf{0}\}$.
If $|X| = |Y| = m$, since the two multisets are different from each other, there exists at least one vector $\mathbf{u}_i$ with $i \in [m]$ such that $\mathbf{u}_i \in Y$, but $\mathbf{u}_i \not \in X$.
Let $\pi^* \in \mathfrak{S}_m$ denote a permutation associated with $d_M(X, Y)$.
Then, we have that:
\begin{equation*}
    d_M(X, Y) = \| \mathbf{v}_{\pi^*(1)} - \mathbf{u}_1 \| + \ldots + \| \mathbf{v}_{\pi^*(i)} - \mathbf{u}_i \| + \ldots + \| \mathbf{v}_{\pi^*(m)} - \mathbf{u}_m \| \geq \| \mathbf{v}_{\pi^*(i)} - \mathbf{u}_i \| > 0
\end{equation*}
Thus, we only need to prove that the triangle inequality holds. 
Let $Z \in \mathcal{S}(\mathbb{R}^d \setminus \{ \mathbf{0}\})$.
The three multisets can have different cardinalities.
Let $|X| = m$,  $|Y| = n$ and $|Z| = k$.
Then, $X = \oms \mathbf{v}_1, \mathbf{v}_2, \ldots, \mathbf{v}_m \cms$, $Y = \oms \mathbf{u}_1, \mathbf{u}_2, \ldots, \mathbf{u}_n \cms$ and $Z = \oms \mathbf{z}_1, \mathbf{z}_2, \ldots, \mathbf{z}_k \cms$.
There are $6$ different cases.
But it suffices to show that the triangle inequality holds when $|X| \geq |Y| \geq |Z|$, when $|Z| \geq |Y| \geq |X|$ and when $|X| \geq |Z| \geq |Y|$.
Proofs for the rest of the cases are similar.

\textbf{Case 1}: Suppose $|X| \geq |Y| \geq |Z|$.
Let $\pi_1^*$ denote the matching produced by the solution of the matching distance function $d_M(X,Z)$:
\begin{equation*}
    \pi_1^* = \argmin_{\pi \in \mathfrak{S}_m} \bigg[ \sum_{i=1}^k \| \mathbf{v}_{\pi(i)} - \mathbf{z}_i \| + \sum_{i=k+1}^m \| \mathbf{v}_{\pi(i)} \| \bigg]
\end{equation*}
Likewise, let $\pi_2^*$ denote the matching produced by the solution of the matching distance function $d_M(Z,Y)$:
\begin{equation*}
    \pi_2^* = \argmin_{\pi \in \mathfrak{S}_n} \bigg[ \sum_{i=1}^k \| \mathbf{u}_{\pi(i)} - \mathbf{z}_i \| + \sum_{i=k+1}^n \| \mathbf{u}_{\pi(i)} \| \bigg]
\end{equation*}
Then, we have:
\begin{align*}
    d_M(X,Z) + d_M(Z,Y) &= \sum_{i=1}^k \| \mathbf{v}_{\pi_1^*(i)} - \mathbf{z}_i \| + \sum_{i=k+1}^m \| \mathbf{v}_{\pi_1^*(i)} \| + \sum_{i=1}^k \| \mathbf{z}_i - \mathbf{u}_{\pi_2^*(i)} \| + \sum_{i=k+1}^n \| \mathbf{u}_{\pi_2^*(i)} \| \\
    &= \sum_{i=1}^k \Big[ \| \mathbf{v}_{\pi_1^*(i)} - \mathbf{z}_i \| + \| \mathbf{z}_i - \mathbf{u}_{\pi_2^*(i)} \| \Big] + \sum_{i=k+1}^n \Big[ \| \mathbf{v}_{\pi_1^*(i)} \| + \| - \mathbf{u}_{\pi_2^*(i)} \| \Big] + \sum_{i=n+1}^m \| \mathbf{v}_{\pi_1^*(i)} \| \\
    &\geq \sum_{i=1}^k \| \mathbf{v}_{\pi_1^*(i)} - \mathbf{u}_{\pi_2^*(i)} \| + \sum_{i=k+1}^n \| \mathbf{v}_{\pi_1^*(i)} - \mathbf{u}_{\pi_2^*(i)} \| + \sum_{i=n+1}^m \| \mathbf{v}_{\pi_1^*(i)} \| \\
    &= \sum_{i=1}^n \| \mathbf{v}_{\pi_1^*(i)} - \mathbf{u}_{\pi_2^*(i)} \| + \sum_{i=n+1}^m \| \mathbf{v}_{\pi_1^*(i)} \| \\
    &\geq \min_{\pi \in \mathfrak{S}_m} \bigg[ \sum_{i=1}^n \| \mathbf{v}_{\pi(i)} - \mathbf{u}_i \| + \sum_{i=n+1}^m \| \mathbf{v}_{\pi(i)} \| \bigg] \\
    &= d_M(X, Y)
\end{align*}

\textbf{Case 2}: Suppose $|X| \geq |Z| \geq |Y|$.
Let $\pi_1^*$ denote the matching produced by the solution of the matching distance function $d_M(X,Z)$:
\begin{equation*}
    \pi_1^* = \argmin_{\pi \in \mathfrak{S}_m} \bigg[ \sum_{i=1}^k \| \mathbf{v}_{\pi(i)} - \mathbf{z}_i \| + \sum_{i=k+1}^m \| \mathbf{v}_{\pi(i)} \| \bigg]
\end{equation*}
Likewise, let $\pi_2^*$ denote the matching produced by the solution of the matching distance function $d_M(Z,Y)$:
\begin{equation*}
    \pi_2^* = \argmin_{\pi \in \mathfrak{S}_k} \bigg[ \sum_{i=1}^n \| \mathbf{z}_{\pi(i)} - \mathbf{u}_i \| + \sum_{i=n+1}^k \| \mathbf{z}_{\pi(i)} \| \bigg]
\end{equation*}
Then, we have:
\begin{align*}
    d_M(X,Z) + d_M(Z,Y) &= \sum_{i=1}^k \| \mathbf{v}_{\pi_1^*(i)} - \mathbf{z}_i \| + \sum_{i=k+1}^m \| \mathbf{v}_{\pi_1^*(i)} \| + \sum_{i=1}^n \| \mathbf{z}_{\pi_2^*(i)} - \mathbf{u}_i \| + \sum_{i=n+1}^k \| \mathbf{z}_{\pi_2^*(i)} \| \\
    &= \sum_{i=1}^n \Big[ \| \mathbf{v}_{\pi_1^*(\pi_2^*(i))} - \mathbf{z}_{\pi_2^*(i)} \| + \| \mathbf{z}_{\pi_2^*(i)} - \mathbf{u}_i \| \Big] \\
    &\qquad \qquad \qquad + \sum_{i=n+1}^k \Big[ \| \mathbf{v}_{\pi_1^*(\pi_2^*(i))} - \mathbf{z}_{\pi_2^*(i)}\| + \| \mathbf{z}_{\pi_2^*(i)} \| \Big] + \sum_{i=k+1}^m \| \mathbf{v}_{\pi_1^*(i)} \| \\
    &\geq \sum_{i=1}^n \| \mathbf{v}_{\pi_1^*(\pi_2^*(i))} - \mathbf{u}_i \| + \sum_{i=n+1}^k \| \mathbf{v}_{\pi_1^*(\pi_2^*(i))} \| + \sum_{i=k+1}^m \| \mathbf{v}_{\pi_1^*(i)} \| \\
    &\geq \min_{\pi \in \mathfrak{S}_m} \bigg[ \sum_{i=1}^n \| \mathbf{v}_{\pi(i)} - \mathbf{u}_i \| + \sum_{i=n+1}^m \| \mathbf{v}_{\pi(i)} \| \bigg] \\
    &= d_M(X, Y)
\end{align*}

\textbf{Case 3}: Suppose $|Z| \geq |Y| \geq |X|$.
Let $\pi_1^*$ denote the matching produced by the solution of the matching distance function $d_M(X,Z)$:
\begin{equation*}
    \pi_1^* = \argmin_{\pi \in \mathfrak{S}_k} \sum_{i=1}^m \bigg[ \| \mathbf{v}_i - \mathbf{z}_{\pi(i)} \| + \sum_{i=m+1}^k \| \mathbf{z}_{\pi(i)} \| \bigg]
\end{equation*}
Likewise, let $\pi_2^*$ denote the matching produced by the solution of the matching distance function $d_M(Z, Y)$:
\begin{equation*}
    \pi_2^* = \argmin_{\pi \in \mathfrak{S}_k} \bigg[ \sum_{i=1}^n \| \mathbf{u}_i - \mathbf{z}_{\pi(i)} \| + \sum_{i=n+1}^k \| \mathbf{z}_{\pi(i)} \| \bigg]
\end{equation*}
Then, we have:
\begin{equation*}
        d_M(X,Z) + d_M(Z,Y) = \sum_{i=1}^m \| \mathbf{v}_i - \mathbf{z}_{\pi_1^*(i)} \| + \sum_{i=m+1}^k \| \mathbf{z}_{\pi_1^*(i)} \| + \sum_{i=1}^n \| \mathbf{z}_{\pi_2^*(i)} - \mathbf{u}_i \| + \sum_{i=n+1}^k \| \mathbf{z}_{\pi_2^*(i)} \|
\end{equation*}
For each $i \in [k]$, there exists a single $j \in [k]$ such that $\pi_1^*(i) = \pi_2^*(j)$.
For each $i,j \in [k]$ with $\pi_1^*(i) = \pi_2^*(j)$ one of the following holds:
\begin{enumerate}
    \item $\| \mathbf{v}_i - \mathbf{z}_{\pi_1^*(i)} \| + \| \mathbf{z}_{\pi_2^*(j)} - \mathbf{u}_j \| \geq \| \mathbf{v}_i - \mathbf{u}_j \|$ if $i \leq m$ and $j \leq n$
    \item $\| \mathbf{v}_i - \mathbf{z}_{\pi_1^*(i)} \| + \| \mathbf{z}_{\pi_2^*(j)} \| \geq \| \mathbf{v}_i \|$ if $i \leq m$ and $j > n$
    \item $\| \mathbf{z}_{\pi_1^*(i)} \| + \| \mathbf{z}_{\pi_2^*(j)} - \mathbf{u}_j \| \geq \|\mathbf{u}_j \|$ if $i > m$ and $j \leq n$
    \item $\| \mathbf{z}_{\pi_1^*(i)} \| + \| \mathbf{z}_{\pi_2^*(j)} \| \geq 0$ if $i > m$ and $j > n$
\end{enumerate}
Note that $d_M(X,Z) + d_M(Z,Y)$ can be written as a sum of $k$ terms, where each term corresponds to one of the above $4$ sums of norms.
If we take pairs of terms of types $2$ and $3$ and we sum them, we have that:
\begin{equation*}
    \| \mathbf{v}_i - \mathbf{z}_{\pi_1^*(i)} \| + \| \mathbf{z}_{\pi_2^*(j)} \| + \| \mathbf{z}_{\pi_1^*(i)} \| + \| \mathbf{z}_{\pi_2^*(j)} - \mathbf{u}_j \| \geq \| \mathbf{v}_i \| + \|\mathbf{u}_j \| = \| \mathbf{v}_i \| + \|- \mathbf{u}_j \| \geq \| \mathbf{v}_i - \mathbf{u}_j \|
\end{equation*}
Note also that type $2$ occurs $m - n$ times more than type $3$.
Therefore, using the inequalities for the $4$ types of sums of norms above, we have:
\begin{align*}
    d_M(X,Z) + d_M(Z,Y) &\geq \min_{\pi \in \mathfrak{S}_m} \bigg[ \sum_{i=1}^n \| \mathbf{v}_{\pi(i)} - \mathbf{u}_{\pi(i)} \| + \sum_{i=n+1}^m \| \mathbf{v}_{\pi(i)} \| \bigg] \\
    &= d_M(X, Y)
\end{align*}

In case $\mathbf{0}$ can be an element of the multisets, there exist $X, Y \in \mathcal{S}(\mathbb{R}^d)$ with $X \neq Y$ such that $d_M(X, Y) = 0$, \ie the distance between two distinct points can be equal to $0$.
The rest of the properties still hold, and thus the matching distance in a pseudometric on $\mathcal{S}(\mathbb{R}^d)$.

\subsection{Proof of Proposition~\ref{prop:prop2}}\label{sec:proof_prop2}
If $|X| = |Y| = M$, the second and third constraints of the optimization problem that needs to be solved to compute $d_\text{EMD}(X,Y)$ become as follows:
\begin{equation*}
    \begin{split}
        &\sum_{j=1}^M [\mathbf{F}]_{ij} = \frac{1}{M}, \quad 1 \leq i \leq M \qquad \text{and} \qquad \sum_{i=1}^M [\mathbf{F}]_{ij} = \frac{1}{M}, \quad 1 \leq j \leq M \\
    \end{split}
\end{equation*}
Therefore, matrix $\mathbf{F}$ is a doubly stochastic matrix.
The Birkhoff-von Neumann Theorem states that the set of $M \times M$ doubly stochastic matrices forms a convex polytope whose vertices are the $M \times M$ permutation matrices.
Furthermore, it is known that the optimal value of a linear objective in a nonempty polytope is attained at a vertex of the polytope~\citep{bertsimas1997introduction}.
The optimal solution would thus be a permutation matrix $\mathbf{P} \in \boldsymbol{\Pi}_M$ scaled by $1/M$.
Let also $\pi \in \mathfrak{S}_M$ denote the permutation that is associated with that matrix.
Therefore, we have that:
\begin{equation*}
    \begin{split}
      M d_\text{EMD}(X,Y) &= M \min_{\mathbf{F} \in \mathcal{B}_M} \sum_{i=1}^M \sum_{j=1}^M [\mathbf{F}]_{ij} \, \| \mathbf{v}_i - \mathbf{u}_j\|_2 \\
      &= M \min_{\mathbf{P} \in \boldsymbol{\Pi}_M} \sum_{i=1}^M \sum_{j=1}^M \frac{1}{M} [\mathbf{P}]_{ij} \, \| \mathbf{v}_i - \mathbf{u}_j\|_2 \\ 
      &= \min_{\pi \in \mathfrak{S}_M} \sum_{i=1}^{M} \| \mathbf{v}_{\pi(i)} - \mathbf{u}_i \|_2 \\
      &= d_M(X,Y)
    \end{split}
\end{equation*}

\subsection{Proof of Theorem~\ref{thm:thm1}}\label{sec:proof_thm1}

\subsubsection{The \textsc{mean} function is Lipschitz continuous with respect to EMD}\label{sec:mean_emd}
Let $X=\oms \mathbf{v}_1, \mathbf{v}_2, \ldots, \mathbf{v}_m \cms$ and $Y=\oms \mathbf{u}_1, \mathbf{u}_2, \ldots, \mathbf{u}_n \cms$ be two multisets, consisting of $m$ and $n$ vectors of dimension $d$, respectively.
Let also $\mathbf{F}^*$ denote the matrix that minimizes $d_{\text{EMD}}(X,Y)$.
Then, we have that:
\begin{align*}
    \Big\| f_\textsc{mean}(X) - f_\textsc{mean}(Y) \Big\| &= \bigg\| \frac{1}{m} \sum_{i=1}^m \mathbf{v}_i - \frac{1}{n} \sum_{j=1}^n \mathbf{u}_j \bigg\| \\
    &= \bigg\| \sum_{i=1}^m \frac{1}{m} \mathbf{v}_i - \sum_{j=1}^n \frac{1}{n} \mathbf{u}_j \bigg\| \\
    &= \bigg\| \sum_{i=1}^m \bigg(\sum_{j=1}^n [\mathbf{F}^*]_{ij} \bigg) \mathbf{v}_i - \sum_{j=1}^n \bigg(\sum_{i=1}^m [\mathbf{F}^*]_{ij} \bigg)  \mathbf{u}_j \bigg\| \\
    &= \bigg\| \sum_{i=1}^m \sum_{j=1}^n [\mathbf{F}^*]_{ij} (\mathbf{v}_i - \mathbf{u}_j) \bigg\| \\
    &\leq \sum_{i=1}^m \sum_{j=1}^n \| [\mathbf{F}^*]_{ij} (\mathbf{v}_i - \mathbf{u}_j) \| \\
    &= \sum_{i=1}^m \sum_{j=1}^n [\mathbf{F}^*]_{ij} \| (\mathbf{v}_i - \mathbf{u}_j) \| \\
    &= d_\text{EMD}(X,Y) 
\end{align*}
The \textsc{mean} function is thus Lipschitz continuous with respect to EMD and the Lipschitz constant is equal to $1$.

\subsubsection{The \textsc{mean} function is not Lipschitz continuous with respect to the matching distance}
Suppose that the \textsc{mean} function is Lipschitz continuous with respect to the matching distance.
Let $L > 0$ be given.
Let also $\epsilon > 0$ and $c > (2L+1)\epsilon$.
Let $X=\{ \mathbf{v}_1, \mathbf{v}_2 \}$, $Y=\{ \mathbf{u}_1 \}$ be two multisets, consisting of $2$ and $1$ vectors of dimension $d$, respectively.
Then, we set $\mathbf{v}_1=\mathbf{u}_1 = (c,c,\ldots,c)^\top$, and $\mathbf{v}_2=(\epsilon, \epsilon,\ldots,\epsilon)^\top$.
Clearly, we have that $d_M(X, Y) = \| \mathbf{v}_2 \| = \sqrt{d} \epsilon$.
We also have that:
\begin{align*}
    \Big\| f_\textsc{mean}(X) - f_\textsc{mean}(Y) \Big\| &= \bigg\| \frac{1}{2} \sum_{i=1}^2 \mathbf{v}_i - \mathbf{u}_1 \bigg\| \\
    &= \bigg\| \frac{1}{2}\mathbf{v}_1 + \frac{1}{2} \mathbf{v}_2 - \mathbf{u}_1 \bigg\| \\
    &= \bigg\| \frac{1}{2} (c,c,\ldots,c)^\top + \frac{1}{2} (\epsilon, \epsilon,\ldots,\epsilon)^\top - (c,c,\ldots,c)^\top \bigg\| \\
    &= \bigg\| \Big(\frac{\epsilon-c}{2},\frac{\epsilon-c}{2},\ldots,\frac{\epsilon-c}{2} \Big)^\top \bigg\| \\
    &= \frac{1}{2} \| (\epsilon-c,\epsilon-c,\ldots,\epsilon-c )^\top \| \\
    &= \frac{1}{2} \sqrt{\underbrace{(\epsilon-c)^2 + (\epsilon-c)^2 + \ldots + (\epsilon-c)^2}_{d \text{ times}}} \\
    &= \frac{1}{2} \sqrt{d} (c - \epsilon) \\
    &> \frac{1}{2} \sqrt{d} \big( (2L+1)\epsilon - \epsilon \big)\\
    &= L \sqrt{d} \epsilon \\
    &= L \, d_M(X, Y)
\end{align*}
Therefore, for any $L > 0$, there exist $X, Y \in \mathcal{S}(\mathbb{R}^d)$ such that $\| f_\textsc{mean}(X) - f_\textsc{mean}(Y) \| > L \, d_M(X,Y)$, which is a contradiction.
Thus, the \textsc{mean} function is not Lipschitz continuous with respect to the matching distance.

\subsubsection{The \textsc{mean} function is not Lipschitz continuous with respect to the Hausdorff distance}
Suppose that the \textsc{mean} function is Lipschitz continuous with respect to the Hausdorff distance.
Let $L > 0$ be given.
Let also $\epsilon > 0$ and $c > 3L\epsilon$.
Let $X=\{ \mathbf{v}_1, \mathbf{v}_2, \mathbf{v}_3 \}$, $Y=\{ \mathbf{u}_1, \mathbf{u}_2 \}$ be two multisets, consisting of $3$ and $2$ vectors of dimension $d$, respectively.
Then, we set $\mathbf{v}_1=\mathbf{u}_1 = (-c,-c,\ldots,-c)^\top$, $\mathbf{v}_2=\mathbf{u}_2 = (c,c,\ldots,c)^\top$ and $\mathbf{v}_3=(c+\epsilon, c+\epsilon,\ldots,c+\epsilon)^\top$.
Clearly, we have that $d_H(X, Y) = \max_{i \in [3]} \min_{j \in [2]} \| \mathbf{v}_i - \mathbf{u}_j \| = \| \mathbf{v}_3 - \mathbf{u}_2 \| = \sqrt{d} \epsilon$.
We also have that:
\begin{align*}
    \Big\| f_\textsc{mean}(X) - f_\textsc{mean}(Y) \Big\| &= \bigg\| \frac{1}{3} \sum_{i=1}^3 \mathbf{v}_i - \frac{1}{2} \sum_{j=1}^2 \mathbf{u}_j \bigg\| \\
    &= \bigg\| \frac{1}{3}\mathbf{v}_1 + \frac{1}{3} \mathbf{v}_2 + \frac{1}{3} \mathbf{v}_3 - \frac{1}{2} \mathbf{u}_1 - \frac{1}{2} \mathbf{u}_2 \bigg\| \\
    &= \bigg\| \frac{1}{3} \mathbf{v}_3 \bigg\| \\
    &= \frac{1}{3} \sqrt{(c+\epsilon)^2 + (c+\epsilon)^2 + \ldots + (c+\epsilon)^2} \\
    &= \frac{1}{3} \sqrt{d (c+\epsilon)^2} \\
    &= \frac{1}{3} \sqrt{d} (c+\epsilon) \\
    &> \frac{1}{3} \sqrt{d} (3L\epsilon+\epsilon) \\
    &> L \sqrt{d} \epsilon \\
    &= L \, d_H(X, Y)
\end{align*}
For any $L > 0$, there exist $X, Y \in \mathcal{S}(\mathbb{R}^d)$ such that $\| f_\textsc{mean}(X) - f_\textsc{mean}(Y) \| > L \, d_H(X,Y)$.
We have thus reached a contradiction.
Therefore, the \textsc{mean} function is not Lipschitz continuous with respect to the Hausdorff distance.

\subsubsection{The \textsc{sum} function is not Lipschitz continuous with respect to EMD}
Suppose that the \textsc{sum} function is Lipschitz continuous with respect to EMD.
Let $L > 0$ be given.
Then, let $m = \lfloor L+1 \rfloor$.
Let also $X=\oms \mathbf{v}_1,\ldots, \mathbf{v}_m \cms$, $Y=\oms \mathbf{u}_1,\ldots, \mathbf{u}_m \cms$ be two multisets, each consisting of $m$ vectors.
We construct the two multisets such that $\mathbf{v}_1=\mathbf{u}_1,\mathbf{v}_2=\mathbf{u}_2,\ldots,\mathbf{v}_{m-1}=\mathbf{u}_{m-1}$, and such that $\mathbf{v}_1 + \ldots +\mathbf{v}_{m-1} = \mathbf{u}_1 + \ldots +\mathbf{u}_{m-1} = 0$.
Let also $\| \mathbf{v}_m - \mathbf{u}_m \| = m$.
We already showed in subsection~\ref{sec:mean_emd} that the distance between the mean vectors of two multisets of vectors is a lower bound on the EMD between them.
Therefore, we have that:
\begin{align*}  
    \sum_{i=1}^m \sum_{j=1}^m [\mathbf{F}]_{ij} \| \mathbf{v}_i - \mathbf{u}_j \| &\geq \bigg\| \frac{1}{m} \sum_{i=1}^m \mathbf{v}_i - \frac{1}{m} \sum_{j=1}^m \mathbf{u}_j \bigg\| \\
    &= \frac{1}{m}\| (\mathbf{v}_1 - \mathbf{u}_1) + (\mathbf{v}_2 - \mathbf{u}_2) + \ldots + (\mathbf{v}_m - \mathbf{u}_m) \| \\
    &= \frac{1}{m} \| \mathbf{v}_m - \mathbf{u}_m \| \\
    &= \frac{1}{m}m = 1
\end{align*}
We can achieve the lower bound if we set the values of $\mathbf{F}$ as follows:
\begin{equation*}
    [\mathbf{F}^*]_{ij} = 
    \begin{cases}
        \frac{1}{m} & \text{if } i = j\\
        0 & \text{if } i \neq j\\
    \end{cases}
\end{equation*}
Therefore, the EMD between $X$ and $Y$ is equal to $1$.
Then, we have that:
\begin{align*}
    \Big\| f_\textsc{sum}(X) - f_\textsc{sum}(Y) \Big\| &= \bigg\| \sum_{i=1}^m \mathbf{v}_i - \sum_{j=1}^m \mathbf{u}_j \bigg\| \\
    &= \| (\mathbf{v}_1 - \mathbf{u}_1) + (\mathbf{v}_2 - \mathbf{u}_2) + \ldots + (\mathbf{v}_m - \mathbf{u}_m) \| \\
    &= \| \mathbf{v}_m - \mathbf{u}_m \| \\
    &= m \cdot 1 \\
    &= m \sum_{i=1}^m \sum_{j=1}^m [\mathbf{F}^*]_{ij} \| \mathbf{v}_i - \mathbf{u}_j \| \\
    &> L \sum_{i=1}^m \sum_{j=1}^m [\mathbf{F}^*]_{ij} \| \mathbf{v}_i - \mathbf{u}_j \| \\
    &= L \, d_{\text{EMD}}(X, Y)
\end{align*}
Therefore, for any $L > 0$, there exist $X, Y \in \mathcal{S}(\mathbb{R}^d)$ such that $\| f_\textsc{sum}(X) - f_\textsc{sum}(Y) \| > L \, d_{\text{EMD}}(X, Y)$.
We have thus arrived at a contradiction, and the \textsc{sum} function is not Lipschitz continuous with respect to EMD.

\subsubsection{The \textsc{sum} function is Lipschitz continuous with respect to the matching distance}
Let $X=\oms \mathbf{v}_1, \mathbf{v}_2, \ldots, \mathbf{v}_m \cms$ and $Y=\oms \mathbf{u}_1, \mathbf{u}_2, \ldots, \mathbf{u}_n \cms$ be two multisets, consisting of $m$ and $n$ vectors of dimension $d$, respectively.
Without loss of generality, we assume that $m > n$.
Let $\pi^*$ denote the matching produced by the solution of the matching distance function:
\begin{equation*}
    \pi^* = \argmin_{\pi \in \mathfrak{S}_m} \bigg[ \sum_{i=1}^n \| \mathbf{v}_{\pi(i)} - \mathbf{u}_i \| + \| \mathbf{v}_{\pi(n+1)} \| + \ldots + \| \mathbf{v}_{\pi(m)} \| \bigg] \\
\end{equation*}
Then, we have that:
\begin{align*}
    \Big\| f_\textsc{sum}(X) - f_\textsc{sum}(Y) \Big\| &= \bigg\| \sum_{i=1}^m \mathbf{v}_i - \sum_{j=1}^n \mathbf{u}_j \bigg\| \\
    &= \| (\mathbf{v}_{\pi^*(1)} - \mathbf{u}_1) + \ldots + (\mathbf{v}_{\pi^*(n)} - \mathbf{u}_n) + \mathbf{v}_{\pi^*(n+1)} + \ldots + \mathbf{v}_{\pi^*(m)}\| \\
    &\leq \| \mathbf{v}_{\pi^*(1)} - \mathbf{u}_1 \| + \ldots + \| \mathbf{v}_{\pi^*(n)} - \mathbf{u}_n \| + \| \mathbf{v}_{\pi^*(n+1)} \| + \ldots + \| \mathbf{v}_{\pi^*(m)} \| \\
    &= \sum_{i=1}^n \| \mathbf{v}_{\pi^*(i)} - \mathbf{u}_i \| + \| \mathbf{v}_{\pi^*(n+1)} \| + \ldots + \| \mathbf{v}_{\pi^*(m)} \| \\
    &= \min_{\pi \in \mathfrak{S}_m} \bigg[ \sum_{i=1}^n \| \mathbf{v}_{\pi(i)} - \mathbf{u}_i \| + \| \mathbf{v}_{\pi(n+1)} \| + \ldots + \| \mathbf{v}_{\pi(m)} \| \bigg] \\
    &= d_M(X, Y)
\end{align*}
which concludes the proof.
The \textsc{sum} function is thus Lipschitz continuous with respect to the matching distance and the Lipschitz constant is equal to $1$.

\subsubsection{The \textsc{sum} function is not Lipschitz continuous with respect to the Hausdorff distance}
Suppose that the \textsc{sum} function is Lipschitz continuous with respect to the Hausdorff distance.
Let $L > 0$ be given.
Let also $\epsilon > 0$ and $c > L\epsilon$.
Let $X=\{ \mathbf{v}_1, \mathbf{v}_2 \}$, $Y=\{ \mathbf{u}_1 \}$ be two multisets, consisting of $2$ and $1$ vectors of dimension $d$, respectively.
Then, we set $\mathbf{v}_1=\mathbf{u}_1 = (c,c,\ldots,c)^\top$ and $\mathbf{v}_2=(c+\epsilon, c+\epsilon,\ldots,c+\epsilon)^\top$.
Clearly, we have that $d_H(X, Y) = \max_{i \in [2]} \min_{j \in [1]} \| \mathbf{v}_i - \mathbf{u}_j \| = \| \mathbf{v}_2 - \mathbf{u}_1 \| = \sqrt{d} \epsilon$.
We also have that:
\begin{align*}
    \Big\| f_\textsc{sum}(X) - f_\textsc{sum}(Y) \Big\| &= \bigg\| \sum_{i=1}^2 \mathbf{v}_i - \mathbf{u}_1 \bigg\| \\
    &= \| \mathbf{v}_1 + \mathbf{v}_2 - \mathbf{u}_1 \| \\
    &= \| \mathbf{v}_2 \| \\
    &= \sqrt{(c+\epsilon)^2 + (c+\epsilon)^2 + \ldots + (c+\epsilon)^2} \\
    &= \sqrt{d (c+\epsilon)^2} \\
    &= \sqrt{d} (c+\epsilon) \\
    &> \sqrt{d} ( L\epsilon + \epsilon )\\
    &> L \sqrt{d} \epsilon \\
    &= L \, d_H(X, Y)
\end{align*}
Therefore, for any $L > 0$, there exist $X, Y \in \mathcal{S}(\mathbb{R}^d)$ such that $\| f_\textsc{sum}(X) - f_\textsc{sum}(Y) \| > L \, d_H(X, Y)$, which is a contradiction.
Therefore, the \textsc{sum} function is also not Lipschitz continuous with respect to the Hausdorff distance.

\subsubsection{The \textsc{max} function is not Lipschitz continuous with respect to EMD}
Suppose that the \textsc{max} function is Lipschitz continuous with respect to EMD.
Let $L > 0$ be given.
Then, let $m = \lfloor L+1 \rfloor$.
Let also $X=\oms \mathbf{v}_1,\ldots, \mathbf{v}_m \cms$, $Y=\oms \mathbf{u}_1,\ldots, \mathbf{u}_m \cms$ be two multisets, each consisting of $m$ $d$-dimensional vectors.
We construct the two sets such that $\mathbf{v}_1=\mathbf{u}_1,\mathbf{v}_2=\mathbf{u}_2,\ldots,\mathbf{v}_{m-1}=\mathbf{u}_{m-1}$, and such that $\mathbf{v}_1 + \ldots +\mathbf{v}_{m-1} = \mathbf{u}_1 + \ldots +\mathbf{u}_{m-1} = 0$.
Suppose that the elements of vectors $\mathbf{v}_m$, $\mathbf{u}_m$ are larger than those of all other vectors of $X$ and $Y$, respectively.
Therefore, we have that $[\mathbf{v}_m]_k \geq [\mathbf{v}_i]_k$, $\forall i \in [m]$ and $k \in [d]$.
We also have that $[\mathbf{u}_m]_k \geq [\mathbf{u}_j]_k$, $\forall j \in [m]$ and $k \in [d]$.
Let also $\| \mathbf{v}_m - \mathbf{u}_m \| = 1$.
We already showed in subsection~\ref{sec:mean_emd} that the distance between the mean vectors of two multisets of vectors is a lower bound on the EMD between them.
Therefore, we have that:
\begin{align*} 
    \sum_{i=1}^m \sum_{j=1}^m [\mathbf{F}]_{ij} \| \mathbf{v}_i - \mathbf{u}_j \| &\geq \bigg\| \frac{1}{m} \sum_{i=1}^m \mathbf{v}_i - \frac{1}{m} \sum_{j=1}^m \mathbf{u}_j \bigg\| \\
    &= \frac{1}{m}\| (\mathbf{v}_1 - \mathbf{u}_1) + (\mathbf{v}_2 - \mathbf{u}_2) + \ldots + (\mathbf{v}_m - \mathbf{u}_m) \| \\
    &= \frac{1}{m} \| \mathbf{v}_m - \mathbf{u}_m \| \\
    &= \frac{1}{m}
\end{align*}
We can achieve the lower bound if we set the values of $\mathbf{F}$ as follows:
\begin{equation*}
    [\mathbf{F}^*]_{ij} = 
    \begin{cases}
        \frac{1}{m} & \text{if } i = j\\
        0 & \text{if } i \neq j\\
    \end{cases}
\end{equation*}
Therefore, the EMD between $X$ and $Y$ is equal to $1/m$.
Then, we have that
\begin{align*}
    \Big\| f_\textsc{max}(X) - f_\textsc{max}(Y) \Big\| &= \| \mathbf{v}_m - \mathbf{u}_m \| \\
    &= m \cdot \frac{1}{m} \\
    &= m \sum_{i=1}^m \sum_{j=1}^m [\mathbf{F}^*]_{ij} \| \mathbf{v}_i - \mathbf{u}_j \| \\
    &> L \sum_{i=1}^m \sum_{j=1}^m [\mathbf{F}^*]_{ij} \| \mathbf{v}_i - \mathbf{u}_j \| \\
    &= L \, d_\text{EMD}(X,Y) \\
\end{align*}
Therefore, for any $L > 0$, there exist $X, Y \in \mathcal{S}(\mathbb{R}^d)$ such that $\| f_\textsc{max}(X) - f_\textsc{max}(Y) \| > L \, d_\text{EMD}(X,Y)$, which is a contradiction.
Therefore, the \textsc{max} function is not Lipschitz continuous with respect to EMD.

\subsubsection{The \textsc{max} function is not Lipschitz continuous with respect to the matching distance}
Let $L > 0$ be given.
Let also $\epsilon > 0$ and $c > L\epsilon$.
Let $X=\{ \mathbf{v}_1, \mathbf{v}_2 \}$, $Y=\{ \mathbf{u}_1 \}$ be two multisets, consisting of $2$ and $1$ vectors of dimension $d$, respectively.
Then, we set $\mathbf{v}_1=\mathbf{u}_1 = (-c,-c,\ldots,-c)^\top$, and $\mathbf{v}_2=(\epsilon, \epsilon,\ldots,\epsilon)^\top$.
Clearly, we have that $d_M(X, Y) = \| \mathbf{v}_2 \| = \sqrt{d} \epsilon$.
Let also $\textbf{v}_\text{max}$ and $\textbf{u}_\text{max}$ denote the vectors that emerge after applying max pooling across all points of $X$ and $Y$, respectively.
We also have that:
\begin{align*}
    \Big\| f_\textsc{max}(X) - f_\textsc{max}(Y) \Big\| &= \| \mathbf{v}_\text{max} - \mathbf{u}_\text{max} \| \\
    &= \| \mathbf{v}_\text{max} - \mathbf{u}_1 \| \\
    &= \Big\| \big( \max(-c, \epsilon), \max(-c, \epsilon),\ldots, \max(-c, \epsilon) \big)^\top - (-c,-c,\ldots,-c)^\top \Big\| \\
    &= \| (\epsilon,\epsilon,\ldots,\epsilon)^\top - (-c,-c,\ldots,-c)^\top \| \\
    &= \sqrt{\underbrace{(\epsilon+c)^2 + (\epsilon+c)^2 + \ldots + (\epsilon+c)^2}_{d \text{ times}}} \\
    &= \sqrt{d} (c + \epsilon) \\
    &> \sqrt{d} ( L\epsilon + \epsilon )\\
    &> L \sqrt{d} \epsilon \\
    &= L d_M(X, Y)
\end{align*}
For any $L > 0$, there exist $X, Y \in \mathcal{S}(\mathbb{R}^d)$ such that $\| f_\textsc{max}(X) - f_\textsc{max}(Y) \| > L \, d_M(X, Y)$.
Based on the above inequality, the \textsc{max} function is not Lipschitz continuous with respect to the matching distance.

\subsubsection{The \textsc{max} function is Lipschitz continuous with respect to the Hausdorff distance}
Let $X = \oms \mathbf{v}_1, \mathbf{v}_2, \ldots, \mathbf{v}_m \cms$ and $Y = \oms \mathbf{u}_1, \mathbf{u}_2, \ldots, \mathbf{u}_n \cms$ denote two multisets of vectors.
Let also $\textbf{v}_\text{max}$ and $\textbf{u}_\text{max}$ denote the vectors that emerge after applying max pooling across all points of $X$ and $Y$, respectively.
We will show that $\forall k \in [d]$, we have that $\big| [\mathbf{v}_\text{max}]_k - [\mathbf{u}_\text{max}]_k \big| \leq d_H(X, Y)$.
By contradiction, we assume that that there is some $k \in [d]$ such that $\big| [\mathbf{v}_\text{max}]_k - [\mathbf{u}_\text{max}]_k \big| > d_H(X, Y)$.
Without loss of generality, we also assume that $[\mathbf{v}_\text{max}]_k \geq [\mathbf{u}_\text{max}]_k$, and therefore $\big| [\mathbf{v}_\text{max}]_k - [\mathbf{u}_\text{max}]_k \big| = [\mathbf{v}_\text{max}]_k - [\mathbf{u}_\text{max}]_k$.

Since $[\mathbf{u}_\text{max}]_k \geq [\mathbf{u}_j]_k$, $\forall j \in [n]$, we have that $[\mathbf{v}_\text{max}]_k - [\mathbf{u}_\text{max}]_k \leq [\mathbf{v}_\text{max}]_k - [\mathbf{u}_j]_k$, $\forall j \in [n]$.
Since by our assumption above, $\big| [\mathbf{v}_\text{max}]_k - [\mathbf{u}_\text{max}]_k \big| = [\mathbf{v}_\text{max}]_k - [\mathbf{u}_\text{max}]_k > d_H(X, Y)$, it follows that
\begin{equation}
    [\mathbf{v}_\text{max}]_k - [\mathbf{u}_j]_k > d_H(X, Y), \quad \forall j \in [n]
    \label{eq:eq1}
\end{equation}
Note that there is at least one vector $\mathbf{v}_i \in X$ such that $[\mathbf{v}_i]_k = [\mathbf{v}_\text{max}]_k$.
From~\eqref{eq:eq1}, we have for this vector that $[\mathbf{v}_i]_k - [\mathbf{u}_j]_k > d_H(X, Y)$, $\forall j \in [n]$.
Then, we have $\| \mathbf{v}_i - \mathbf{u}_j \| = \sqrt{\big([\mathbf{v}_i]_1 - [\mathbf{u}_j]_1 \big)^2 + \ldots + \big([\mathbf{v}_i]_k - [\mathbf{u}_j]_k \big)^2 + \ldots + \big([\mathbf{v}_i]_d - [\mathbf{u}_j]_d \big)^2} \geq \sqrt{\big([\mathbf{v}_i]_k - [\mathbf{u}_j]_k \big)^2} = [\mathbf{v}_i]_k - [\mathbf{u}_j]_k > d_H(X, Y)$, $\forall j \in [n]$.
We thus have that $\min_{j \in [n]} \| \mathbf{v}_i - \mathbf{u}_j \| > d_H(X,Y)$ which is a contradiction since $\min_{j \in [n]} \| \mathbf{v}_i - \mathbf{u}_j \| \leq \max_{\imath \in [m]} \min_{j \in [n]} \| \mathbf{v}_\imath - \mathbf{u}_j \| = h(X, Y) \leq d_H(X,Y)$.
Therefore, we have that $\big| [\mathbf{v}_\text{max}]_k - [\mathbf{u}_\text{max}]_k \big| \leq d_H(X, Y)$.

Since $k$ was arbitrary, the above inequality holds for all $k \in [d]$.
We thus have 
\begin{align*}
    \Big\| f_\textsc{max}(X) - f_\textsc{max}(Y) \Big\| &= \| \mathbf{v}_\text{max} - \mathbf{u}_\text{max} \| \\
    &= \sqrt{\big([\mathbf{v}_\text{max}]_1 - [\mathbf{u}_\text{max}]_1 \big)^2 + \big([\mathbf{v}_\text{max}]_2 - [\mathbf{u}_\text{max}]_2 \big)^2 + \ldots + \big([\mathbf{v}_\text{max}]_d - [\mathbf{u}_\text{max}]_d \big)^2} \\
    &\leq \sqrt{\underbrace{\big( d_H(X, Y) \big)^2 + \big( d_H(X, Y) \big)^2 + \ldots + \big( d_H(X, Y) \big)^2}_{d \text{ times}}} \\
    &= \sqrt{d \big( d_H(X, Y) \big)^2} \\
    &= \sqrt{d} \, \, d_H(X, Y)
\end{align*}
which concludes the proof.
Therefore, The \textsc{max} function is Lipschitz continuous with respect to the Hausdorff distance and the Lipschitz constant is equal to $\sqrt{d}$.

\subsection{Proof of Lemma~\ref{lem:lem1}}\label{sec:proof_lem1}

\subsubsection{The \textsc{mean} function is Lipschitz continuous with respect to the matching distance}
Let $\mathcal{X}$ denote a set that contains multisets of vectors of equal cardinalities, \ie $|X| = M$ and $X \in \mathcal{S}_M(\mathbb{R}^d)$, $\forall X \in \mathcal{X}$ where $M \in \mathbb{N}$.
Let $X, Y \in \mathcal{X}$ denote two multisets.
By Proposition~\ref{prop:prop2}, we have that $d_M(X, Y) = M d_\text{EMD}(X,Y)$.
By Theorem~\ref{thm:thm1}, we have that:
\begin{align*}
    \Big\| f_\textsc{mean}(X) - f_\textsc{mean}(Y) \Big\| &\leq d_\text{EMD}(X,Y) \\
    &= \frac{1}{M} d_M(X, Y) \text{ (due to Proposition~\ref{prop:prop2})}
\end{align*}
The \textsc{mean} function restricted to inputs from set $\mathcal{X}$ is thus Lipschitz continuous with respect to the matching distance and the Lipschitz constant is equal to $\frac{1}{M}$.

\subsubsection{The \textsc{mean} function is not Lipschitz continuous with respect to the Hausdorff distance}
Let $\mathcal{X}$ denote a set that contains multisets of vectors of equal cardinalities, \ie $|X| = M$ and $X \in \mathcal{S}_M(\mathbb{R}^d)$, $\forall X \in \mathcal{X}$ where $M \in \mathbb{N}$.
Suppose that the \textsc{mean} function restricted to inputs from set $\mathcal{X}$ is Lipschitz continuous with respect to the Hausdorff distance.
Let $L > 0$ be given.
Let also $\epsilon > 0$ and $c > 3 L \epsilon$.
Let $X=\{ \mathbf{v}_1, \mathbf{v}_2, \mathbf{v}_3 \}$, $Y=\{ \mathbf{u}_1, \mathbf{u}_2, \mathbf{u}_3 \}$ be two multisets, consisting of $3$ vectors of dimension $d$, respectively.
Then, we set $\mathbf{v}_1=\mathbf{u}_1 = (-\frac{c}{2},-\frac{c}{2},\ldots,-\frac{c}{2})^\top$, $\mathbf{v}_2=\mathbf{u}_2=\big( \frac{c}{2}, \frac{c}{2},\ldots,\frac{c}{2} \big)^\top$, $\mathbf{v}_3= \big( \frac{c+\epsilon}{2},\frac{c+\epsilon}{2},\ldots,\frac{c+\epsilon}{2} \big)^\top$ and $\mathbf{u}_3= \big(-\frac{c+\epsilon}{2},-\frac{c+\epsilon}{2},\ldots,-\frac{c+\epsilon}{2} \big)^\top$
Clearly, we have that $d_H(X, Y) = \max_{i \in [3]} \min_{j \in [3]} \| \mathbf{v}_i - \mathbf{u}_j \| = \max_{j \in [3]} \min_{i \in [3]} \| \mathbf{v}_i - \mathbf{u}_j \| = \| \mathbf{v}_3 - \mathbf{u}_2 \| = \| \mathbf{v}_1 - \mathbf{u}_3 \| = \frac{\sqrt{d} \epsilon}{2}$.
We also have that:
\begin{align*}
    \Big\| f_\textsc{mean}(X) - f_\textsc{mean}(Y) \Big\| &= \bigg\| \frac{1}{3} \sum_{i=1}^3 \mathbf{v}_i - \frac{1}{3} \sum_{i=1}^3 \mathbf{u}_1 \bigg\| \\
    &= \bigg\| \frac{1}{3} (\mathbf{v}_1 - \mathbf{u}_1 + \mathbf{v}_2 - \mathbf{u}_2 + \mathbf{v}_3 - \mathbf{u}_3 ) \bigg\| \\
    &= \frac{1}{3} \| \mathbf{v}_3 - \mathbf{u}_3 \| \\
    &= \frac{1}{3} \Big\| \Big( \frac{c+\epsilon}{2}, \frac{c+\epsilon}{2},\ldots,\frac{c+\epsilon}{2} \Big)^\top - \Big(-\frac{c+\epsilon}{2},-\frac{c+\epsilon}{2},\ldots,-\frac{c+\epsilon}{2} \Big)^\top \Big\| \\
    &= \frac{1}{3} \| (c+\epsilon,c+\epsilon,\ldots,c+\epsilon)^\top \| \\
    &= \frac{1}{3} \sqrt{(c+\epsilon)^2 + (c+\epsilon)^2 + \ldots + (c+\epsilon)^2} \\
    &= \frac{1}{3} \sqrt{d (c+\epsilon)^2} \\
    &= \frac{1}{3} \sqrt{d} (c+\epsilon) \\
    &> \frac{1}{3} \sqrt{d} (3 L \epsilon + \epsilon) \\
    &> L \sqrt{d} \epsilon \\
    &> L \frac{\sqrt{d} \epsilon}{2} \\
    &= L \, d_H(X, Y)
\end{align*}
Therefore, for any $L > 0$, there exist $X, Y \in \mathcal{X}$ such that $\| f_\textsc{mean}(X) - f_\textsc{mean}(Y) \| > L \, d_H(X, Y)$, which is a contradiction.
Therefore, the \textsc{mean} function is not Lipschitz continuous with respect to the Hausdorff distance even when it is restricted to inputs from set $\mathcal{X}$.

\subsubsection{The \textsc{sum} function is Lipschitz continuous with respect to EMD}
Let $\mathcal{X}$ denote a set that contains multisets of vectors of equal cardinalities, \ie $|X| = M$ and $X \in \mathcal{S}_M(\mathbb{R}^d)$, $\forall X \in \mathcal{X}$ where $M \in \mathbb{N}$.
Let $X, Y \in \mathcal{X}$ denote two multisets.
By Proposition~\ref{prop:prop2}, we have that $d_M(X, Y) = M d_\text{EMD}(X,Y)$.
By Theorem~\ref{thm:thm1}, we have that:
\begin{align*}
    \Big\| f_\textsc{sum}(X) - f_\textsc{sum}(Y) \Big\| &\leq d_M(X,Y) \\
    &= M d_\text{EMD}(X, Y) \text{ (due to Proposition~\ref{prop:prop2})}
\end{align*}
Therefore, the \textsc{sum} function restricted to inputs from set $\mathcal{X}$ is Lipschitz continuous with respect to EMD and the Lipschitz constant is equal to $M$.

\subsubsection{The \textsc{sum} function is not Lipschitz continuous with respect to the Hausdorff distance}
Let $\mathcal{X}$ denote a set that contains multisets of vectors of equal cardinalities, \ie $|X| = M$ and $X \in \mathcal{S}_M(\mathbb{R}^d)$, $\forall X \in \mathcal{X}$ where $M \in \mathbb{N}$.
Suppose that the \textsc{sum} function restricted to inputs from set $\mathcal{X}$ is Lipschitz continuous with respect to the Hausdorff distance.
Let $L > 0$ be given.
Let also $\epsilon > 0$ and $c > L\epsilon$.
Let $X=\{ \mathbf{v}_1, \mathbf{v}_2, \mathbf{v}_3 \}$, $Y=\{ \mathbf{u}_1, \mathbf{u}_2, \mathbf{u}_3 \}$ be two multisets, consisting of $3$ vectors of dimension $d$, respectively.
Then, we set $\mathbf{v}_1=\mathbf{u}_1 = (-\frac{c}{2},-\frac{c}{2},\ldots,-\frac{c}{2})^\top$, $\mathbf{v}_2=\mathbf{u}_2=\big( \frac{c}{2}, \frac{c}{2},\ldots,\frac{c}{2} \big)^\top$, $\mathbf{v}_3= \big( \frac{c+\epsilon}{2},\frac{c+\epsilon}{2},\ldots,\frac{c+\epsilon}{2} \big)^\top$ and $\mathbf{u}_3= \big(-\frac{c+\epsilon}{2},-\frac{c+\epsilon}{2},\ldots,-\frac{c+\epsilon}{2} \big)^\top$
Clearly, we have that $d_H(X, Y) = \max_{i \in [3]} \min_{j \in [3]} \| \mathbf{v}_i - \mathbf{u}_j \| = \max_{j \in [3]} \min_{i \in [3]} \| \mathbf{v}_i - \mathbf{u}_j \| = \| \mathbf{v}_3 - \mathbf{u}_2 \| = \| \mathbf{v}_1 - \mathbf{u}_3 \| = \frac{\sqrt{d} \epsilon}{2}$.
We also have that:
\begin{align*}
    \Big\| f_\textsc{sum}(X) - f_\textsc{sum}(Y) \Big\| &= \bigg\| \sum_{i=1}^3 \mathbf{v}_i - \sum_{i=1}^3 \mathbf{u}_1 \bigg\| \\
    &= \| \mathbf{v}_1 - \mathbf{u}_1 + \mathbf{v}_2 - \mathbf{u}_2 + \mathbf{v}_3 - \mathbf{u}_3 \| \\
    &= \| \mathbf{v}_3 - \mathbf{u}_3 \| \\
    &= \Big\| \Big( \frac{c+\epsilon}{2}, \frac{c+\epsilon}{2},\ldots,\frac{c+\epsilon}{2} \Big)^\top - \Big(-\frac{c+\epsilon}{2},-\frac{c+\epsilon}{2},\ldots,-\frac{c+\epsilon}{2} \Big)^\top \Big\| \\
    &= \| (c+\epsilon,c+\epsilon,\ldots,c+\epsilon)^\top \| \\
    &= \sqrt{(c+\epsilon)^2 + (c+\epsilon)^2 + \ldots + (c+\epsilon)^2} \\
    &= \sqrt{d (c+\epsilon)^2} \\
    &= \sqrt{d} (c+\epsilon) \\
    &> \sqrt{d} (L\epsilon + \epsilon) \\
    &> L \frac{\sqrt{d} \epsilon}{2} \\
    &= L \, d_H(X, Y)
\end{align*}
Therefore, for any $L > 0$, there exist $X, Y \in \mathcal{X}$ such that $\| f_\textsc{sum}(X) - f_\textsc{sum}(Y) \| > L \, d_H(X, Y)$, which is a contradiction.
Therefore, the \textsc{sum} function is not Lipschitz continuous with respect to the Hausdorff distance even when it is restricted to inputs from set $\mathcal{X}$.

\subsubsection{The \textsc{max} function is Lipschitz continuous with respect to the matching distance}\label{sec:lem1_max_md}
Let $\mathcal{X}$ denote a set that contains multisets of vectors of equal cardinalities, \ie $|X| = M$ and $X \in \mathcal{S}_M(\mathbb{R}^d)$, $\forall X \in \mathcal{X}$ where $M \in \mathbb{N}$.
Let $X = \oms \mathbf{v}_1, \mathbf{v}_2, \ldots, \mathbf{v}_M\cms \in \mathcal{X}$, and $Y = \oms \mathbf{u}_1, \mathbf{u}_2, \ldots, \mathbf{u}_M\cms \in \mathcal{X}$ denote two multisets.
Then, we have that:
\begin{equation}
    \Big\| f_\textsc{max}(X) - f_\textsc{max}(Y) \Big\| = \sqrt{\big( [\mathbf{v}_\text{max}]_1 - [\mathbf{u}_\text{max}]_1 \big)^2 + \big( [\mathbf{v}_\text{max}]_2 - [\mathbf{u}_\text{max}]_2 \big)^2 + \ldots + \big( [\mathbf{v}_\text{max}]_d - [\mathbf{u}_\text{max}]_d \big)^2} 
    \label{eq:eq3}
\end{equation}
Note that $\forall k \in [d]$, there exists at least one $\mathbf{v}_i \in X$ such that $[\mathbf{v}_\text{max}]_k = [\mathbf{v}_i]_k$, and also at least one $\mathbf{u}_j \in Y$ such that $[\mathbf{u}_\text{max}]_k = [\mathbf{u}_j]_k$.
Furthermore, we denote by $\pi^*$ the matching produced by the solution of the matching distance function:
\begin{equation*}
    \pi^* = \argmin_{\pi \in \mathfrak{S}_M} \sum_{i=1}^M \| \mathbf{v}_{\pi(i)} - \mathbf{u}_i \| 
\end{equation*}
Then, $\forall k \in [d]$ where $[\mathbf{v}_\text{max}]_k \geq [\mathbf{u}_\text{max}]_k$ and $[\mathbf{v}_\text{max}]_k = [\mathbf{v}_{\pi(i)}]_k$, we have that:
\begin{align}
    \label{eq:eq4}
    \big( [\mathbf{v}_\text{max}]_k - [\mathbf{u}_\text{max}]_k \big)^2 &= \big( [\mathbf{v}_{\pi^*(i)}]_k - [\mathbf{u}_\text{max}]_k \big)^2 \nonumber \\
    &= \min_{j \in [M]} \big( [\mathbf{v}_{\pi^*(i)}]_k - [\mathbf{u}_j]_k \big)^2 \\
    &\leq \big( [\mathbf{v}_{\pi^*(i)}]_k - [\mathbf{u}_i]_k \big)^2 \nonumber 
\end{align}
Likewise, $\forall k \in [d]$ where $[\mathbf{u}_\text{max}]_k > [\mathbf{v}_\text{max}]_k$ and $[\mathbf{u}_\text{max}]_k = [\mathbf{u}_i]_k$, we have that:
\begin{align}
    \label{eq:eq5}
    \big( [\mathbf{v}_\text{max}]_k - [\mathbf{u}_\text{max}]_k \big)^2 &= \big( [\mathbf{v}_\text{max}]_k - [\mathbf{u}_i]_k \big)^2 \nonumber \\
    &= \min_{j \in [M]} \big( [\mathbf{v}_j]_k - [\mathbf{u}_i]_k \big)^2 \\
    &\leq \big( [\mathbf{v}_{\pi^*(i)}]_k - [\mathbf{u}_i]_k \big)^2 \nonumber 
\end{align}
Then, for each index pair $(\pi^*(i), i)$, let $\mathcal{S}_{\pi^*(i)} = \{ j \colon [\mathbf{v}_\text{max}]_j = [\mathbf{v}_{\pi^*(i)}]_j, j \in [d] \}$ and $\mathcal{S}_{i} = \{ j \colon [\mathbf{u}_\text{max}]_j = [\mathbf{u}_i]_j, j \in [d] \}$ denote the sets of coordinates at which $\mathbf{v}_{\pi^*(i)}$ and $\mathbf{u}_i$, respectively, attain the coordinate-wise maximum over their corresponding multisets.
Then, from~\eqref{eq:eq4}, and~\eqref{eq:eq5}, we have that:
\begin{equation*}
    \sqrt{\sum_{k \in \mathcal{S}_{\pi^*(i)}} \big( [\mathbf{v}_\text{max}]_k - [\mathbf{u}_\text{max}]_k \big)^2 + \sum_{k \in \mathcal{S}_{i} \setminus \mathcal{S}_{\pi^*(i)}} \big( [\mathbf{v}_\text{max}]_k - [\mathbf{u}_\text{max}]_k \big)^2} \leq \| \mathbf{v}_{\pi^*(i)} - \mathbf{u}_i \|
\end{equation*}
Based on the above, we have that:
\begin{align*}
    \Big\| f_\textsc{max}(X) - f_\textsc{max}(Y) \Big\| &= \sqrt{\big( [\mathbf{v}_\text{max}]_1 - [\mathbf{u}_\text{max}]_1 \big)^2 + \big( [\mathbf{v}_\text{max}]_2 - [\mathbf{u}_\text{max}]_2 \big)^2 + \ldots + \big( [\mathbf{v}_\text{max}]_d - [\mathbf{u}_\text{max}]_d \big)^2} \\
    &\leq \| \mathbf{v}_{\pi(1)} - \mathbf{u}_1 \| + \| \mathbf{v}_{\pi(2)} - \mathbf{u}_2 \| + \ldots + \| \mathbf{v}_{\pi(M)} - \mathbf{u}_M \| \\
    &= d_M(X, Y)
\end{align*}
The \textsc{max} function restricted to inputs from set $\mathcal{X}$ is thus Lipschitz continuous with respect to the matching distance and the Lipschitz constant is equal to $1$.

\subsubsection{The \textsc{max} function is Lipschitz continuous with respect to EMD}
Let $\mathcal{X}$ denote a set that contains multisets of vectors of equal cardinalities, \ie $|X| = M$ and $X \in \mathcal{S}_M(\mathbb{R}^d)$, $\forall X \in \mathcal{X}$ where $M \in \mathbb{N}$.
Let $X, Y \in \mathcal{X}$ denote two multisets.
We have shown in subsection~\ref{sec:lem1_max_md} above that:
\begin{align*}
    \Big\| f_\textsc{max}(X) - f_\textsc{max}(Y) \Big\| \leq d_M(X, Y)
\end{align*}
By Proposition~\ref{prop:prop2}, we have that $d_M(X, Y) = M d_\text{EMD}(X,Y)$.
Therefore, we have that:
\begin{align*}
    \Big\| f_\textsc{max}(X) - f_\textsc{max}(Y) \Big\| &\leq d_M(X, Y) \\
    &= M \, d_\text{EMD}(X,Y) \text{ (due to Proposition~\ref{prop:prop2})}
\end{align*}
The \textsc{max} function restricted to inputs from set $\mathcal{X}$ is thus Lipschitz continuous with respect to EMD and the Lipschitz constant is equal to $M$.

\subsection{Proof of Proposition~\ref{prop:prop3}}\label{sec:proof_prop3}

\subsubsection{The \textsc{att} mechanism is not Lipschitz continuous with respect to EMD}
Suppose that the \textsc{att} mechanism is Lipschitz continuous with respect to EMD.
Let $L > 0$ be given.
Let also $\epsilon > 0$ and $c > \frac{2(1 + \exp (d \epsilon)) L \epsilon}{\exp (d \epsilon) - 1}$.
Let $X=\{ \mathbf{v}_1, \mathbf{v}_2 \}$, $Y=\{ \mathbf{u}_1, \mathbf{u}_2 \}$ be two multisets, each consisting of $2$ vectors of dimension $d$.
Then, suppose that $\mathbf{v}_1=\mathbf{u}_1 = (c,c,\ldots,c)^\top$, $\mathbf{v}_2 = (\epsilon,\epsilon,\ldots,\epsilon)^\top$ and $\mathbf{u}_2 = (-\epsilon,-\epsilon,\ldots,-\epsilon)^\top$.
Clearly, we have that $d_{EMD}(X, Y) = \frac{1}{2} \| \mathbf{v}_2 - \mathbf{u}_2 \| = \sqrt{d} \epsilon$.
Let also $\mathbf{W} = -\mathbf{I}$ and $\mathbf{q} = \mathbf{1}$ where $\mathbf{I}$ denotes the $d \times d$ identity matrix and $\mathbf{1}$ denotes the $d$-dimensional vector of all ones.
We define $g$ as the ReLU function.
The attention coefficients of the elements of $X$ are equal to:
\begin{align*}
   \alpha_1^X &= \frac{\exp \big( \mathbf{1}^\top \textsc{ReLU}(-\mathbf{I} \, \mathbf{v}_1) \big)}{\sum_{j=1}^2 \exp \big( \mathbf{1}^\top \textsc{ReLU}(-\mathbf{I} \, \mathbf{v}_j) \big)} = \frac{\exp (0) }{\exp (0) + \exp (0)} = \frac{1}{2}\\
   \alpha_2^X &= \frac{\exp \big( \mathbf{1}^\top \textsc{ReLU}(-\mathbf{I} \, \mathbf{v}_2) \big)}{\sum_{j=1}^2 \exp \big( \mathbf{1}^\top \textsc{ReLU}(-\mathbf{I} \, \mathbf{v}_j) \big)} = \frac{\exp (0) }{\exp (0) + \exp (0)} = \frac{1}{2}\\
\end{align*}
The attention coefficients of the elements of $Y$ are equal to:
\begin{align*}
   \alpha_1^Y &= \frac{\exp \big( \mathbf{1}^\top \textsc{ReLU}(-\mathbf{I} \, \mathbf{u}_1) \big)}{\sum_{j=1}^2 \exp \big( \mathbf{1}^\top \textsc{ReLU}(-\mathbf{I} \, \mathbf{u}_j) \big)} = \frac{\exp (0) }{\exp (0) + \exp (d \epsilon)} = \frac{1}{1 + \exp (d \epsilon)}\\
   \alpha_2^Y &= \frac{\exp \big( \mathbf{1}^\top \textsc{ReLU}(-\mathbf{I} \, \mathbf{u}_2) \big)}{\sum_{j=1}^2 \exp \big( \mathbf{1}^\top \textsc{ReLU}(-\mathbf{I} \, \mathbf{u}_j) \big)} = \frac{\exp (d \epsilon)}{\exp (0) + \exp (d \epsilon)} = \frac{\exp (d \epsilon)}{1 + \exp (d \epsilon)}\\
\end{align*}
We then have that:
\begin{align*}
    \Big\| f_\textsc{att}(X) - f_\textsc{att}(Y) \Big\| &= \bigg\| \sum_{i=1}^2 \alpha_i^X \mathbf{v}_i - \sum_{j=1}^2 \alpha_j^Y \mathbf{u}_j \bigg\| \\
    &= \bigg\| \frac{1}{2}\mathbf{v}_1 + \frac{1}{2} \mathbf{v}_2 - \frac{1}{1 + \exp (d \epsilon)} \mathbf{u}_1 - \frac{\exp (d \epsilon)}{1 + \exp (d \epsilon)} \mathbf{u}_2 \bigg\| \\
    &= \bigg\| \frac{\exp (d \epsilon) - 1}{2(1 + \exp (d \epsilon))} (c,c,\ldots,c)^\top + \frac{3\exp (d \epsilon) + 1}{2(1 + \exp (d \epsilon))} (\epsilon,\epsilon,\ldots,\epsilon )^\top \bigg\| \\
    &> \bigg\| \frac{\exp (d \epsilon) - 1}{2(1 + \exp (d \epsilon))} (c,c,\ldots,c)^\top \bigg\| \\
    &> \bigg\| \frac{\exp (d \epsilon) - 1}{2(1 + \exp (d \epsilon))} \bigg( \frac{2(1 + \exp (d \epsilon)) L \epsilon}{\exp (d \epsilon) - 1},\frac{2(1 + \exp (d \epsilon)) L \epsilon}{\exp (d \epsilon) - 1},\ldots,\frac{2(1 + \exp (d \epsilon)) L \epsilon}{\exp (d \epsilon) - 1} \bigg)^\top \bigg\| \\
    &= \bigg\| L (\epsilon,\epsilon,\ldots,\epsilon)^\top \bigg\| \\
    &= L \sqrt{\underbrace{\epsilon^2 + \epsilon^2 + \ldots + \epsilon^2}_{d \text{ times}}} \\
    &= L \sqrt{d} \epsilon \\
    &= L \, d_{EMD}(X, Y)
\end{align*}
We have now reached a contradiction.
It turns out that for any $L > 0$, there exist $X, Y \in \mathcal{S}(\mathbb{R}^d)$, $\mathbf{W} \in \mathbb{R}^{d \times d}$ and $\mathbf{q} \in \mathbb{R}^d$ such that $\| f_\textsc{att}(X) - f_\textsc{att}(Y) \| > L \, d_{EMD}(X,Y)$.
Thus, the \textsc{att} mechanism is not Lipschitz continuous with respect to EMD.

\subsubsection{The \textsc{att} mechanism is not Lipschitz continuous with respect to the matching distance}
Suppose that the \textsc{att} mechanism is Lipschitz continuous with respect to the matching distance.
Let $L > 0$ be given.
Let also $\epsilon > 0$ and $c > \frac{2(1 + \exp (d \epsilon)) 2 L \epsilon}{\exp (d \epsilon) - 1}$.
Let $X=\{ \mathbf{v}_1, \mathbf{v}_2 \}$, $Y=\{ \mathbf{u}_1, \mathbf{u}_2 \}$ be two multisets, each consisting of $2$ vectors of dimension $d$.
Then, suppose that $\mathbf{v}_1=\mathbf{u}_1 = (c,c,\ldots,c)^\top$, $\mathbf{v}_2 = (\epsilon,\epsilon,\ldots,\epsilon)^\top$ and $\mathbf{u}_2 = (-\epsilon,-\epsilon,\ldots,-\epsilon)^\top$.
Clearly, we have that $d_M(X, Y) = \| \mathbf{v}_2 - \mathbf{u}_2 \| = 2 \sqrt{d} \epsilon$.
Let also $\mathbf{W} = -\mathbf{I}$ and $\mathbf{q} = \mathbf{1}$ where $\mathbf{I}$ denotes the $d \times d$ identity matrix and $\mathbf{1}$ denotes the $d$-dimensional vector of all ones.
We choose $g$ to be the ReLU activation function.
The attention coefficients of the elements of $X$ are equal to:
\begin{align*}
   \alpha_1^X &= \frac{\exp \big( \mathbf{1}^\top \textsc{ReLU}(-\mathbf{I} \, \mathbf{v}_1) \big)}{\sum_{j=1}^2 \exp \big( \mathbf{1}^\top \textsc{ReLU}(-\mathbf{I} \, \mathbf{v}_j) \big)} = \frac{\exp (0) }{\exp (0) + \exp (0)} = \frac{1}{2}\\
   \alpha_2^X &= \frac{\exp \big( \mathbf{1}^\top \textsc{ReLU}(-\mathbf{I} \, \mathbf{v}_2) \big)}{\sum_{j=1}^2 \exp \big( \mathbf{1}^\top \textsc{ReLU}(-\mathbf{I} \, \mathbf{v}_j) \big)} = \frac{\exp (0) }{\exp (0) + \exp (0)} = \frac{1}{2}\\
\end{align*}
The attention coefficients of the elements of $Y$ are equal to:
\begin{align*}
   \alpha_1^Y &= \frac{\exp \big( \mathbf{1}^\top \textsc{ReLU}(-\mathbf{I} \, \mathbf{u}_1) \big)}{\sum_{j=1}^2 \exp \big( \mathbf{1}^\top \textsc{ReLU}(-\mathbf{I} \, \mathbf{u}_j) \big)} = \frac{\exp (0) }{\exp (0) + \exp (d \epsilon)} = \frac{1}{1 + \exp (d \epsilon)}\\
   \alpha_2^Y &= \frac{\exp \big( \mathbf{1}^\top \textsc{ReLU}(-\mathbf{I} \, \mathbf{u}_2) \big)}{\sum_{j=1}^2 \exp \big( \mathbf{1}^\top \textsc{ReLU}(-\mathbf{I} \, \mathbf{u}_j) \big)} = \frac{\exp (d \epsilon)}{\exp (0) + \exp (d \epsilon)} = \frac{\exp (d \epsilon)}{1 + \exp (d \epsilon)}\\
\end{align*}
We then have that:
\begin{align*}
    \Big\| f_\textsc{att}(X) - f_\textsc{att}(Y) \Big\| &= \bigg\| \sum_{i=1}^2 \alpha_i^X \mathbf{v}_i - \sum_{j=1}^2 \alpha_j^Y \mathbf{u}_j \bigg\| \\
    &= \bigg\| \frac{1}{2}\mathbf{v}_1 + \frac{1}{2} \mathbf{v}_2 - \frac{1}{1 + \exp (d \epsilon)} \mathbf{u}_1 - \frac{\exp (d \epsilon)}{1 + \exp (d \epsilon)} \mathbf{u}_2 \bigg\| \\
    &= \bigg\| \frac{\exp (d \epsilon) - 1}{2(1 + \exp (d \epsilon))} (c,c,\ldots,c)^\top + \frac{3\exp (d \epsilon) + 1}{2(1 + \exp (d \epsilon))} (\epsilon,\epsilon,\ldots,\epsilon )^\top \bigg\| \\
    &> \bigg\| \frac{\exp (d \epsilon) - 1}{2(1 + \exp (d \epsilon))} (c,c,\ldots,c)^\top \bigg\| \\
    &> \bigg\| \frac{\exp (d \epsilon) - 1}{2(1 + \exp (d \epsilon))} \bigg( \frac{2(1 + \exp (d \epsilon)) 2 L \epsilon}{\exp (d \epsilon) - 1},\frac{2(1 + \exp (d \epsilon)) 2 L \epsilon}{\exp (d \epsilon) - 1},\ldots,\frac{2(1 + \exp (d \epsilon)) 2 L \epsilon}{\exp (d \epsilon) - 1} \bigg)^\top \bigg\| \\
    &= \bigg\| 2 L (\epsilon,\epsilon,\ldots,\epsilon)^\top \bigg\| \\
    &= 2 L \sqrt{\underbrace{\epsilon^2 + \epsilon^2 + \ldots + \epsilon^2}_{d \text{ times}}} \\
    &= 2 L \sqrt{d} \epsilon \\
    &= L \, d_M(X, Y)
\end{align*}
Therefore, for any $L > 0$, there exist $X, Y \in \mathcal{S}(\mathbb{R}^d)$, $\mathbf{W} \in \mathbb{R}^{d \times d}$ and $\mathbf{q} \in \mathbb{R}^d$ such that $\| f_\textsc{att}(X) - f_\textsc{att}(Y) \| > L \, d_M(X,Y)$.
Thus, the \textsc{att} mechanism is not Lipschitz continuous with respect to the matching distance.

\subsubsection{The \textsc{att} mechanism is not Lipschitz continuous with respect to the Hausdorff distance}
Suppose that the \textsc{att} mechanism is Lipschitz continuous with respect to the Hausdorff distance.
Let $L > 0$ be given.
Let also $\epsilon > 0$ and $c > \frac{2(1 + \exp (d \epsilon))2 L \epsilon}{\exp (d \epsilon) - 1}$.
Let $X=\{ \mathbf{v}_1, \mathbf{v}_2 \}$, $Y=\{ \mathbf{u}_1, \mathbf{u}_2 \}$ be two multisets, each consisting of $2$ vectors of dimension $d$.
Then, we set $\mathbf{v}_1=\mathbf{u}_1 = (c,c,\ldots,c)^\top$, $\mathbf{v}_2 = (\epsilon,\epsilon,\ldots,\epsilon)^\top$ and $\mathbf{u}_2 = (-\epsilon,-\epsilon,\ldots,-\epsilon)^\top$.
Clearly, we have that $d_H(X, Y) \leq \| \mathbf{v}_2 - \mathbf{u}_2 \| = 2 \sqrt{d} \epsilon$.
Let also $\mathbf{W} = -\mathbf{I}$ and $\mathbf{q} = \mathbf{1}$ where $\mathbf{I}$ denotes the $d \times d$ identity matrix and $\mathbf{1}$ denotes the $d$-dimensional vector of all ones.
We choose $g$ to be the ReLU activation function.
The attention coefficients of the elements of $X$ are equal to:
\begin{align*}
   \alpha_1^X &= \frac{\exp \big( \mathbf{1}^\top \textsc{ReLU}(-\mathbf{I} \, \mathbf{v}_1) \big)}{\sum_{j=1}^2 \exp \big( \mathbf{1}^\top \textsc{ReLU}(-\mathbf{I} \, \mathbf{v}_j) \big)} = \frac{\exp (0) }{\exp (0) + \exp (0)} = \frac{1}{2}\\
   \alpha_2^X &= \frac{\exp \big( \mathbf{1}^\top \textsc{ReLU}(-\mathbf{I} \, \mathbf{v}_2) \big)}{\sum_{j=1}^2 \exp \big( \mathbf{1}^\top \textsc{ReLU}(-\mathbf{I} \, \mathbf{v}_j) \big)} = \frac{\exp (0) }{\exp (0) + \exp (0)} = \frac{1}{2}\\
\end{align*}
The attention coefficients of the elements of $Y$ are equal to:
\begin{align*}
   \alpha_1^Y &= \frac{\exp \big( \mathbf{1}^\top \textsc{ReLU}(-\mathbf{I} \, \mathbf{u}_1) \big)}{\sum_{j=1}^2 \exp \big( \mathbf{1}^\top \textsc{ReLU}(-\mathbf{I} \, \mathbf{u}_j) \big)} = \frac{\exp (0) }{\exp (0) + \exp (d \epsilon)} = \frac{1}{1 + \exp (d \epsilon)}\\
   \alpha_2^Y &= \frac{\exp \big( \mathbf{1}^\top \textsc{ReLU}(-\mathbf{I} \, \mathbf{u}_2) \big)}{\sum_{j=1}^2 \exp \big( \mathbf{1}^\top \textsc{ReLU}(-\mathbf{I} \, \mathbf{u}_j) \big)} = \frac{\exp (d \epsilon)}{\exp (0) + \exp (d \epsilon)} = \frac{\exp (d \epsilon)}{1 + \exp (d \epsilon)}\\
\end{align*}
We then have that:
\begin{align*}
    \Big\| f_\textsc{att}(X) - f_\textsc{att}(Y) \Big\| &= \bigg\| \sum_{i=1}^2 \alpha_i^X \mathbf{v}_i - \sum_{j=1}^2 \alpha_j^Y \mathbf{u}_j \bigg\| \\
    &= \bigg\| \frac{1}{2}\mathbf{v}_1 + \frac{1}{2} \mathbf{v}_2 - \frac{1}{1 + \exp (d \epsilon)} \mathbf{u}_1 - \frac{\exp (d \epsilon)}{1 + \exp (d \epsilon)} \mathbf{u}_2 \bigg\| \\
    &= \bigg\| \frac{\exp (d \epsilon) - 1}{2(1 + \exp (d \epsilon))} (c,c,\ldots,c)^\top + \frac{3\exp (d \epsilon) + 1}{2(1 + \exp (d \epsilon))} (\epsilon,\epsilon,\ldots,\epsilon )^\top \bigg\| \\
    &> \bigg\| \frac{\exp (d \epsilon) - 1}{2(1 + \exp (d \epsilon))} (c,c,\ldots,c)^\top \bigg\| \\
    &> \bigg\| \frac{\exp (d \epsilon) - 1}{2(1 + \exp (d \epsilon))} \bigg( \frac{2(1 + \exp (d \epsilon))2 L \epsilon}{\exp (d \epsilon) - 1},\frac{2(1 + \exp (d \epsilon))2 L \epsilon}{\exp (d \epsilon) - 1},\ldots,\frac{2(1 + \exp (d \epsilon))2 L \epsilon}{\exp (d \epsilon) - 1} \bigg)^\top \bigg\| \\
    &= \bigg\| 2L (\epsilon,\epsilon,\ldots,\epsilon)^\top \bigg\| \\
    &= 2L \sqrt{\underbrace{\epsilon^2 + \epsilon^2 + \ldots + \epsilon^2}_{d \text{ times}}} \\
    &= 2L \sqrt{d} \epsilon \\
    &= L \, d_H(X, Y)
\end{align*}
Therefore, for any $L > 0$, there exist $X, Y \in \mathcal{S}(\mathbb{R}^d)$, $\mathbf{W} \in \mathbb{R}^{d \times d}$ and $\mathbf{q} \in \mathbb{R}^d$ such that $\| f_\textsc{att}(X) - f_\textsc{att}(Y) \| > L \, d_M(X,Y)$, which is a contradiction.
Therefore, the \textsc{att} mechanism is not Lipschitz continuous with respect to the Hausdorff distance.

\subsection{Lipschitz Continuity of $\textsc{att}_{\ell_2}$ mechanism}\label{sec:att_l2}
We demonstrate here that the attention mechanism is not Lipschitz continuous with respect to the considered functions, even when $\ell_2$ attention is used.
Given a multiset $X = \oms \mathbf{v}_1, \ldots, \mathbf{v}_m \cms \in \mathcal{S}(\mathbb{R}^d)$, the attention mechanism is defined as follows:
\begin{align*}
   f_{\textsc{att}_{\ell_2}}(X) = \sum_{i=1}^m \alpha_i \mathbf{v}_i \quad \text{where} \quad \alpha_i &= \frac{\exp \big( - \big\| \mathbf{q} - g(\mathbf{W} \, \mathbf{v}_i) \big\| \big)}{\sum_{j=1}^m \exp \big( -\big\| \mathbf{q} - g(\mathbf{W} \, \mathbf{v}_j) \big\| \big)} \\
\end{align*}
where $\mathbf{W} \in \mathbb{R}^{d' \times d}$ and $\mathbf{q} \in \mathbb{R}^{d'}$ denote a trainable matrix and a trainable vector, respectively, while $g$ denotes some activation function.

\subsubsection{The $\textsc{att}_{\ell_2}$ mechanism is not Lipschitz continuous with respect to EMD}
Suppose that the $\textsc{att}_{\ell_2}$ mechanism is Lipschitz continuous with respect to EMD.
Let $L > 0$ be given.
Let also $\epsilon > 0$ and $c > \frac{2(1 + \exp (-\sqrt{d} \epsilon)) L \epsilon}{1 - \exp (-\sqrt{d} \epsilon)}$.
Let $X=\{ \mathbf{v}_1, \mathbf{v}_2 \}$, $Y=\{ \mathbf{u}_1, \mathbf{u}_2 \}$ be two multisets, each consisting of $2$ vectors of dimension $d$.
Then, suppose that $\mathbf{v}_1=\mathbf{u}_1 = (c,c,\ldots,c)^\top$, $\mathbf{v}_2 = (\epsilon,\epsilon,\ldots,\epsilon)^\top$ and $\mathbf{u}_2 = (-\epsilon,-\epsilon,\ldots,-\epsilon)^\top$.
Clearly, we have that $d_{EMD}(X, Y) = \frac{1}{2} \big\| \mathbf{v}_2 - \mathbf{u}_2 \big\| = \sqrt{d} \epsilon$.
Let also $\mathbf{W} = -\mathbf{I}$ and $\mathbf{q} = (-\epsilon,-\epsilon,\ldots,-\epsilon)^\top$ where $\mathbf{I}$ denotes the $d \times d$ identity matrix.
We define $g$ as the ReLU function.
The attention coefficients of the elements of $X$ are equal to:
\begin{align*}
   \alpha_1^X &= \frac{\exp \big( - \big\| \mathbf{q} - \textsc{ReLU}(-\mathbf{I} \, \mathbf{v}_1) \big\| \big)}{\sum_{j=1}^2 \exp \big( - \big\| \mathbf{q}^\top - \textsc{ReLU}(-\mathbf{I} \, \mathbf{v}_j) \big\| \big)} = \frac{\exp \big(- \big\|\mathbf{q} \big\| \big)}{\exp \big(- \big\|\mathbf{q} \big\| \big) + \exp \big(- \big\|\mathbf{q} \big\| \big)} = \frac{1}{2}\\
   \alpha_2^X &= \frac{\exp \big( - \big\| \mathbf{q} - \textsc{ReLU}(-\mathbf{I} \, \mathbf{v}_2) \big\| \big)}{\sum_{j=1}^2 \exp \big( - \big\| \mathbf{q} - \textsc{ReLU}(-\mathbf{I} \, \mathbf{v}_j) \big\| \big)} = \frac{\exp \big(- \big\|\mathbf{q} \big\| \big)}{\exp \big(- \big\|\mathbf{q} \big\| \big) + \exp \big(- \big\|\mathbf{q} \big\| \big)} = \frac{1}{2}\\
\end{align*}
The attention coefficients of the elements of $Y$ are equal to:
\begin{align*}
   \alpha_1^Y &= \frac{\exp \big( - \big\| \mathbf{q} - \textsc{ReLU}(-\mathbf{I} \, \mathbf{u}_1) \big\| \big)}{\sum_{j=1}^2 \exp \big( - \big\| \mathbf{q} - \textsc{ReLU}(-\mathbf{I} \, \mathbf{u}_j) \big\| \big)} = \frac{\exp \big( - \big\| \mathbf{q} \big\| \big) }{\exp \big( - \big\| \mathbf{q} \big\| \big) + \exp (0)} = \frac{\exp (- \sqrt{d} \epsilon)}{1 + \exp (- \sqrt{d} \epsilon)}\\
   \alpha_2^Y &= \frac{\exp \big( - \big\| \mathbf{q} - \textsc{ReLU}(-\mathbf{I} \, \mathbf{u}_2) \big\| \big)}{\sum_{j=1}^2 \exp \big( - \big\| \mathbf{q} - \textsc{ReLU}(-\mathbf{I} \, \mathbf{u}_j) \big\| \big)} = \frac{\exp (0)}{\exp (0) + \exp \big( - \| \mathbf{q} \|\big)} = \frac{1}{1 + \exp (- \sqrt{d} \epsilon)}\\
\end{align*}
We then have that:
\begin{align*}
    \Big\| f_{\textsc{att}_{\ell_2}}(X) - f_{\textsc{att}_{\ell_2}}(Y) \Big\| &= \bigg\| \sum_{i=1}^2 \alpha_i^X \mathbf{v}_i - \sum_{j=1}^2 \alpha_j^Y \mathbf{u}_j \bigg\| \\
    &= \bigg\| \frac{1}{2}\mathbf{v}_1 + \frac{1}{2} \mathbf{v}_2 - \frac{\exp (- \sqrt{d} \epsilon)}{1 + \exp (- \sqrt{d} \epsilon)} \mathbf{u}_1 - \frac{1}{1 + \exp (- \sqrt{d} \epsilon)} \mathbf{u}_2 \bigg\| \\
    &= \bigg\| \frac{1 - \exp (- \sqrt{d} \epsilon)}{2(1 + \exp (-\sqrt{d} \epsilon))} (c,c,\ldots,c)^\top + \frac{\exp (-\sqrt{d} \epsilon) + 3}{2(1 + \exp (-\sqrt{d} \epsilon))} (\epsilon,\epsilon,\ldots,\epsilon )^\top \bigg\| \\
    &> \bigg\| \frac{1 - \exp (- \sqrt{d} \epsilon)}{2(1 + \exp (-\sqrt{d} \epsilon))} (c,c,\ldots,c)^\top \bigg\| \\
    &> \bigg\| \frac{1 - \exp (- \sqrt{d} \epsilon)}{2(1 + \exp (-\sqrt{d} \epsilon))} \bigg( \frac{2(1 + \exp (-\sqrt{d} \epsilon)) L \epsilon}{1 - \exp (-\sqrt{d} \epsilon)},\ldots,\frac{2(1 + \exp (-\sqrt{d} \epsilon)) L \epsilon}{1 - \exp (-\sqrt{d} \epsilon)} \bigg)^\top \bigg\| \\
    &= \bigg\| L (\epsilon,\epsilon,\ldots,\epsilon)^\top \bigg\| \\
    &= L \sqrt{\underbrace{\epsilon^2 + \epsilon^2 + \ldots + \epsilon^2}_{d \text{ times}}} \\
    &= L \sqrt{d} \epsilon \\
    &= L \, d_{EMD}(X, Y)
\end{align*}
We have now reached a contradiction.
It turns out that for any $L > 0$, there exist $X, Y \in \mathcal{S}(\mathbb{R}^d)$, $\mathbf{W} \in \mathbb{R}^{d \times d}$ and $\mathbf{q} \in \mathbb{R}^d$ such that $\| f_{\textsc{att}_{\ell_2}}(X) - f_{\textsc{att}_{\ell_2}}(Y) \| > L \, d_{EMD}(X,Y)$.
Thus, the $\textsc{att}_{\ell_2}$ mechanism is not Lipschitz continuous with respect to EMD.

\subsubsection{The $\textsc{att}_{\ell_2}$ mechanism is not Lipschitz continuous with respect to the matching distance}
Suppose that the $\textsc{att}_{\ell_2}$ mechanism is Lipschitz continuous with respect to the matching distance.
Let $L > 0$ be given.
Let also $\epsilon > 0$ and $c > \frac{2(1 + \exp (-\sqrt{d} \epsilon)) 2 L \epsilon}{1 - \exp (-\sqrt{d} \epsilon)}$.
Let $X=\{ \mathbf{v}_1, \mathbf{v}_2 \}$, $Y=\{ \mathbf{u}_1, \mathbf{u}_2 \}$ be two multisets, each consisting of $2$ vectors of dimension $d$.
Then, suppose that $\mathbf{v}_1=\mathbf{u}_1 = (c,c,\ldots,c)^\top$, $\mathbf{v}_2 = (\epsilon,\epsilon,\ldots,\epsilon)^\top$ and $\mathbf{u}_2 = (-\epsilon,-\epsilon,\ldots,-\epsilon)^\top$.
Clearly, we have that $d_M(X, Y) = \| \mathbf{v}_2 - \mathbf{u}_2 \| = 2 \sqrt{d} \epsilon$.
Let also $\mathbf{W} = -\mathbf{I}$ and $\mathbf{q} = (-\epsilon,-\epsilon,\ldots,-\epsilon)^\top$ where $\mathbf{I}$ denotes the $d \times d$ identity matrix.
We choose $g$ to be the ReLU activation function.
The attention coefficients of the elements of $X$ are equal to:
\begin{align*}
   \alpha_1^X &= \frac{\exp \big( - \big\| \mathbf{q} - \textsc{ReLU}(-\mathbf{I} \, \mathbf{v}_1) \big\| \big)}{\sum_{j=1}^2 \exp \big( - \big\| \mathbf{q}^\top - \textsc{ReLU}(-\mathbf{I} \, \mathbf{v}_j) \big\| \big)} = \frac{\exp \big(- \big\|\mathbf{q} \big\| \big)}{\exp \big(- \big\|\mathbf{q} \big\| \big) + \exp \big(- \big\|\mathbf{q} \big\| \big)} = \frac{1}{2}\\
   \alpha_2^X &= \frac{\exp \big( - \big\| \mathbf{q} - \textsc{ReLU}(-\mathbf{I} \, \mathbf{v}_2) \big\| \big)}{\sum_{j=1}^2 \exp \big( - \big\| \mathbf{q} - \textsc{ReLU}(-\mathbf{I} \, \mathbf{v}_j) \big\| \big)} = \frac{\exp \big(- \big\|\mathbf{q} \big\| \big)}{\exp \big(- \big\|\mathbf{q} \big\| \big) + \exp \big(- \big\|\mathbf{q} \big\| \big)} = \frac{1}{2}\\
\end{align*}
The attention coefficients of the elements of $Y$ are equal to:
\begin{align*}
   \alpha_1^Y &= \frac{\exp \big( - \big\| \mathbf{q} - \textsc{ReLU}(-\mathbf{I} \, \mathbf{u}_1) \big\| \big)}{\sum_{j=1}^2 \exp \big( - \big\| \mathbf{q} - \textsc{ReLU}(-\mathbf{I} \, \mathbf{u}_j) \big\| \big)} = \frac{\exp \big( - \big\| \mathbf{q} \big\| \big) }{\exp \big( - \big\| \mathbf{q} \big\| \big) + \exp (0)} = \frac{\exp (- \sqrt{d} \epsilon)}{1 + \exp (- \sqrt{d} \epsilon)}\\
   \alpha_2^Y &= \frac{\exp \big( - \big\| \mathbf{q} - \textsc{ReLU}(-\mathbf{I} \, \mathbf{u}_2) \big\| \big)}{\sum_{j=1}^2 \exp \big( - \big\| \mathbf{q} - \textsc{ReLU}(-\mathbf{I} \, \mathbf{u}_j) \big\| \big)} = \frac{\exp (0)}{\exp (0) + \exp \big( - \| \mathbf{q} \|\big)} = \frac{1}{1 + \exp (- \sqrt{d} \epsilon)}\\
\end{align*}
We then have that:
\begin{align*}
    \Big\| f_{\textsc{att}_{\ell_2}}(X) - f_{\textsc{att}_{\ell_2}}(Y) \Big\| &= \bigg\| \sum_{i=1}^2 \alpha_i^X \mathbf{v}_i - \sum_{j=1}^2 \alpha_j^Y \mathbf{u}_j \bigg\| \\
    &= \bigg\| \frac{1}{2}\mathbf{v}_1 + \frac{1}{2} \mathbf{v}_2 - \frac{\exp (- \sqrt{d} \epsilon)}{1 + \exp (- \sqrt{d} \epsilon)} \mathbf{u}_1 - \frac{1}{1 + \exp (- \sqrt{d} \epsilon)} \mathbf{u}_2 \bigg\| \\
    &= \bigg\| \frac{1 - \exp (- \sqrt{d} \epsilon)}{2(1 + \exp (-\sqrt{d} \epsilon))} (c,c,\ldots,c)^\top + \frac{\exp (-\sqrt{d} \epsilon) + 3}{2(1 + \exp (-\sqrt{d} \epsilon))} (\epsilon,\epsilon,\ldots,\epsilon )^\top \bigg\| \\
    &> \bigg\| \frac{1 - \exp (- \sqrt{d} \epsilon)}{2(1 + \exp (-\sqrt{d} \epsilon))} (c,c,\ldots,c)^\top \bigg\| \\
    &> \bigg\| \frac{1 - \exp (- \sqrt{d} \epsilon)}{2(1 + \exp (-\sqrt{d} \epsilon))} \bigg( \frac{2(1 + \exp (-\sqrt{d} \epsilon)) 2 L \epsilon}{1 - \exp (-\sqrt{d} \epsilon)},\ldots,\frac{2(1 + \exp (-\sqrt{d} \epsilon)) 2 L \epsilon}{1 - \exp (-\sqrt{d} \epsilon)} \bigg)^\top \bigg\| \\
    &= \bigg\| 2 L (\epsilon,\epsilon,\ldots,\epsilon)^\top \bigg\| \\
    &= 2 L \sqrt{\underbrace{\epsilon^2 + \epsilon^2 + \ldots + \epsilon^2}_{d \text{ times}}} \\
    &= 2 L \sqrt{d} \epsilon \\
    &= L \, d_M(X, Y)
\end{align*}
Therefore, for any $L > 0$, there exist $X, Y \in \mathcal{S}(\mathbb{R}^d)$, $\mathbf{W} \in \mathbb{R}^{d \times d}$ and $\mathbf{q} \in \mathbb{R}^d$ such that $\| f_{\textsc{att}_{\ell_2}}(X) - f_{\textsc{att}_{\ell_2}}(Y) \| > L \, d_M(X,Y)$.
Thus, the $\textsc{att}_{\ell_2}$ mechanism is not Lipschitz continuous with respect to the matching distance.

\subsubsection{The $\textsc{att}_{\ell_2}$ mechanism is not Lipschitz continuous with respect to the Hausdorff distance}
Suppose that the $\textsc{att}_{\ell_2}$ mechanism is Lipschitz continuous with respect to the Hausdorff distance.
Let $L > 0$ be given.
Let also $\epsilon > 0$ and $c > \frac{2(1 + \exp (-\sqrt{d} \epsilon)) 2 L \epsilon}{1 - \exp (-\sqrt{d} \epsilon)}$.
Let $X=\{ \mathbf{v}_1, \mathbf{v}_2 \}$, $Y=\{ \mathbf{u}_1, \mathbf{u}_2 \}$ be two multisets, each consisting of $2$ vectors of dimension $d$.
Then, we set $\mathbf{v}_1=\mathbf{u}_1 = (c,c,\ldots,c)^\top$, $\mathbf{v}_2 = (\epsilon,\epsilon,\ldots,\epsilon)^\top$ and $\mathbf{u}_2 = (-\epsilon,-\epsilon,\ldots,-\epsilon)^\top$.
Clearly, we have that $d_H(X, Y) \leq \| \mathbf{v}_2 - \mathbf{u}_2 \| = 2 \sqrt{d} \epsilon$.
Let also $\mathbf{W} = -\mathbf{I}$ and $\mathbf{q} = \mathbf{1}$ where $\mathbf{I}$ denotes the $d \times d$ identity matrix and $\mathbf{1}$ denotes the $d$-dimensional vector of all ones.
We choose $g$ to be the ReLU activation function.
The attention coefficients of the elements of $X$ are equal to:
\begin{align*}
   \alpha_1^X &= \frac{\exp \big( - \big\| \mathbf{q} - \textsc{ReLU}(-\mathbf{I} \, \mathbf{v}_1) \big\| \big)}{\sum_{j=1}^2 \exp \big( - \big\| \mathbf{q}^\top - \textsc{ReLU}(-\mathbf{I} \, \mathbf{v}_j) \big\| \big)} = \frac{\exp \big(- \big\|\mathbf{q} \big\| \big)}{\exp \big(- \big\|\mathbf{q} \big\| \big) + \exp \big(- \big\|\mathbf{q} \big\| \big)} = \frac{1}{2}\\
   \alpha_2^X &= \frac{\exp \big( - \big\| \mathbf{q} - \textsc{ReLU}(-\mathbf{I} \, \mathbf{v}_2) \big\| \big)}{\sum_{j=1}^2 \exp \big( - \big\| \mathbf{q} - \textsc{ReLU}(-\mathbf{I} \, \mathbf{v}_j) \big\| \big)} = \frac{\exp \big(- \big\|\mathbf{q} \big\| \big)}{\exp \big(- \big\|\mathbf{q} \big\| \big) + \exp \big(- \big\|\mathbf{q} \big\| \big)} = \frac{1}{2}\\
\end{align*}
The attention coefficients of the elements of $Y$ are equal to:
\begin{align*}
   \alpha_1^Y &= \frac{\exp \big( - \big\| \mathbf{q} - \textsc{ReLU}(-\mathbf{I} \, \mathbf{u}_1) \big\| \big)}{\sum_{j=1}^2 \exp \big( - \big\| \mathbf{q} - \textsc{ReLU}(-\mathbf{I} \, \mathbf{u}_j) \big\| \big)} = \frac{\exp \big( - \big\| \mathbf{q} \big\| \big) }{\exp \big( - \big\| \mathbf{q} \big\| \big) + \exp (0)} = \frac{\exp (- \sqrt{d} \epsilon)}{1 + \exp (- \sqrt{d} \epsilon)}\\
   \alpha_2^Y &= \frac{\exp \big( - \big\| \mathbf{q} - \textsc{ReLU}(-\mathbf{I} \, \mathbf{u}_2) \big\| \big)}{\sum_{j=1}^2 \exp \big( - \big\| \mathbf{q} - \textsc{ReLU}(-\mathbf{I} \, \mathbf{u}_j) \big\| \big)} = \frac{\exp (0)}{\exp (0) + \exp \big( - \| \mathbf{q} \|\big)} = \frac{1}{1 + \exp (- \sqrt{d} \epsilon)}\\
\end{align*}
We then have that:
\begin{align*}
    \Big\| f_{\textsc{att}_{\ell_2}}(X) - f_{\textsc{att}_{\ell_2}}(Y) \Big\| &= \bigg\| \sum_{i=1}^2 \alpha_i^X \mathbf{v}_i - \sum_{j=1}^2 \alpha_j^Y \mathbf{u}_j \bigg\| \\
    &= \bigg\| \frac{1}{2}\mathbf{v}_1 + \frac{1}{2} \mathbf{v}_2 - \frac{\exp (- \sqrt{d} \epsilon)}{1 + \exp (- \sqrt{d} \epsilon)} \mathbf{u}_1 - \frac{1}{1 + \exp (- \sqrt{d} \epsilon)} \mathbf{u}_2 \bigg\| \\
    &= \bigg\| \frac{1 - \exp (- \sqrt{d} \epsilon)}{2(1 + \exp (-\sqrt{d} \epsilon))} (c,c,\ldots,c)^\top + \frac{\exp (-\sqrt{d} \epsilon) + 3}{2(1 + \exp (-\sqrt{d} \epsilon))} (\epsilon,\epsilon,\ldots,\epsilon )^\top \bigg\| \\
    &> \bigg\| \frac{1 - \exp (- \sqrt{d} \epsilon)}{2(1 + \exp (-\sqrt{d} \epsilon))} (c,c,\ldots,c)^\top \bigg\| \\
    &> \bigg\| \frac{1 - \exp (- \sqrt{d} \epsilon)}{2(1 + \exp (-\sqrt{d} \epsilon))} \bigg( \frac{2(1 + \exp (-\sqrt{d} \epsilon)) 2 L \epsilon}{1 - \exp (-\sqrt{d} \epsilon)},\ldots,\frac{2(1 + \exp (-\sqrt{d} \epsilon)) 2 L \epsilon}{1 - \exp (-\sqrt{d} \epsilon)} \bigg)^\top \bigg\| \\
    &= \bigg\| 2 L (\epsilon,\epsilon,\ldots,\epsilon)^\top \bigg\| \\
    &= 2 L \sqrt{\underbrace{\epsilon^2 + \epsilon^2 + \ldots + \epsilon^2}_{d \text{ times}}} \\
    &= 2 L \sqrt{d} \epsilon \\
    &= L \, d_H(X, Y)
\end{align*}
Therefore, for any $L > 0$, there exist $X, Y \in \mathcal{S}(\mathbb{R}^d)$, $\mathbf{W} \in \mathbb{R}^{d \times d}$ and $\mathbf{q} \in \mathbb{R}^d$ such that $\| f_{\textsc{att}_{\ell_2}}(X) - f_{\textsc{att}_{\ell_2}}(Y) \| > L \, d_M(X,Y)$, which is a contradiction.
Therefore, the $\textsc{att}_{\ell_2}$ mechanism is not Lipschitz continuous with respect to the Hausdorff distance.

\subsection{Proof of Theorem~\ref{thm:thm2}}\label{sec:proof_thm2}

\subsubsection{The $\textsc{NN}_\textsc{mean}$ model is Lipschitz continuous with respect to EMD}\label{sec:thm2_mean_emd}
Let $X=\oms \mathbf{v}_1, \ldots, \mathbf{v}_m\cms$, $Y=\oms \mathbf{u}_1, \ldots, \mathbf{u}_n \cms$ be two multisets of vectors.
Let also $\text{Lip}(f_{\text{MLP}_1})$ and $\text{Lip}(f_{\text{MLP}_2})$ denote the Lipschitz constants of $f_{\text{MLP}_1}$ and $f_{\text{MLP}_2}$, respectively.
Finally, let $\mathbf{F}^*$ denote the matrix that minimizes $d_{\text{EMD}}(X,Y)$.
Then, we have:
\begin{align*}
    \bigg\| f_{\text{MLP}_2}\bigg( \frac{1}{m} \sum_{i=1}^m f_{\text{MLP}_1}(\mathbf{v}_i) \bigg) - f_{\text{MLP}_2}\bigg( \frac{1}{n} \sum_{j=1}^n f_{\text{MLP}_1}(\mathbf{u}_j) \bigg) \bigg\| &\leq \text{Lip}(f_{\text{MLP}_2}) \bigg\| \frac{1}{m} \sum_{i=1}^m f_{\text{MLP}_1}(\mathbf{v}_i) - \frac{1}{n} \sum_{j=1}^n f_{\text{MLP}_1}(\mathbf{u}_j) \bigg\| \\
    &\leq \text{Lip}(f_{\text{MLP}_2}) \sum_{i=1}^m \sum_{j=1}^n [\mathbf{F}^*]_{ij} \| f_{\text{MLP}_1}(\mathbf{v}_i) - f_{\text{MLP}_1}(\mathbf{v}_j) \| \\[-0.5cm]
    &\qquad \qquad \qquad \qquad \qquad \qquad \qquad \qquad (\text{due to Theorem~\ref{thm:thm1}})\\
    &\leq \text{Lip}(f_{\text{MLP}_2}) \sum_{i=1}^m \sum_{j=1}^n [\mathbf{F}^*]_{ij} \text{Lip}(f_{\text{MLP}_1}) \| \mathbf{v}_i - \mathbf{v}_j \| \\
    &= \text{Lip}(f_{\text{MLP}_1}) \text{Lip}(f_{\text{MLP}_2}) \sum_{i=1}^m \sum_{j=1}^n [\mathbf{F}^*]_{ij} \| \mathbf{v}_i - \mathbf{v}_j \| \\
    &= \text{Lip}(f_{\text{MLP}_1}) \text{Lip}(f_{\text{MLP}_2}) d_{\text{EMD}}(X, Y) 
\end{align*}
Note that whether the above can hold as an equality depends on $f_{\text{MLP}_1}$ and $f_{\text{MLP}_2}$, and therefore, the Lipschitz constant of the $\textsc{NN}_\textsc{mean}$ model is upper bounded by $\text{Lip}(f_{\text{MLP}_1}) \text{Lip}(f_{\text{MLP}_2}) d_{\text{EMD}}(X, Y)$.

\subsubsection{There exist $\textsc{NN}_\textsc{sum}$ models which are not Lipschitz continuous with respect to the matching distance}
Suppose that $\textsc{NN}_\textsc{sum}$ is a neural network model which computes its output as follows:
\begin{equation*}
    \mathbf{v}_X = f_2 \bigg( \text{ReLU} \Big( \bigoms f_1(v_1), \ldots, f_1(v_m) \bigcms \Big) \bigg)
\end{equation*}
where $X=\oms v_1, \ldots, v_m \cms$ is a multiset of scalars, and $f_1$ and $f_2$ are fully-connected layers, \ie $f_1(x) = a_1 \, x+ b_1$ and $f_2(x) = a_2 \, x + b_2$.
Furthermore, suppose that $a_1 > 0$, $b_1 > 0$ and $a_2 > 0$.
Note that the Lipschitz constants of $f_1$ and $f_2$ are $\text{Lip}(f_1) = a_1$ and $\text{Lip}(f_2) = a_2$, respectively.

Suppose that the $\textsc{NN}_\textsc{sum}$ model is Lipschitz continuous with respect to the matching distance.
Let $L > 0$ be given.
Let also $X=\oms v_1, v_2 \cms$, $Y=\oms u_1 \cms$ be two multisets that contain real numbers.
We construct the two sets such that $v_1=v_2=u_1 = c$ where $0 < c < \frac{b_1}{L \, a_1}$.
Thus, we have that $d_M(X,Y) = |c| = c$.
We also have that $b_1 > c \, L \, a_1$.
Then, we have:
\begin{align*}
    \bigg| f_2 \bigg( \sum_{j=1}^2  f_1(c) \bigg) - f_2 \bigg( f_1 (c) \bigg) \bigg| &= \bigg| f_2 \bigg( \sum_{j=1}^2 \text{ReLU} (a_1 \, c + b_1) \bigg) - f_2 \bigg( \text{ReLU} (a_1 \, c + b_1) \bigg) \bigg| \\
    &= \Big| f_2 \Big( 2 \, ( a_1 \, c + b_1 ) \Big) - f_2 \Big( a_1 \, c + b_1 \Big) \Big| \\
    &= \Big| a_2 \Big( 2 \, ( a_1 \, c + b_1 ) \Big) + b_2 - a_2 \Big( a_1 \, c + b_1 \Big) - b_2 \Big| \\
    &= | a_2 ( a_1 \, c + b_1 ) | \\
    &> | a_2 ( a_1 \, c + c \, L \, a_1 ) | \\
    &= | (L+1) \, a_2 \, a_1 \, c | \\
    &= (L+1) \, \text{Lip}(f_2) \, \text{Lip}(f_1) \, c \\
    &> L \, \text{Lip}(f_2) \, \text{Lip}(f_1) \, d_M(X,Y)
\end{align*}
Therefore, for any $L > 0$, there exist $X, Y \in \mathcal{S}(\mathbb{R}^d)$ such that $\Big\| f_2 \Big( \sum_{\mathbf{v} \in X}  f_1(\mathbf{v}) \Big) - f_2 \Big( \sum_{\mathbf{u} \in Y} f_1 (\mathbf{u}) \Big) \Big\| > L \, \text{Lip}(f_2) \, \text{Lip}(f_1) \, d_M(X,Y)$.
Based on the above, there exist $\textsc{NN}_\textsc{sum}$ models which are not Lipschitz continuous with respect to the matching distance.

\subsubsection{The $\textsc{NN}_\textsc{max}$ model is Lipschitz continuous with respect to the Hausdorff distance}
Let $X=\oms \mathbf{v}_1, \ldots, \mathbf{v}_m\cms$, $Y=\oms \mathbf{u}_1, \ldots, \mathbf{u}_n \cms$ be two multisets of vectors.
Let also $\text{Lip}(f_{\text{MLP}_1})$ and $\text{Lip}(f_{\text{MLP}_2})$ denote the Lipschitz constants of $f_{\text{MLP}_1}$ and $f_{\text{MLP}_2}$, respectively.
Let $f_{\text{MLP}_1}(X) = \bigoms f_{\text{MLP}_1}(\mathbf{v}_1), \ldots, f_{\text{MLP}_1}(\mathbf{v}_m) \bigcms$ and $f_{\text{MLP}_1}(Y) = \bigoms f_{\text{MLP}_1}(\mathbf{u}_1),\ldots,f_{\text{MLP}_1}(\mathbf{u}_n) \bigcms$.
Let also $\big( f_{\text{MLP}_1}(X) \big)_\text{max}$ and $\big( f_{\text{MLP}_1}(Y) \big)_\text{max}$ denote the output of the max operator applied to multisets $f_{\text{MLP}_1}(X)$ and $f_{\text{MLP}_1}(Y)$, respectively.
Without loss of generality, we also assume that $h\Big(f_{\text{MLP}_1}(X), f_{\text{MLP}_1}(Y)\Big) \geq h\Big(f_{\text{MLP}_1}(Y), f_{\text{MLP}_1}(X)\Big)$.
Then, we have:
\begin{align*}
    \bigg\| f_{\text{MLP}_2}\Big( \big(f_{\text{MLP}_1}(X)\big)_{\text{max}} \Big) - f_{\text{MLP}_2}\Big( \big(f_{\text{MLP}_1}(Y)\big)_{\text{max}} \Big) \bigg\| &\leq \text{Lip}(f_{\text{MLP}_2}) \Big\| \big(f_{\text{MLP}_1}(\mathbf{v}_i)\big)_{\text{max}} - \big(f_{\text{MLP}_1}(\mathbf{u}_j)\big)_{\text{max}} \Big\| \\
    &\leq \text{Lip}(f_{\text{MLP}_2}) \sqrt{d} \,\, d_H \Big(f_{\text{MLP}_1}(X), f_{\text{MLP}_1}(Y) \Big) \text{ (due to Theorem~\ref{thm:thm1})} \\
    &= \sqrt{d} \text{Lip}(f_{\text{MLP}_2}) \max_{i \in [m]} \min_{j \in [n]} \Big\| f_{\text{MLP}_1}(\mathbf{v}_i) - f_{\text{MLP}_1}(\mathbf{u}_j)\Big\| \\
    &\leq \sqrt{d} \text{Lip}(f_{\text{MLP}_2}) \text{Lip}(f_{\text{MLP}_1}) \max_{i \in [m]} \min_{j \in [n]} \| \mathbf{v}_i - \mathbf{u}_j \| \\
    &\leq \sqrt{d} \text{Lip}(f_{\text{MLP}_1}) \text{Lip}(f_{\text{MLP}_2}) d_H(X,Y)
\end{align*}
We have thus shown that the $\textsc{NN}_\textsc{max}$ model is Lipschitz continuous with respect to the Hausdorff distance and its Lipschitz constant is upper bounded by $\sqrt{d} \text{Lip}(f_{\text{MLP}_1}) \text{Lip}(f_{\text{MLP}_2})$.

\subsection{Proof of Lemma~\ref{lem:lem2}}\label{sec:proof_lem2}

\subsubsection{The $\textsc{NN}_\textsc{mean}$ model is Lipschitz continuous with respect to the matching distance}
Let $\mathcal{X}$ denote a set that contains multisets of vectors of equal cardinalities, \ie $|X| = M$ and $X \in \mathcal{S}(\mathbb{R}^d)$, $\forall X \in \mathcal{X}$ where $M \in \mathbb{N}$.
Let $X, Y \in \mathcal{X}$ denote two multisets.
We have shown in subsection~\ref{sec:thm2_mean_emd} above that:
\begin{align*}
    \bigg\| f_{\text{MLP}_2}\bigg( \frac{1}{m} \sum_{i=1}^m f_{\text{MLP}_1}(\mathbf{v}_i) \bigg) - f_{\text{MLP}_2}\bigg( \frac{1}{n} \sum_{j=1}^n f_{\text{MLP}_1}(\mathbf{u}_j) \bigg) \bigg\| &\leq \text{Lip}(f_{\text{MLP}_1}) \text{Lip}(f_{\text{MLP}_2}) d_{\text{EMD}}(X, Y) 
\end{align*}
By Proposition~\ref{prop:prop2}, we have that $d_M(X, Y) = M d_\text{EMD}(X,Y)$.
Therefore, we have that:
\begin{align*}
    \bigg\| f_{\text{MLP}_2}\bigg( \frac{1}{m} \sum_{i=1}^m f_{\text{MLP}_1}(\mathbf{v}_i) \bigg) - f_{\text{MLP}_2}\bigg( \frac{1}{n} \sum_{j=1}^n f_{\text{MLP}_1}(\mathbf{u}_j) \bigg) \bigg\| &\leq \text{Lip}(f_{\text{MLP}_1}) \text{Lip}(f_{\text{MLP}_2}) d_{\text{EMD}}(X, Y) \\
    &= \frac{1}{M} \text{Lip}(f_{\text{MLP}_1}) \text{Lip}(f_{\text{MLP}_2}) d_M(X,Y) \text{ (due to Proposition~\ref{prop:prop2})}
\end{align*}
The $\textsc{NN}_\textsc{mean}$ model is Lipschitz continuous with respect to the matching distance and its Lipschitz constant is upper bounded by $\frac{1}{M}  \text{Lip}(f_{\text{MLP}_1}) \text{Lip}(f_{\text{MLP}_2})$.

\subsubsection{The $\textsc{NN}_\textsc{sum}$ model is Lipschitz continuous with respect to the matching distance}\label{sec:lem2_sum_md}
Let $\mathcal{X}$ denote a set that contains multisets of vectors of equal cardinalities, \ie $|X| = M$ and $X \in \mathcal{S}(\mathbb{R}^d)$, $\forall X \in \mathcal{X}$ where $M \in \mathbb{N}$.
Let $X, Y \in \mathcal{X}$ denote two multisets.
Let $\pi^*$ denote the matching produced by the solution of the matching distance function:
\begin{equation*}
    \pi^* = \argmin_{\pi \in \mathfrak{S}_M} \sum_{i=1}^M \| \mathbf{v}_{\pi(i)} - \mathbf{u}_i \| \\
\end{equation*}
Then, we have:
\begin{align*}
    \bigg\| f_{\text{MLP}_2}\bigg( \sum_{i=1}^M f_{\text{MLP}_1}(\mathbf{v}_i) \bigg) - f_{\text{MLP}_2}\bigg( \sum_{j=1}^M f_{\text{MLP}_1}(\mathbf{u}_j) \bigg) \bigg\| &\leq \text{Lip}(f_{\text{MLP}_2}) \bigg\| \sum_{i=1}^M f_{\text{MLP}_1}(\mathbf{v}_i) - \sum_{j=1}^M f_{\text{MLP}_1}(\mathbf{u}_j) \bigg\| \\
    &= \text{Lip}(f_{\text{MLP}_2}) \Big\| \sum_{i=1}^M \big( f_{\text{MLP}_1}(\mathbf{v}_{\pi^*(i)}) - f_{\text{MLP}_1}(\mathbf{u}_i) \big) \Big\| \\
    &\leq \text{Lip}(f_{\text{MLP}_2}) \sum_{i=1}^M \Big\| f_{\text{MLP}_1}(\mathbf{v}_{\pi^*(i)}) - f_{\text{MLP}_1}(\mathbf{u}_i) \Big\| \\
    &\leq \text{Lip}(f_{\text{MLP}_2}) \sum_{i=1}^M \text{Lip}(f_{\text{MLP}_1}) \| \mathbf{v}_{\pi^*(i)} - \mathbf{u}_i \| \\
    &\leq \text{Lip}(f_{\text{MLP}_1}) \text{Lip}(f_{\text{MLP}_2}) \sum_{i=1}^M \| \mathbf{v}_{\pi^*(i)} - \mathbf{u}_i \| \\
    &= \text{Lip}(f_{\text{MLP}_1}) \text{Lip}(f_{\text{MLP}_2}) d_M(X, Y)
\end{align*}
which concludes the proof.
The $\textsc{NN}_\textsc{sum}$ model is thus Lipschitz continuous with respect to the matching distance when the inputs are restricted to multisets of a fixed size, and its Lipschitz constant is upper bounded by $\text{Lip}(f_{\text{MLP}_1}) \text{Lip}(f_{\text{MLP}_2})$.

\subsubsection{The $\textsc{NN}_\textsc{sum}$ model is Lipschitz continuous with respect to EMD}
Let $\mathcal{X}$ denote a set that contains multisets of vectors of equal cardinalities, \ie $|X| = M$ and $X \in \mathcal{S}(\mathbb{R}^d)$, $\forall X \in \mathcal{X}$ where $M \in \mathbb{N}$.
Let $X, Y \in \mathcal{X}$ denote two multisets.
We have shown in subsection~\ref{sec:lem2_sum_md} above that:
\begin{align*}
    \bigg\| f_{\text{MLP}_2}\bigg( \sum_{i=1}^M f_{\text{MLP}_1}(\mathbf{v}_i) \bigg) - f_{\text{MLP}_2}\bigg( \sum_{j=1}^M f_{\text{MLP}_1}(\mathbf{u}_j) \bigg) \bigg\| &\leq \text{Lip}(f_{\text{MLP}_1}) \text{Lip}(f_{\text{MLP}_2}) d_M(X, Y)
\end{align*}
By Proposition~\ref{prop:prop2}, we have that $d_M(X, Y) = M d_\text{EMD}(X,Y)$.
Therefore, we have that:
\begin{align*}
    \bigg\| f_{\text{MLP}_2}\bigg( \sum_{i=1}^M f_{\text{MLP}_1}(\mathbf{v}_i) \bigg) - f_{\text{MLP}_2}\bigg( \sum_{j=1}^M f_{\text{MLP}_1}(\mathbf{u}_j) \bigg) \bigg\| &\leq \text{Lip}(f_{\text{MLP}_1}) \text{Lip}(f_{\text{MLP}_2}) d_M(X, Y) \\
    &= M \text{Lip}(f_{\text{MLP}_1}) \text{Lip}(f_{\text{MLP}_2}) d_\text{EMD}(X,Y) \text{ (due to Proposition~\ref{prop:prop2})}
\end{align*}
Therefore, the $\textsc{NN}_\textsc{sum}$ model is Lipschitz continuous with respect to EMD and its Lipschitz constant is upper bounded by $M \text{Lip}(f_{\text{MLP}_1}) \text{Lip}(f_{\text{MLP}_2})$.

\subsubsection{The $\textsc{NN}_\textsc{max}$ model is Lipschitz continuous with respect to the matching distance}\label{sec:lem2_max_md}
Let $\mathcal{X}$ denote a set that contains multisets of vectors of equal cardinalities, \ie $|X| = M$ and $X \in \mathcal{S}(\mathbb{R}^d)$, $\forall X \in \mathcal{X}$ where $M \in \mathbb{N}$.
Let $X, Y \in \mathcal{X}$ denote two multisets.
Let $f_{\text{MLP}_1}(X) = \bigoms f_{\text{MLP}_1}(\mathbf{v}_1), \ldots, f_{\text{MLP}_1}(\mathbf{v}_m) \bigcms$ and $f_{\text{MLP}_1}(Y) = \bigoms f_{\text{MLP}_1}(\mathbf{u}_1),\ldots,f_{\text{MLP}_1}(\mathbf{u}_n) \bigcms$.
Let also $\big( f_{\text{MLP}_1}(X) \big)_\text{max}$ and $\big( f_{\text{MLP}_1}(Y) \big)_\text{max}$ denote the output of the max operator applied to multisets $f_{\text{MLP}_1}(X)$ and $f_{\text{MLP}_1}(Y)$, respectively.
Let $\pi^*$ denote the matching produced by the solution of the matching distance function:
\begin{equation*}
    \pi^* = \argmin_{\pi \in \mathfrak{S}_M} \sum_{i=1}^M \| \mathbf{v}_{\pi(i)} - \mathbf{u}_i \| \\
\end{equation*}
Then, we have:
\begin{align*}
    \bigg\| f_{\text{MLP}_2}\Big( \big(f_{\text{MLP}_1}(X)\big)_{\text{max}} \Big) - f_{\text{MLP}_2}\Big( \big(f_{\text{MLP}_1}(Y)\big)_{\text{max}} \Big) \bigg\| &\leq \text{Lip}(f_{\text{MLP}_2}) \Big\| \big(f_{\text{MLP}_1}(X)\big)_{\text{max}} - \big(f_{\text{MLP}_1}(Y)\big)_{\text{max}} \Big\| \\
    &\leq \text{Lip}(f_{\text{MLP}_2}) d_M \Big( f_{\text{MLP}_1}(X) - f_{\text{MLP}_1}(Y) \Big) \text{ (due to Lemma~\ref{lem:lem1})}\\
    &\leq \text{Lip}(f_{\text{MLP}_2}) \sum_{i=1}^M \Big\| f_{\text{MLP}_1}(\mathbf{v}_{\pi^*(i)}) - f_{\text{MLP}_1}(\mathbf{u}_i) \Big\| \\
    &\leq \text{Lip}(f_{\text{MLP}_2}) \sum_{i=1}^M \text{Lip}(f_{\text{MLP}_1}) \| \mathbf{v}_{\pi^*(i)} - \mathbf{u}_i \| \\
    &\leq \text{Lip}(f_{\text{MLP}_1}) \text{Lip}(f_{\text{MLP}_2}) \sum_{i=1}^M \| \mathbf{v}_{\pi^*(i)} - \mathbf{u}_i \| \\
    &= \text{Lip}(f_{\text{MLP}_1}) \text{Lip}(f_{\text{MLP}_2}) d_M(X, Y)
\end{align*}
We conclude that the $\textsc{NN}_\textsc{max}$ model is Lipschitz continuous with respect to the matching distance and its Lipschitz constant is upper bounded by $\text{Lip}(f_{\text{MLP}_1}) \text{Lip}(f_{\text{MLP}_2})$.

\subsubsection{The $\textsc{NN}_\textsc{max}$ model is Lipschitz continuous with respect to EMD}
Let $\mathcal{X}$ denote a set that contains multisets of vectors of equal cardinalities, \ie $|X| = M$ and $X \in \mathcal{S}(\mathbb{R}^d)$, $\forall X \in \mathcal{X}$ where $M \in \mathbb{N}$.
Let $X, Y \in \mathcal{X}$ denote two multisets.
We have shown in subsection~\ref{sec:lem2_max_md} above that:
\begin{align*}
    \bigg\| f_{\text{MLP}_2}\Big( \big(f_{\text{MLP}_1}(\mathbf{v}_i)\big)_{\text{max}} \Big) - f_{\text{MLP}_2}\Big( \big(f_{\text{MLP}_1}(\mathbf{u}_j)\big)_{\text{max}} \Big) \bigg\| &\leq \text{Lip}(f_{\text{MLP}_1}) \text{Lip}(f_{\text{MLP}_2}) d_M(X, Y)
\end{align*}
By Proposition~\ref{prop:prop2}, we have that $d_M(X, Y) = M d_\text{EMD}(X,Y)$.
Therefore, we have that:
\begin{align*}
    \bigg\| f_{\text{MLP}_2}\Big( \big(f_{\text{MLP}_1}(\mathbf{v}_i)\big)_{\text{max}} \Big) - f_{\text{MLP}_2}\Big( \big(f_{\text{MLP}_1}(\mathbf{u}_j)\big)_{\text{max}} \Big) \bigg\| &\leq \text{Lip}(f_{\text{MLP}_1}) \text{Lip}(f_{\text{MLP}_2}) d_M(X, Y) \\
    &= M \text{Lip}(f_{\text{MLP}_1}) \text{Lip}(f_{\text{MLP}_2}) d_\text{EMD}(X,Y) \text{ (due to Proposition~\ref{prop:prop2})}
\end{align*}
We thus have that the $\textsc{NN}_\textsc{max}$ model is Lipschitz continuous with respect to EMD and its Lipschitz constant is upper bounded by $M \, \text{Lip}(f_{\text{MLP}_1}) \text{Lip}(f_{\text{MLP}_2})$.

\subsection{Proof of Proposition~\ref{prop:prop4}}\label{sec:proof_prop4}
(1) Let $\mathbf{F} \in \mathbb{R}^{n \times (n+1)}$ be a matrix.
We set the elements of $\mathbf{F}$ equal to the following values:
\begin{equation*}
    [\mathbf{F}]_{ij} = 
    \begin{cases}
        \frac{1}{n(n+1)} & \text{if } j=n+1 \\
        \frac{1}{n+1} & \text{if } i=j \\
        0 & \text{otherwise} \\
    \end{cases}
\end{equation*}
Then, we have that:
\begin{equation*}
    \begin{split}
        [\mathbf{F}]_{ij} &> 0, \qquad 1 \leq i \leq n, \quad 1 \leq j \leq n+1 \\
        \sum_{j=1}^{n+1} [\mathbf{F}]_{ij} &= \frac{1}{n}, \qquad 1 \leq i \leq n \\
        \sum_{i=1}^n [\mathbf{F}]_{ij} &= \frac{1}{n+1}, \qquad 1 \leq j \leq n+1 \\
    \end{split}
\end{equation*}
Therefore, $\mathbf{F}$ is a feasible solution of the EMD formulation and its value is equal to:
\begin{equation*}
    \sum_{i=1}^n \sum_{j=1}^{n+1} [\mathbf{F}]_{ij} \, \| \mathbf{v}_i - \mathbf{v}_j\| = \frac{1}{n(n+1)} \sum_{i=1}^n \| \mathbf{v}_i - \mathbf{v}_{n+1} \|
\end{equation*}
We thus have that:
\begin{equation*}
    d_\text{EMD}(X,X') \leq \frac{1}{n(n+1)} \sum_{i=1}^n \| \mathbf{v}_i - \mathbf{v}_{n+1} \|
\end{equation*}
A simple case where the inequality holds with equality is when $\mathbf{v}_1=\mathbf{v}_2=\ldots=\mathbf{v}_n=\mathbf{v}_{n+1}$.

(2) The bidirectional Hausdorff distance between $X$ and $X'$ is defined as:
\begin{equation*}
    d_H(X, X') = \max \big( h(X, X'), h(X', X) \big)
\end{equation*}
For each $i \in [n]$, we have that $\min_{j \in [n+1]} \| \mathbf{v}_i - \mathbf{v}_j\| = \| \mathbf{v}_i - \mathbf{v}_i\| = 0$.
Therefore, we have that:
\begin{equation*}
    h(X,X') = \max_{i \in [n]} \min_{j \in [n+1]} \| \mathbf{v}_i - \mathbf{v}_j\| = 0
\end{equation*}
We thus obtain the following: 
\begin{equation*}
    d_H(X, X') = h(X', X) = \max_{i \in [n+1]} \min_{j \in [n]} \| \mathbf{v}_i - \mathbf{v}_j\| 
\end{equation*}
For each $i \in [n]$, we have that $\min_{j \in [n]} \| \mathbf{v}_i - \mathbf{v}_j\| = \| \mathbf{v}_i - \mathbf{v}_i\| = 0$.
Therefore, we have that:
\begin{equation*}
    d_H(X, X') = h(X', X) = \min_{j \in [n]} \| \mathbf{v}_{n+1} - \mathbf{v}_j\| 
\end{equation*}

\section{Stability of Neural Networks for Sets Under Perturbations}\label{sec:stability}
Lemma~\ref{lem:lem2} implies that the output variation of $\textsc{NN}_\textsc{mean}$, $\textsc{NN}_\textsc{sum}$ and $\textsc{NN}_\textsc{max}$ under perturbations of the elements of an input set can be bounded via the EMD, matching distance and Hausdorff distance between the input and perturbed sets, respectively.
We next investigate what are the values of EMD, matching distance and Hausdorff distance when an element of a multiset is replaced by a new element.
\begin{proposition}
    Given a multiset of vectors $X = \oms \mathbf{v}_1,\ldots,\mathbf{v}_{i-1},\mathbf{v}_i,\mathbf{v}_{i+1},\ldots, \mathbf{v}_n \cms \in \mathcal{S}(\mathbb{R}^d)$, let $X' = \oms \mathbf{v}_1,\ldots,\mathbf{v}_{i-1},\mathbf{v}_i',\mathbf{v}_{i+1},\ldots, \mathbf{v}_n \cms \in \mathcal{S}(\mathbb{R}^d)$ be the multiset where element $\mathbf{v}_i$ is perturbed to $\mathbf{v}_i'$.
    Then,
    \begin{enumerate}
        \item The EMD and matching distance between $X$ and $X'$ are equal to:
        \begin{equation*}
            d_\text{EMD}(X,X') = \frac{1}{n} \| \mathbf{v}_i - \mathbf{v}_i' \| \qquad \text{and} \qquad d_\text{M}(X,X') = \| \mathbf{v}_i - \mathbf{v}_i' \|
        \end{equation*}
        \item The Hausdorff distance between $X$ and $X'$ is bounded as:
        \begin{equation*}
            d_\text{H}(X,X') \leq \| \mathbf{v}_i - \mathbf{v}_i' \|
        \end{equation*}
    \end{enumerate}
    \label{prop:prop5}
\end{proposition}
\begin{proof}
    (1) We set $\mathbf{u}_1 = \mathbf{v}_1$, $\mathbf{u}_2 = \mathbf{v}_2$, \ldots, $\mathbf{u}_i = \mathbf{v}_i'$, $\ldots$, $\mathbf{u}_n = \mathbf{v}_n$. 
    For the matching distance, we have:
    \begin{align*}
        d_M(X, X') &= \min_{\pi \in \mathfrak{S}_n} \bigg[ \sum_{i=1}^{n} \| \mathbf{v}_{\pi(i)} - \mathbf{u}_i \| \bigg] \\
        &\leq \| \mathbf{v}_1 - \mathbf{v}_1\| + \|\mathbf{v}_2 - \mathbf{v}_2\| + \ldots + \|\mathbf{v}_i - \mathbf{v}_i'\| + \ldots + \| \mathbf{v}_n - \mathbf{v}_n \| \\
        &= \|\mathbf{v}_i - \mathbf{v}_i'\|
    \end{align*}
    Suppose that $\pi^* \in \mathfrak{S}_n$ is the permutation associated with $d_M(X, X')$.
    Then, we have:
    \begin{align*}
        \|\mathbf{v}_i - \mathbf{v}_i'\| &= \|\mathbf{v}_1 + \ldots + \mathbf{v}_i + \ldots +\mathbf{v}_n - \mathbf{v}_1 - \ldots - \mathbf{v}_i' - \ldots - \mathbf{v}_n\|\\
        &= \|\mathbf{v}_1 + \ldots + \mathbf{v}_i + \ldots +\mathbf{v}_n - \mathbf{u}_1 - \ldots - \mathbf{u}_i - \ldots - \mathbf{u}_n\| \\
        &\leq \|\mathbf{v}_{\pi^*(1)} - \mathbf{u}_1 \| + \ldots + \|\mathbf{v}_{\pi^*(i)} - \mathbf{u}_i \| + \ldots + \|\mathbf{v}_{\pi^*(n)} - \mathbf{u}_n \| \\
        &= d_M(X, X')
    \end{align*}
    We showed that $d_M(X, X') \leq \|\mathbf{v}_i - \mathbf{v}_i'\|$ and that $\|\mathbf{v}_i - \mathbf{v}_i'\| \leq d_M(X, X')$.
    Therefore, $d_M(X, X') = \|\mathbf{v}_i - \mathbf{v}_i'\|$.
    Since $|X| = |X'| = n$, by Proposition~\ref{prop:prop2}, we have that $d_M(X, X') = n \, d_{\text{EMD}}(X, X')$.
    The following then holds:
    \begin{equation*}
        d_\text{EMD}(X,X') = \frac{1}{n} \| \mathbf{v}_i - \mathbf{v}_i' \|
    \end{equation*}
    (2) We set $\mathbf{u}_1 = \mathbf{v}_1$, $\mathbf{u}_2 = \mathbf{v}_2$, \ldots, $\mathbf{u}_i = \mathbf{v}_i'$, $\ldots$, $\mathbf{u}_n = \mathbf{v}_n$.
    The Hausdorff distance is equal to:
    \begin{equation*}
        d_H(X, X') = \max \big( h(X, X'), h(X', X) \big)
    \end{equation*}
    For each $j \in [i-1] \cup \{i+1,\ldots,n\}$, we have that $\min_{k \in [n]} \| \mathbf{v}_j - \mathbf{u}_k\| = \| \mathbf{v}_j - \mathbf{v}_j\| = 0$.
    For each $j \in [i-1] \cup \{i+1,\ldots,n\}$, we also have that $\min_{k \in [n]} \| \mathbf{u}_j - \mathbf{v}_k\| = \| \mathbf{v}_j - \mathbf{v}_j\| = 0$.
    Therefore, we have that:
    \begin{equation*}
        \begin{split}
            h(X,X') &= \min_{j \in [n]} \| \mathbf{v}_i - \mathbf{u}_j\| \leq \| \mathbf{v}_i - \mathbf{u}_i\| = \| \mathbf{v}_i - \mathbf{v}_i'\| \\
            h(X',X) &= \min_{j \in [n]} \| \mathbf{u}_i - \mathbf{v}_j\| \leq \| \mathbf{u}_i - \mathbf{v}_i\| = \| \mathbf{v}_i - \mathbf{v}_i'\|\\
        \end{split}
    \end{equation*}
    Both $h(X,X')$ and $h(X',X)$ are thus no greater than $\| \mathbf{v}_i - \mathbf{v}_i'\|$.
    We then have that:
    \begin{equation*}
        \begin{split}
            d_H(X, X') &= \max \big( h(X, X'), h(X', X) \big) \\
            &\leq \max \big( \| \mathbf{v}_i - \mathbf{v}_i'\|, \| \mathbf{v}_i - \mathbf{v}_i'\| \big) \\
            &= \| \mathbf{v}_i - \mathbf{v}_i'\|
        \end{split}
    \end{equation*}
\end{proof}
We next investigate what are the values of EMD, Hausdorff distance and matching distance when a random vector sampled from $\mathcal{U}(0,k)^d$ is added to each element of a multiset.
\begin{proposition}
    Given a multiset of vectors $X = \oms \mathbf{v}_1,\mathbf{v}_{2},\ldots, \mathbf{v}_n \cms \in \mathcal{S}(\mathbb{R}^d)$, let $X' = \oms \mathbf{v}_1+\mathbf{u}_1, \mathbf{v}_2+\mathbf{u}_2, \ldots,\mathbf{v}_n+\mathbf{u}_n \cms \in \mathcal{S}(\mathbb{R}^d)$ where $\mathbf{u}_i \sim \mathcal{U}(0,k)^d$ for all $i \in [n]$.
    Then,
    \begin{enumerate}
        \item The EMD and matching distance between $X$ and $X'$ is bounded as:
        \begin{equation*}
            d_\text{EMD}(X,X') \leq k \sqrt{d} \qquad \text{and} \qquad d_\text{M}(X,X') \leq n k \sqrt{d}
        \end{equation*}
        \item The Hausdorff distance between $X$ and $X'$ is bounded as:
        \begin{equation*}
            d_\text{H}(X,X') \leq k \sqrt{d} 
        \end{equation*}
    \end{enumerate}
    \label{prop:prop6}
\end{proposition}
\begin{proof}
    (1) Note that 
    \begin{equation*}
        \| \mathbf{v}_i - \mathbf{v}_i - \mathbf{u}_i \| = \| \mathbf{u}_i\| \leq \sqrt{\underbrace{k^2 + k^2 + \ldots + k^2}_{d\text{ times}}} = k \sqrt{d}
    \end{equation*}
    For the matching distance, we have:
    \begin{align*}
        d_M(X, X') &= \min_{\pi \in \mathfrak{S}_n} \bigg[ \sum_{i=1}^{n} \| \mathbf{v}_{\pi(i)} - \mathbf{v}_i - \mathbf{u}_i \| \bigg] \\
        &\leq \| \mathbf{v}_1 - \mathbf{v}_1 - \mathbf{u}_1\| + \|\mathbf{v}_2 - \mathbf{v}_2 - \mathbf{u}_2\| + \ldots + \|\mathbf{v}_n - \mathbf{v}_n - \mathbf{u}_n\| \\
        &= \|-\mathbf{u}_1\| + \|-\mathbf{u}_2\| + \ldots + \|-\mathbf{u}_n\| \\
        &\leq \underbrace{k \sqrt{d} + k \sqrt{d} + \ldots + k \sqrt{d}}_{n \text{ times}}\\
        &=n k \sqrt{d}
    \end{align*}
    Since $|X| = |X'| = n$, by Proposition~\ref{prop:prop2}, we have that $d_M(X, X') = n \, d_{\text{EMD}}(X, X')$.
    The following then holds:
    \begin{equation*}
        d_\text{EMD}(X,X') \leq \frac{1}{n} n k \sqrt{d} = k \sqrt{d}
    \end{equation*}
    (2) The Hausdorff distance is equal to:
    \begin{equation*}
        d_H(X, X') = \max \big( h(X, X'), h(X', X) \big)
    \end{equation*}
    We have that:
    \begin{equation*}
        \begin{split}
            h(X,X') &= \max_{i \in [n]} \min_{j \in [n]} \| \mathbf{v}_i - \mathbf{v}_j - \mathbf{u}_j\| \leq \max_{i \in [n]} \| \mathbf{v}_i - \mathbf{v}_i - \mathbf{u}_i\| = \max_{i \in [n]} \| - \mathbf{u}_i\| \leq k \sqrt{d} \\
            h(X',X) &= \max_{i \in [n]} \min_{j \in [n]} \| \mathbf{v}_i + \mathbf{u}_i - \mathbf{v}_j\| \leq \max_{i \in [n]} \| \mathbf{v}_i + \mathbf{u}_i - \mathbf{v}_i \| = \max_{i \in [n]} \| \mathbf{u}_i\| \leq k \sqrt{d} \\
        \end{split}
    \end{equation*}
    Both $h(X,X')$ and $h(X',X)$ are thus no greater than $k \sqrt{d}$.
    We then have that:
    \begin{equation*}
        \begin{split}
            d_H(X, X') &= \max \big( h(X, X'), h(X', X) \big) \\
            &\leq \max \big( k \sqrt{d} , k \sqrt{d} \big) \\
            &= k \sqrt{d} 
        \end{split}
    \end{equation*}
\end{proof}

\newpage

\section{Experimental Setup}\label{sec:experimental_setup}
We next provide details about the experimental setup in the different experiments we conducted.

\subsection{Lipschitz Constant of Aggregation Functions}

\paragraph{ModelNet40.} 
We produce point clouds consisting of $100$ particles ($x, y, z$-coordinates) from the mesh representation of objects.
Each set is normalized by the initial layer of the deep network to have zero mean (along individual axes) and unit (global) variance.
Each neural network consists of an MLP, followed by an aggregation function (\ie \textsc{sum}, \textsc{mean} or \textsc{max}) which in turn is followed by another MLP.
The first MLP transforms the representations of the particles, while the aggregation function produces a single vector representation for each point cloud.
The two MLPs consist of two layers and the ReLU function is applied to the output of the first layer.
The output of the second layer of the second MLP is followed by the softmax function which outputs class probabilities.
The hidden dimension size of all layers is set to $64$.
The model is trained by minimizing the cross-entropy loss.
The minimization is performed using Adam with a learning rate equal to $0.001$.
The number of epochs is set to $200$ and the batch size to $64$.
At the end of each epoch, we compute the performance of the model on the validation set, and we choose as our final model the one that achieved the smallest loss on the validation set.
Note that in this set of experiments we do not compute the EMD, Hausdorff and matching distance between the input multisets, but we compute the distance of the ``latent'' multisets that emerge at the output of the first MLP (just before the aggregation function is applied) of each chosen model.

\paragraph{Polarity.}
We represent each document as a multiset of the embeddings of its words.
The word embeddings are obtained from a pre-trained model which contains $300$-dimensional vectors~\citep{mikolov2013distributed}.
The embeddings are first fed to an MLP, and then an aggregation function (\ie \textsc{sum}, \textsc{mean} or \textsc{max}) is applied which produces a single vector representation for each document.
This vector representation is passed onto another MLP.
The two MLPs consist of two layers and the ReLU function is applied to the output of the first layer of each MLP.
The output of the second MLP is followed by the softmax function.
The hidden dimension size of all layers is set to $64$.
The model is trained by minimizing the cross-entropy loss.
The minimization is performed using Adam with a learning rate equal to $0.001$.
The number of epochs is set to $200$ and the batch size to $64$.
At the end of each epoch, we compute the performance of the model on the validation set, and we choose as our final model the one that achieved the smallest loss on the validation set.
As discussed above, in each experiment, we compute the distance of the ``latent'' multisets that emerge at the output of the first MLP (just before the aggregation function is applied) of the model that achieves the lowest validation loss.

\subsection{Lipschitz Constant of Neural Networks for Sets}

\paragraph{ModelNet40.}
We produce point clouds with $100$ particles ($x, y, z$-coordinates) from the mesh representation of objects. 
The data points of the point clouds are first fed to a fully-connected layer.
Then, the data points of each point clouds are aggregated (\ie \textsc{sum}, \textsc{mean} or \textsc{max} function is utilized) and this results into a single vector representation for each point cloud.
This vector representation is then fed to another fully-connected layer.
The output of this layer passes through the ReLU function and is finally fed to a fully-connected layer which is followed by the softmax function and produces class probabilities.
The output dimension of the first two fully-connected layers is set to $64$.
The model is trained by minimizing the cross-entropy loss.
The Adam optimizer is employed with a learning rate of $0.001$.
The number of epochs is set to $200$ and the batch size to $64$.
At the end of each epoch, we compute the performance of the model on the validation set, and we choose as our final model the one that achieved the smallest loss on the validation set.
Note that the Lipschitz constant of an affine function $f \colon \mathbf{v} \mapsto \mathbf{W} \mathbf{v} + \mathbf{b}$ where $\mathbf{W} \in \mathbb{R}^{m \times n}$ and $\mathbf{b} \in \mathbb{R}^m$ is the largest singular value of matrix $\mathbf{W}$.
Therefore, in this experiment we can exactly compute the Lipschitz constants of the two fully-connected layers.

\paragraph{Polarity.}
We represent each document as a multiset of the embeddings of its words.
The word embeddings are obtained from a publicly available pre-trained model~\citep{mikolov2013distributed}.
We randomly split the dataset into training, validation, and test sets with a $80 : 10 : 10$ split ratio.
The embeddings are first fed to a fully-connected layer, and then an aggregation function (\ie \textsc{sum}, \textsc{mean} or \textsc{max})  is applied which produces a single vector representation for each document.
This vector representation is passed onto another fully-connected layer.
The ReLU function is applied to the emerging vector and then a final fully-connected layer followed by the softmax function outputs class probabilities.
The output dimension of the first two fully-connected layers is set to $64$.
The model is trained by minimizing the cross-entropy loss.
The minimization is performed using Adam with a learning rate equal to $0.001$.
The number of epochs is set to $200$ and the batch size to $64$.
At the end of each epoch, we compute the performance of the model on the validation set, and we choose as our final model the one that achieved the smallest loss on the validation set.
As discussed above, we can exactly compute the Lipschitz constants of the two fully-connected layers.

\subsection{Stability under Perturbations of Input Multisets}\label{sec:experimental_setup_stability}

\paragraph{ModelNet40.}
We produce point clouds with $100$ particles ($x, y, z$-coordinates) from the mesh representation of objects. 
The $\textsc{NN}_\textsc{mean}$ and $\textsc{NN}_\textsc{max}$ models consist of an MLP which transforms the representations of the particles, an aggregation function ($\textsc{mean}$ and $\textsc{max}$, respectively) and a second MLP which produces the output (\ie class probabilities).
Both MLPs consist of two hidden layers.
The ReLU function is applied to the outputs of the first layer.
The hidden dimension size is set to $64$ for all hidden layers.
The model is trained by minimizing the cross-entropy loss.
The Adam optimizer is employed with a learning rate of $0.001$.
The number of epochs is set to $200$ and the batch size to $64$.
At the end of each epoch, we compute the performance of the model on the validation set, and we choose as our final model the one that achieved the smallest loss on the validation set.

\paragraph{Polarity.}
Each document of the Polarity dataset is represented as a multiset of word vectors.
The word vectors are obtained from a publicly available pre-trained model~\citep{mikolov2013distributed}.
We randomly split the dataset into training, validation, and test sets with a $80 : 10 : 10$ split ratio.
The $\textsc{NN}_\textsc{mean}$ and $\textsc{NN}_\textsc{max}$ models consist of an MLP which transforms the representations of the words, an aggregation function ($\textsc{mean}$ and $\textsc{max}$, respectively) and a second MLP which produces the output.
Both MLPs consist of two hidden layers.
The ReLU function is applied to the outputs of the first layer.
The hidden dimension size is set to $64$ for all hidden layers.
The model is trained for $20$ epochs by minimizing the cross-entropy loss function with the Adam optimizer and a learning rate of $0.001$.
At the end of each epoch, we compute the performance of the model on the validation set, and we choose as our final model the one that achieved the smallest loss on the validation set.

\subsection{Generalization under Distribution Shifts}
This set of experiments is conducted on the Polarity dataset.
The architecture of $\textsc{NN}_\textsc{mean}$ and $\textsc{NN}_\textsc{max}$, and the training details are same as in subsection~\ref{sec:experimental_setup_stability} above.

\begin{figure}[t]
    \centering
    \subfigure{\includegraphics[width=0.3\textwidth]{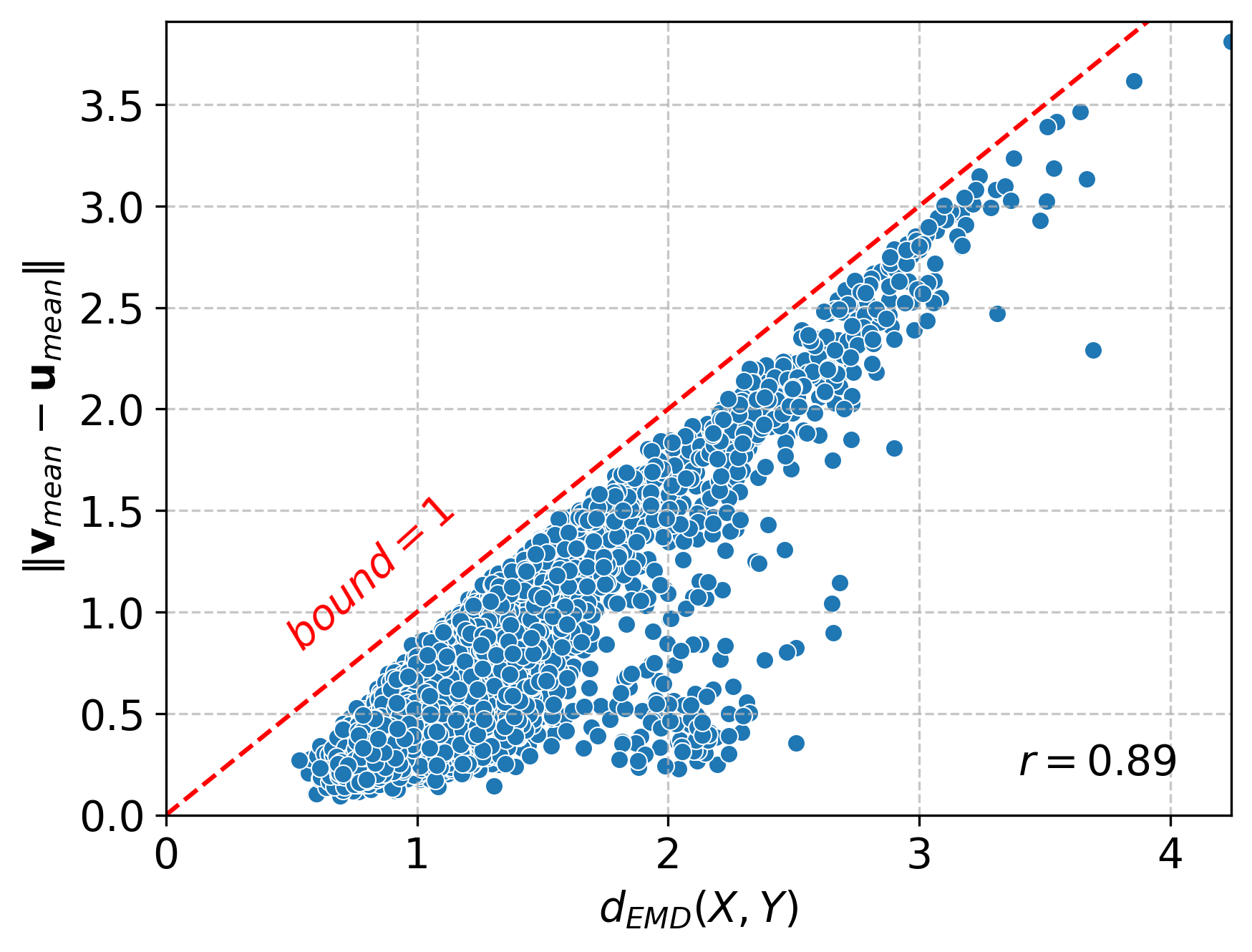}}
    \subfigure{\includegraphics[width=0.3\textwidth]{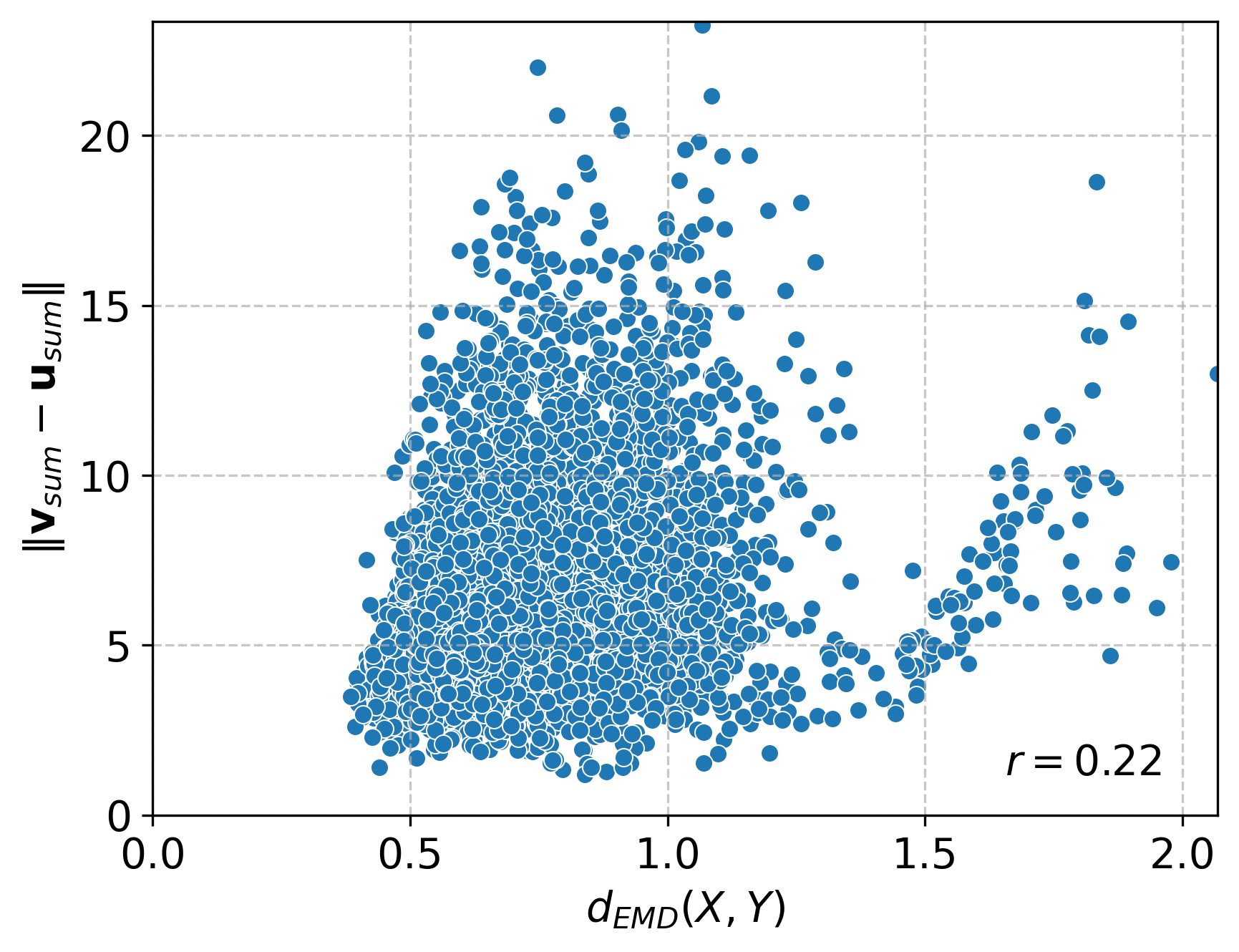}}
    \subfigure{\includegraphics[width=0.3\textwidth]{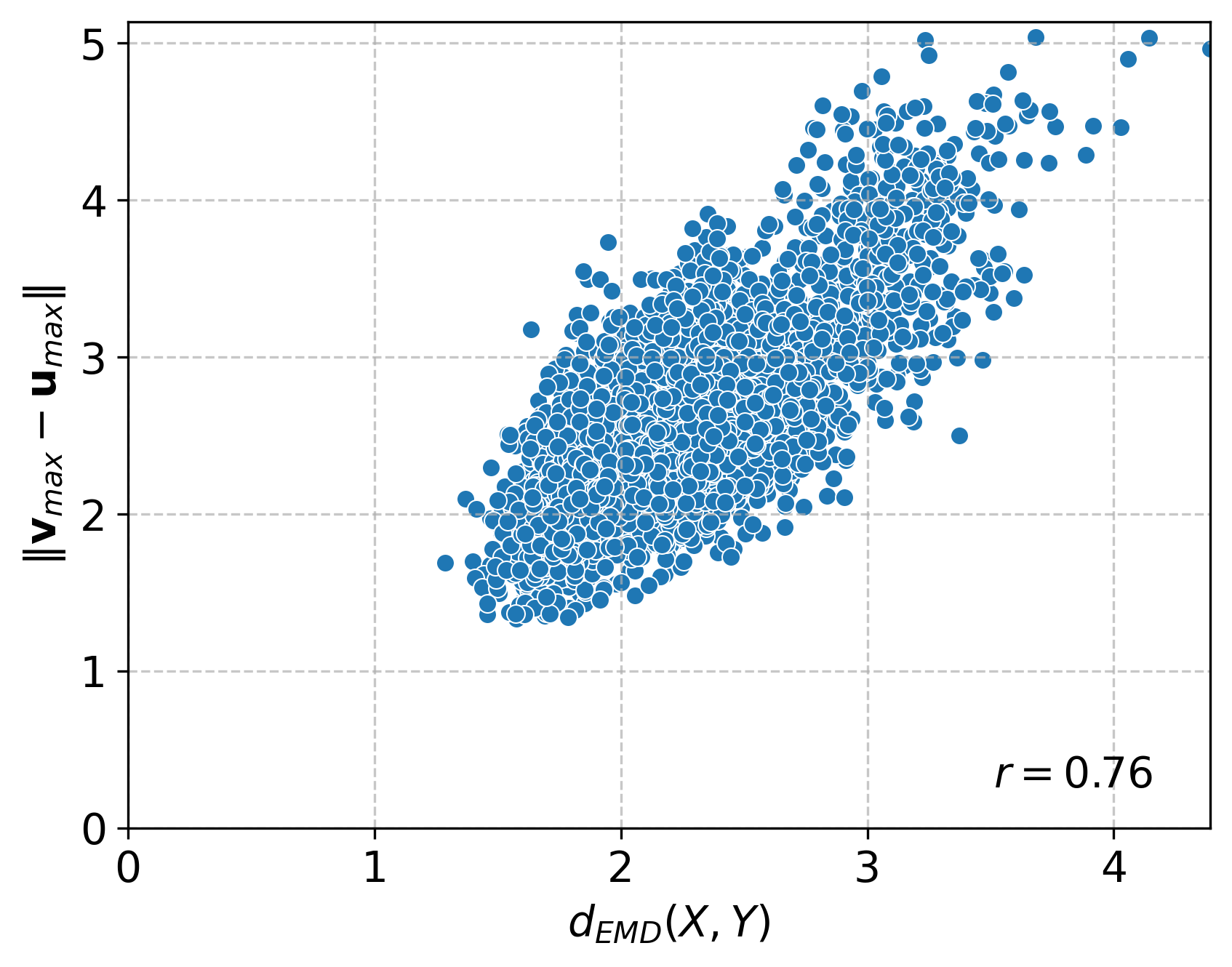}}\\
    \vspace{-.4cm}
    \subfigure{\includegraphics[width=0.3\textwidth]{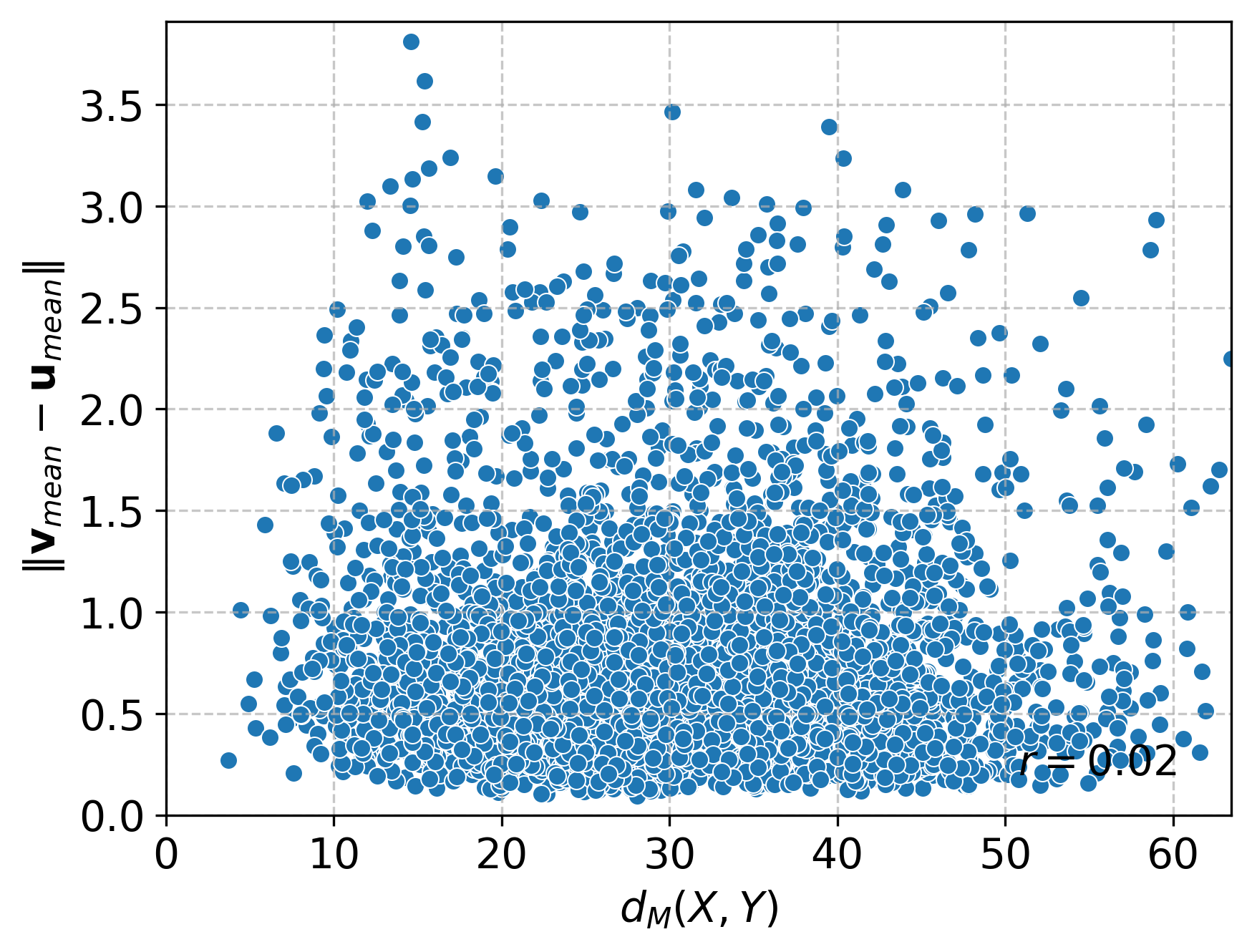}}
    \subfigure{\includegraphics[width=0.3\textwidth]{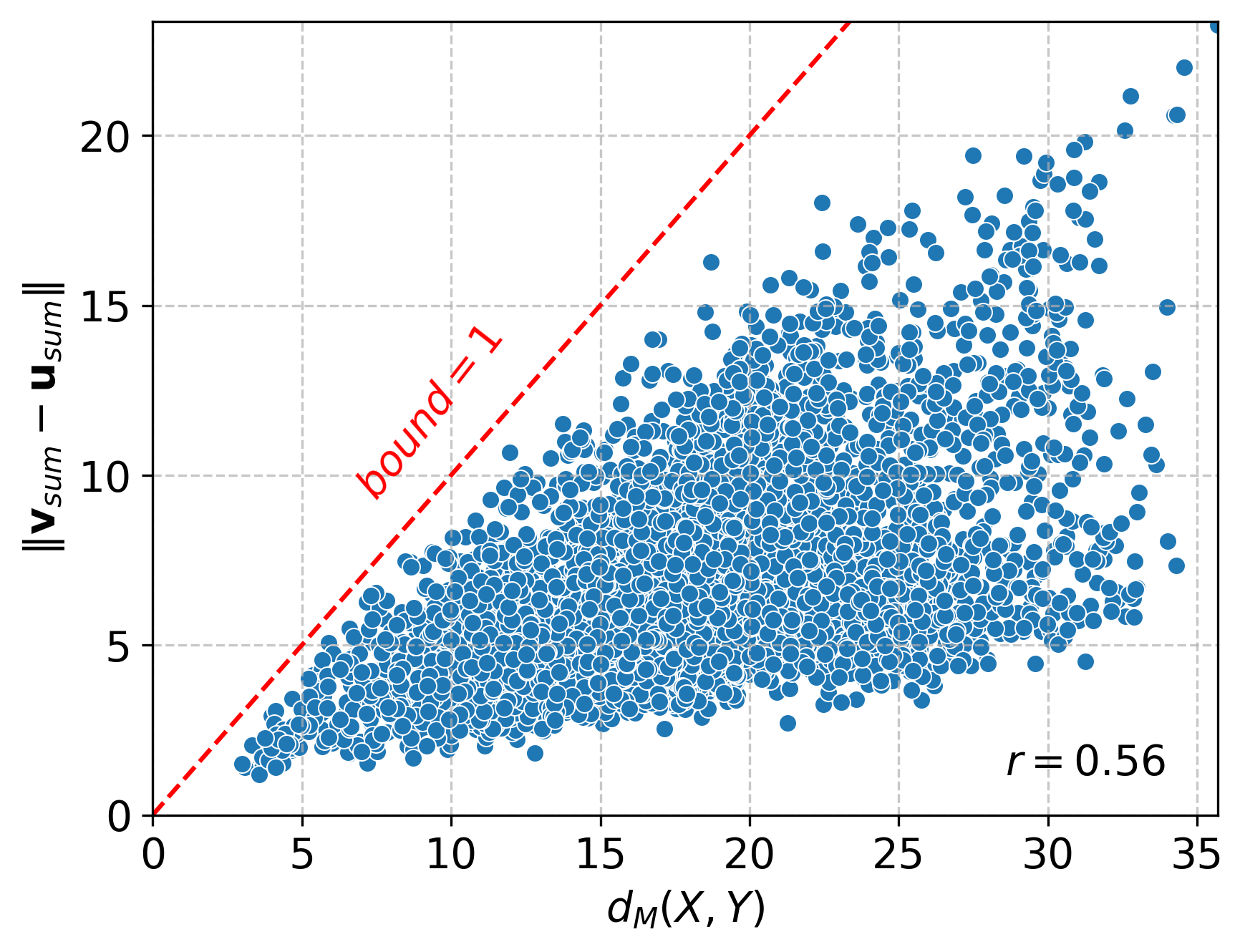}}
    \subfigure{\includegraphics[width=0.3\textwidth]{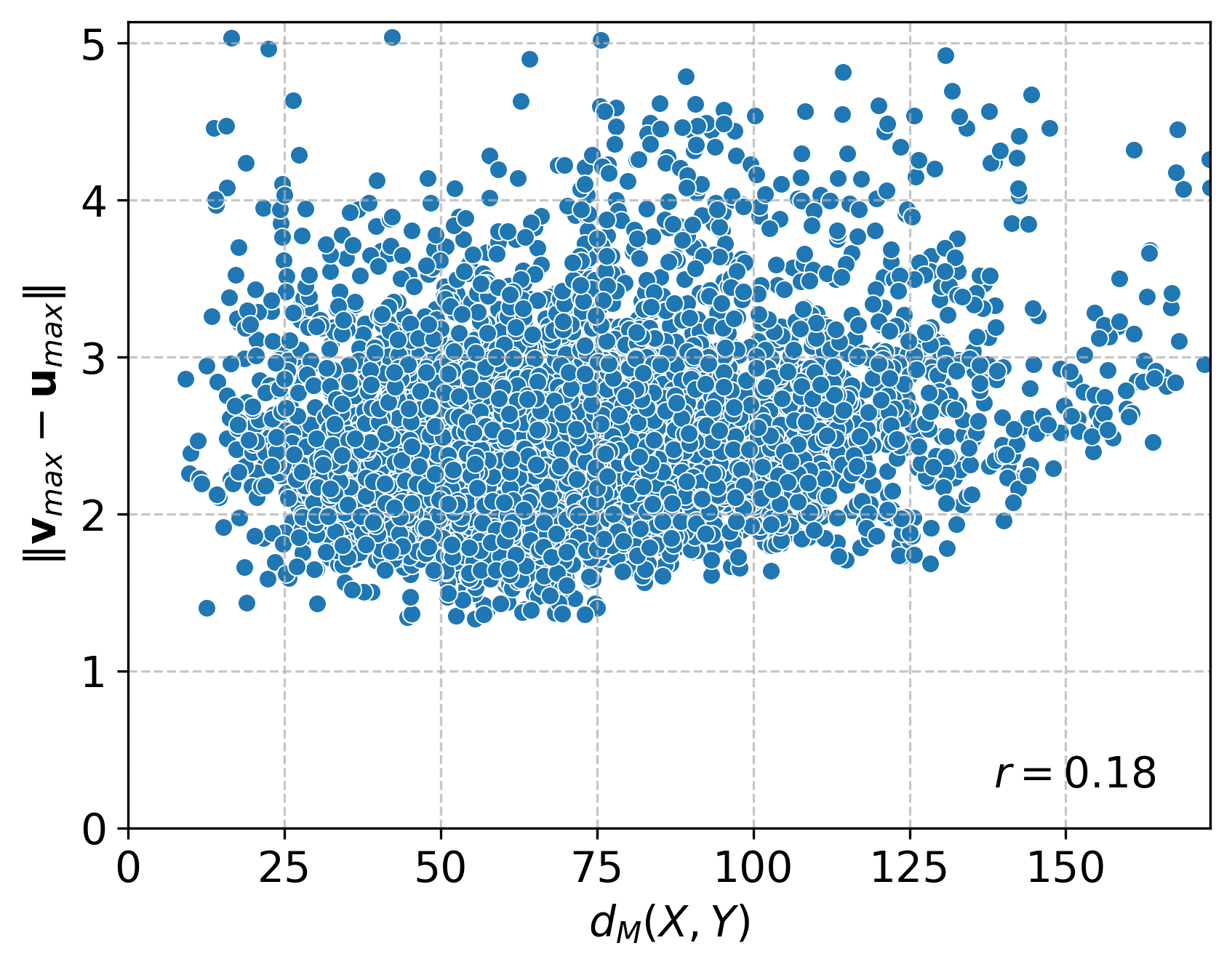}}\\
    \vspace{-.4cm}
    \subfigure{\includegraphics[width=0.3\textwidth]{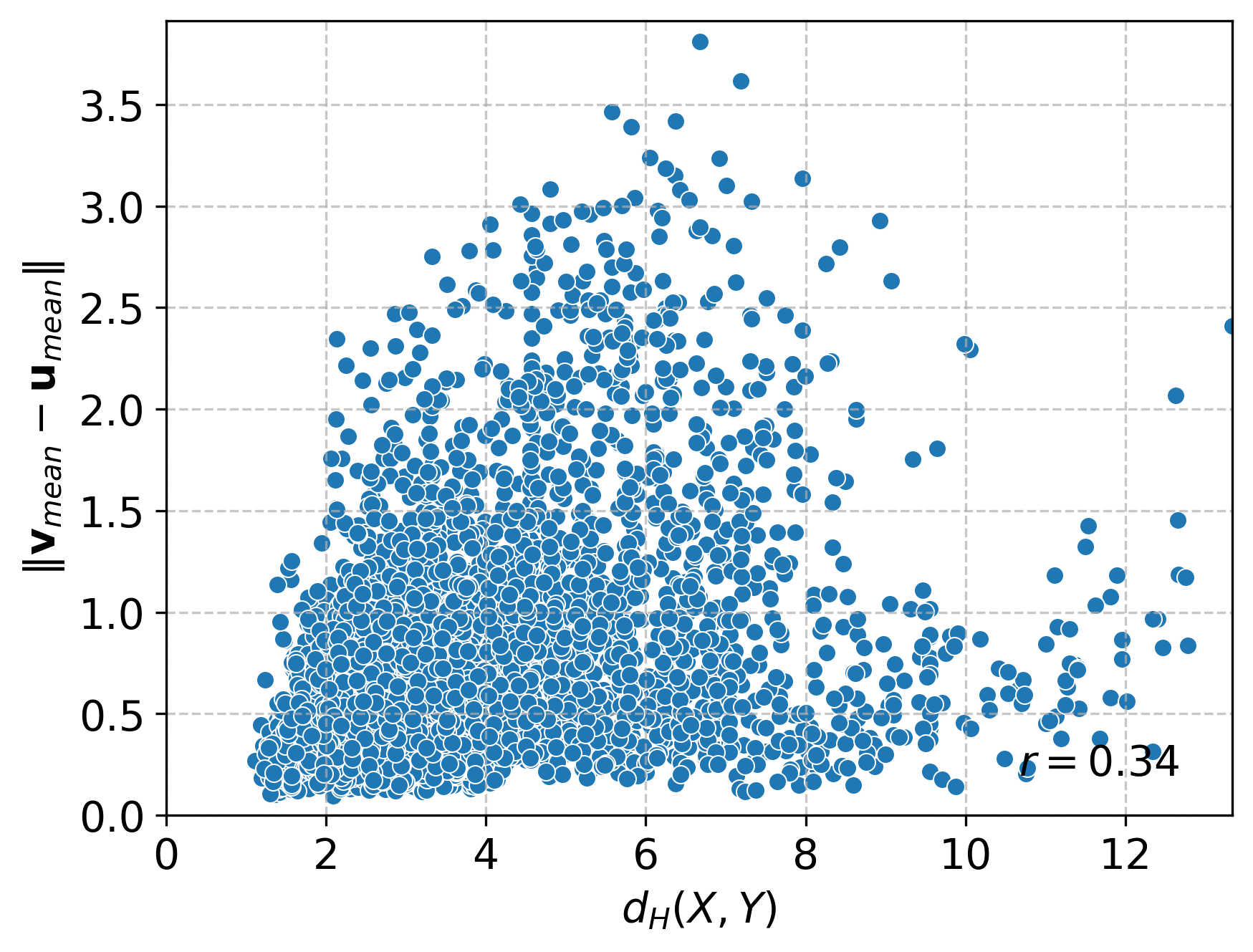}}
    \subfigure{\includegraphics[width=0.3\textwidth]{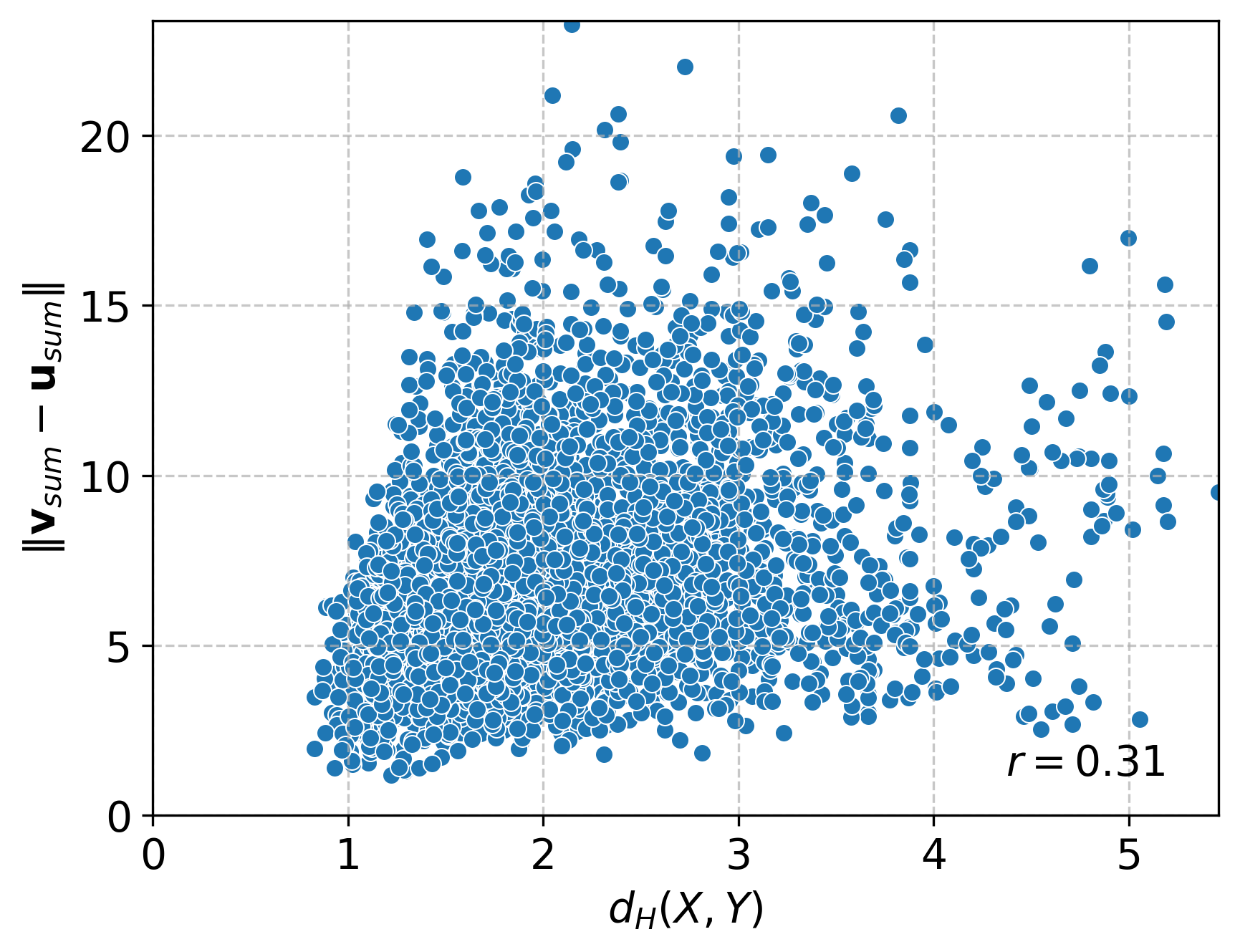}}
    \subfigure{\includegraphics[width=0.3\textwidth]{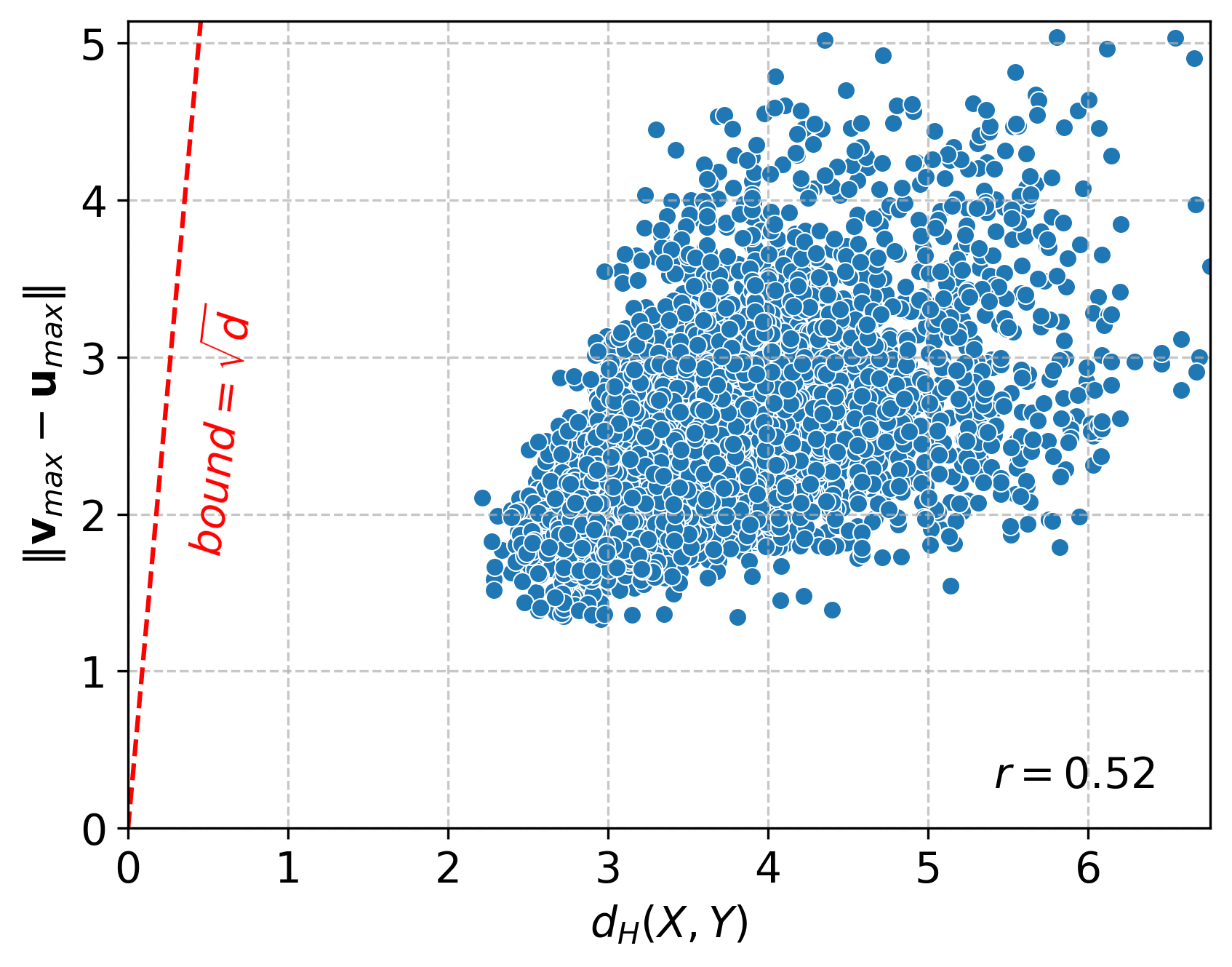}}\\
    \vspace{-.5cm}
    \caption{Each dot corresponds to a pair of documents from the test set of Polarity that is represented as a multiset of word vectors. Each subfigure compares the distance between the latent representations of pairs of documents computed by a distance function for multisets (\ie EMD, Hausdorff distance or matching distance) with the Euclidean distance between the representations of the pairs obtained after applying an aggregation function (\ie \textsc{mean}, \textsc{sum} or \textsc{max}). The correlation between the two distances is also computed and visualized. The Lipschitz bounds are illustrated with dashed lines.}
    \label{fig:corr_polarity}
\end{figure}

\section{Additional Results}

\subsection{Lipschitz Constant of Aggregation Functions}\label{sec:additional_aggregations}
We next provide some additional empirical results that validate the findings of Theorem~\ref{thm:thm1} (\ie Lipschitz constants of aggregation functions).
We experiment with the Polarity dataset.
Figure~\ref{fig:corr_polarity} visualizes the relationship between the output of the three considered distance functions and the Euclidean distance of the aggregated representations of multisets of documents from the Polarity dataset.
Since the Polarity dataset consists of documents which might differ from each other in the number of terms, by Theorem~\ref{thm:thm1} we can derive upper bounds only for $3$ out of the $9$ combinations of distance functions for multisets and aggregation functions.
As expected, the Lipschitz bounds (dash lines) upper bound the Euclidean distance of the outputs of the aggregation functions. 
With regards to the tightness of the bounds, we observe that the bounds that are associated with the \textsc{mean} and \textsc{sum} functions are tighter than the one associated with the \textsc{max} function.
The correlations between the distances of multisets and the Euclidean distances of the aggregated representations of multisets are relatively low in most of the cases.
All three distance functions are mostly correlated with the aggregation functions with which they are related via Theorem~\ref{thm:thm1}.
The highest correlation is achieved between EMD and the \textsc{mean} function ($r=0.89$).

\begin{figure}[t]
    \centering
    \subfigure{\includegraphics[width=0.3\textwidth]{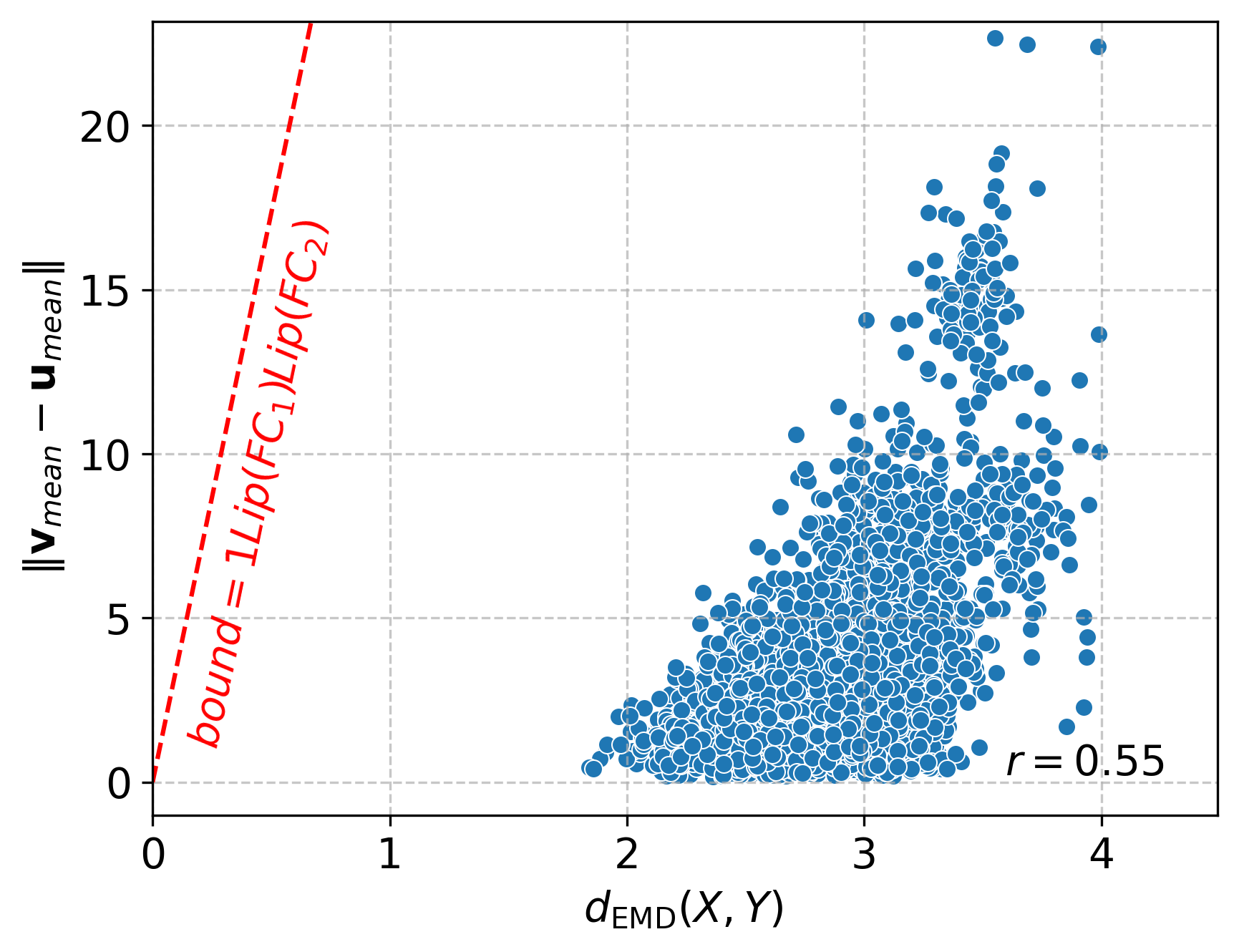}}
    \subfigure{\includegraphics[width=0.3\textwidth]{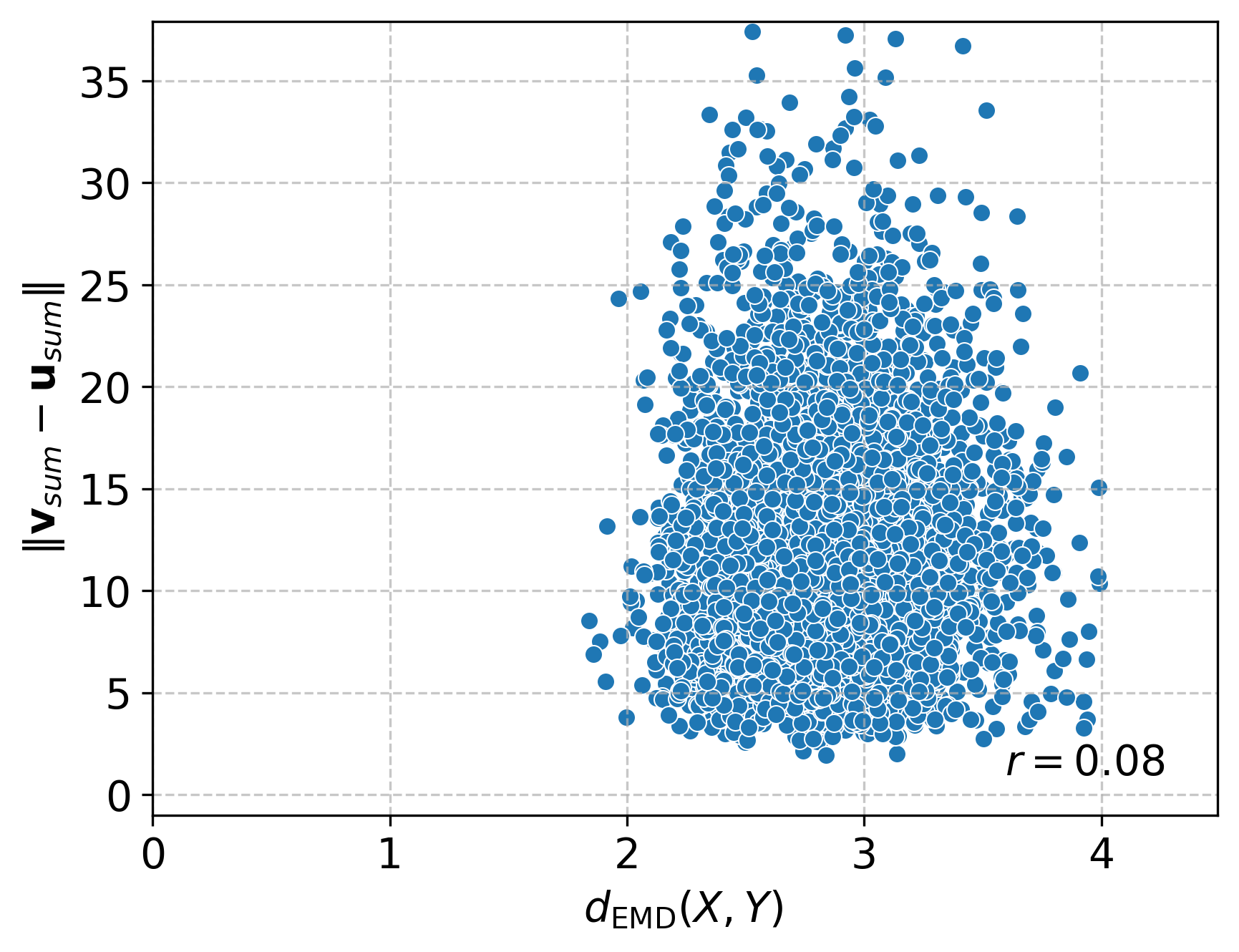}}
    \subfigure{\includegraphics[width=0.3\textwidth]{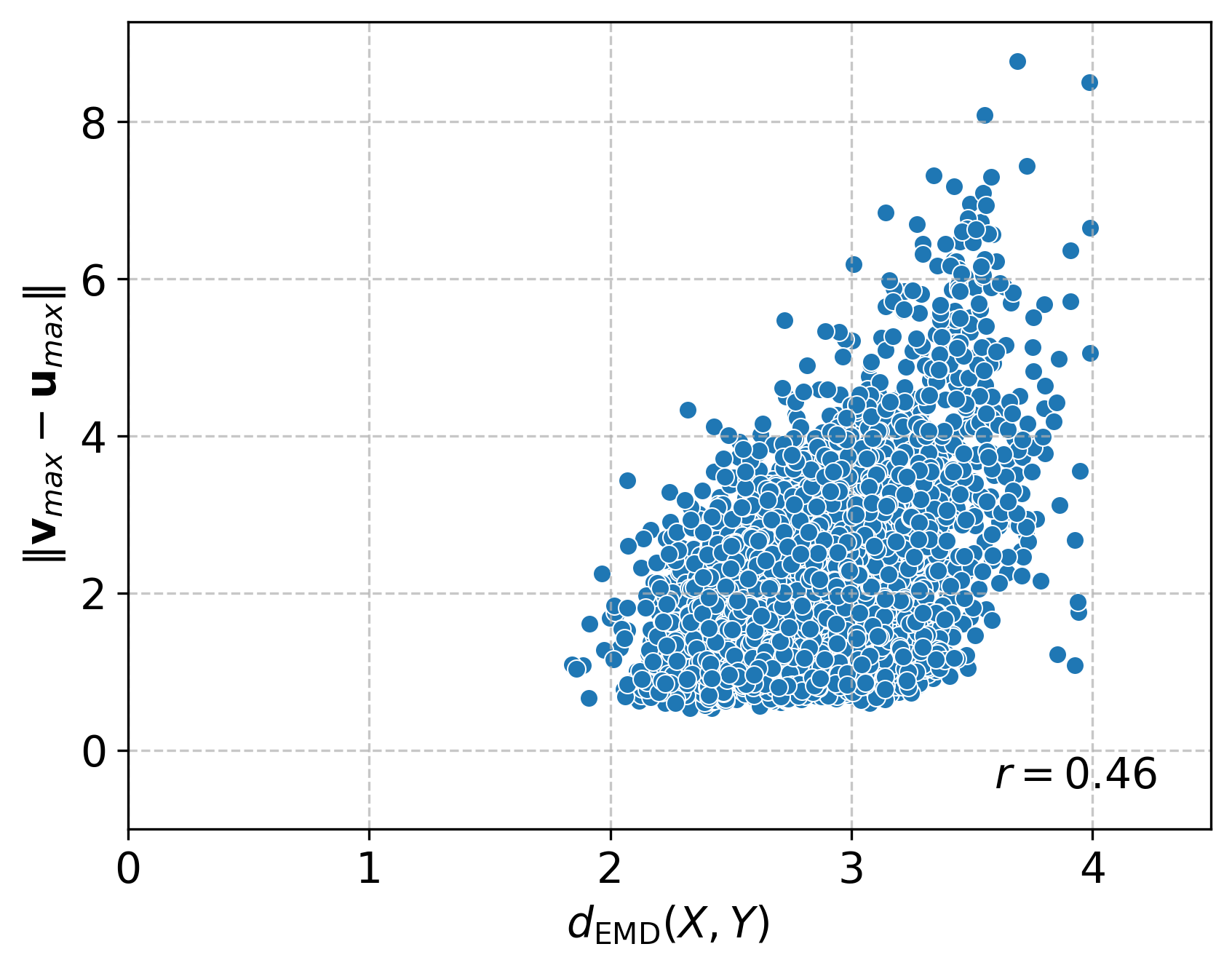}}\\
    \vspace{-.4cm}
    \subfigure{\includegraphics[width=0.3\textwidth]{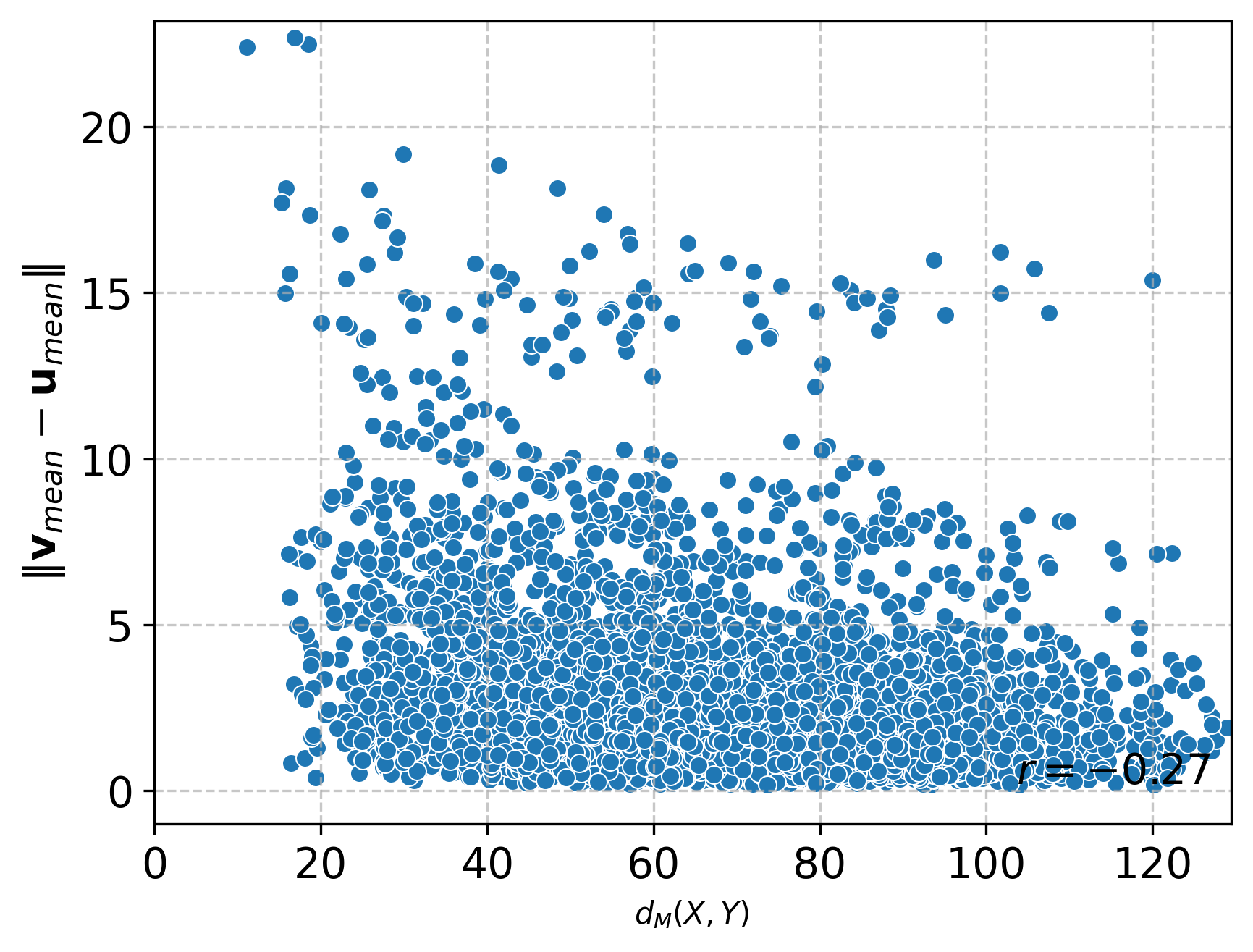}}
    \subfigure{\includegraphics[width=0.3\textwidth]{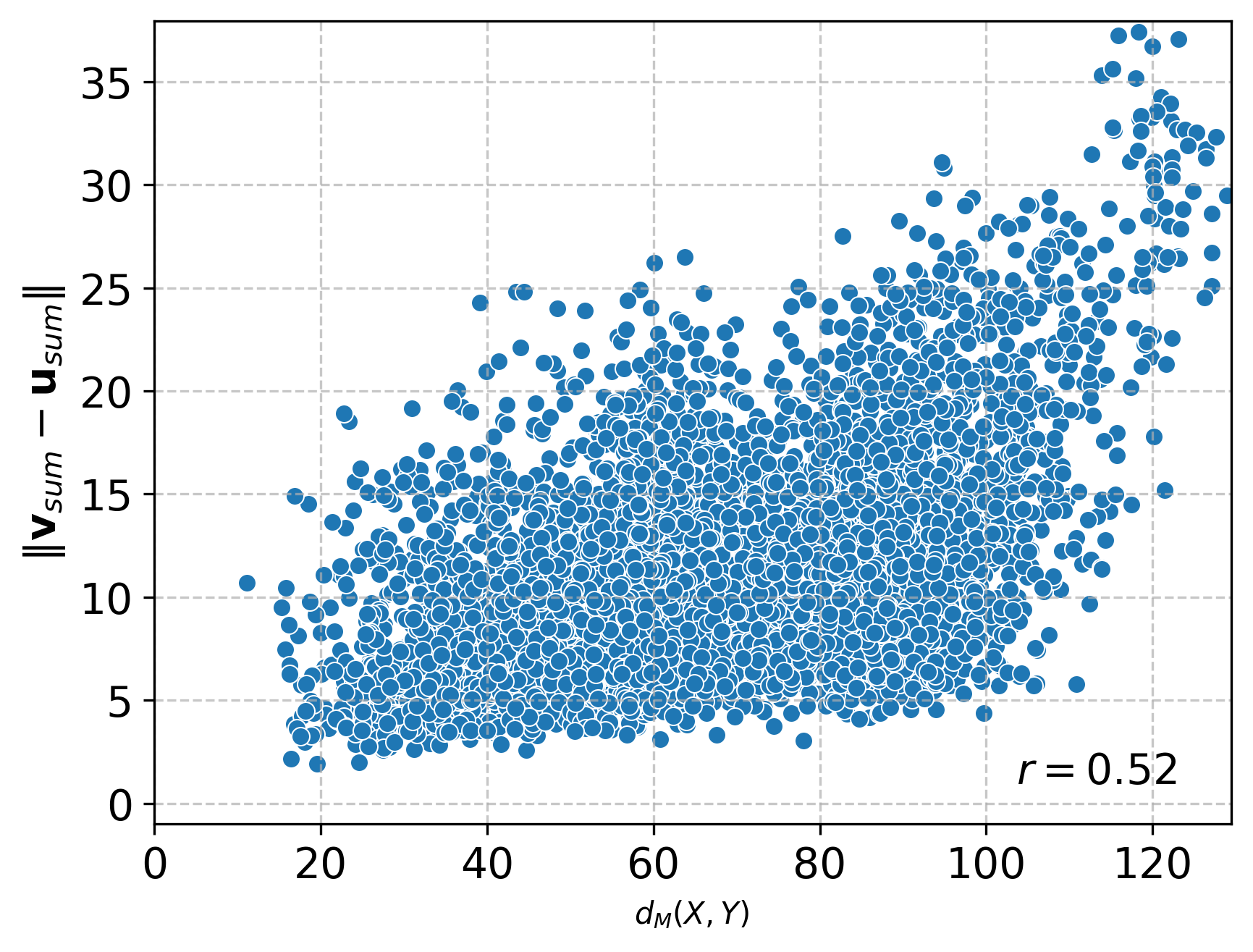}}
    \subfigure{\includegraphics[width=0.3\textwidth]{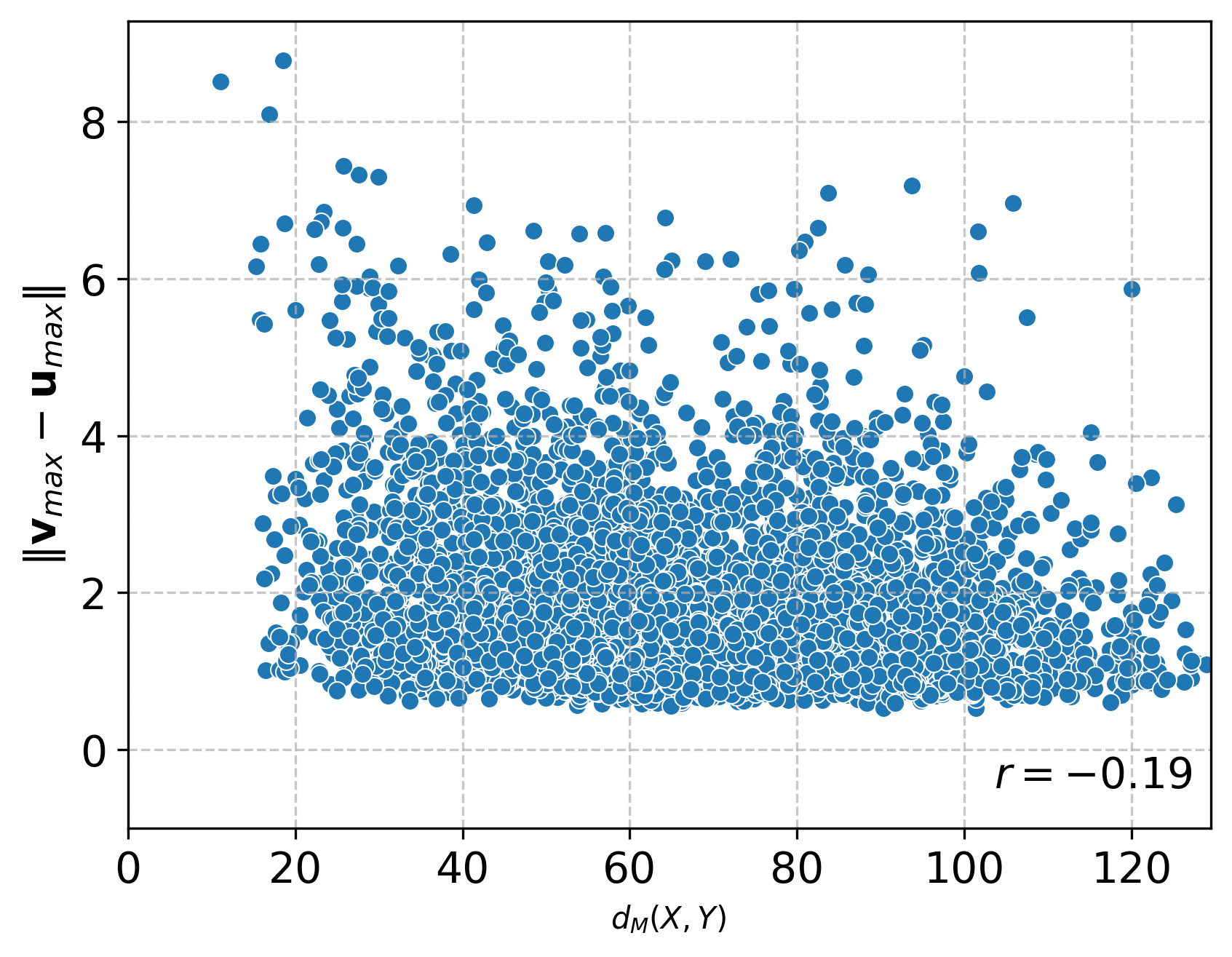}}\\
    \vspace{-.4cm}
    \subfigure{\includegraphics[width=0.3\textwidth]{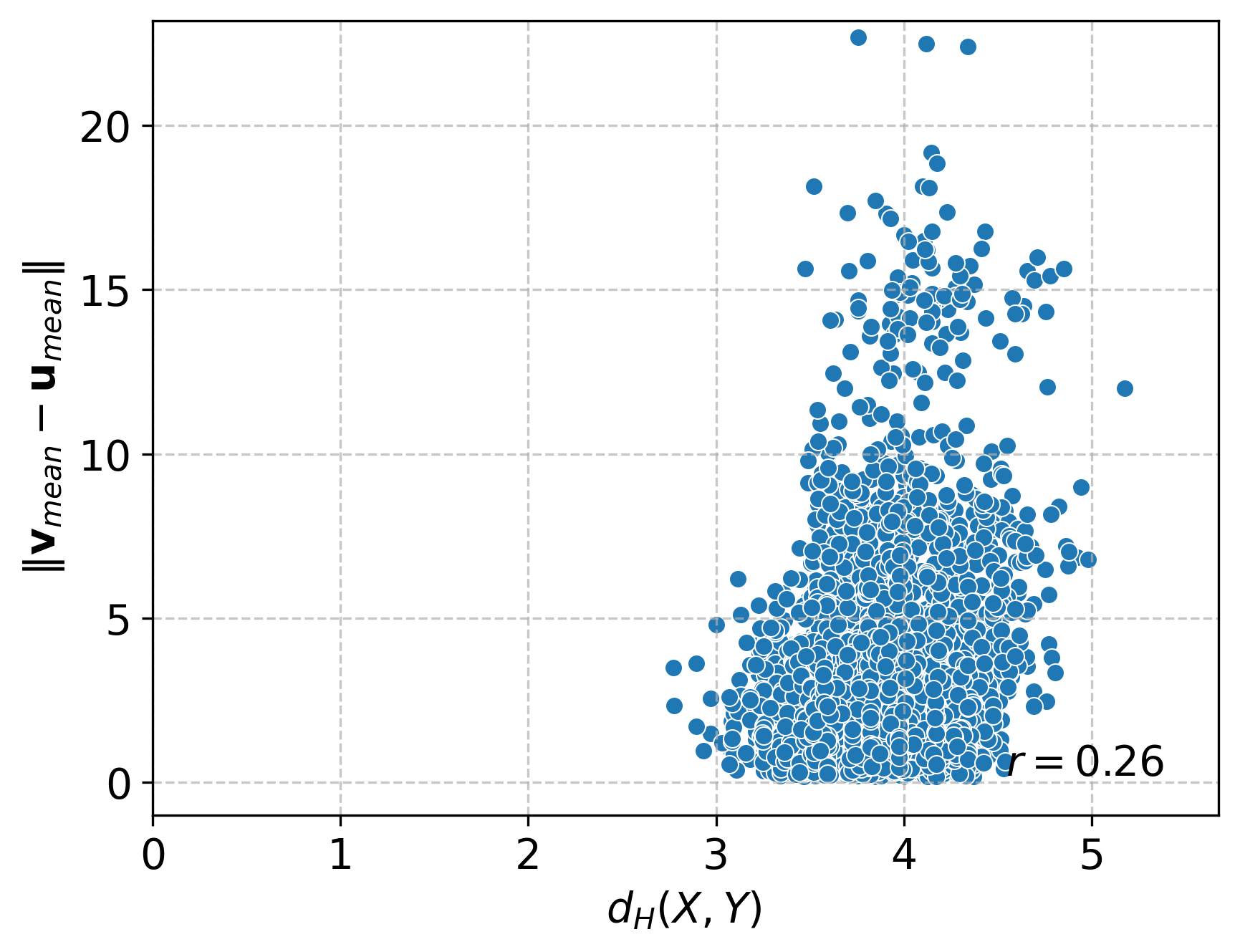}}
    \subfigure{\includegraphics[width=0.3\textwidth]{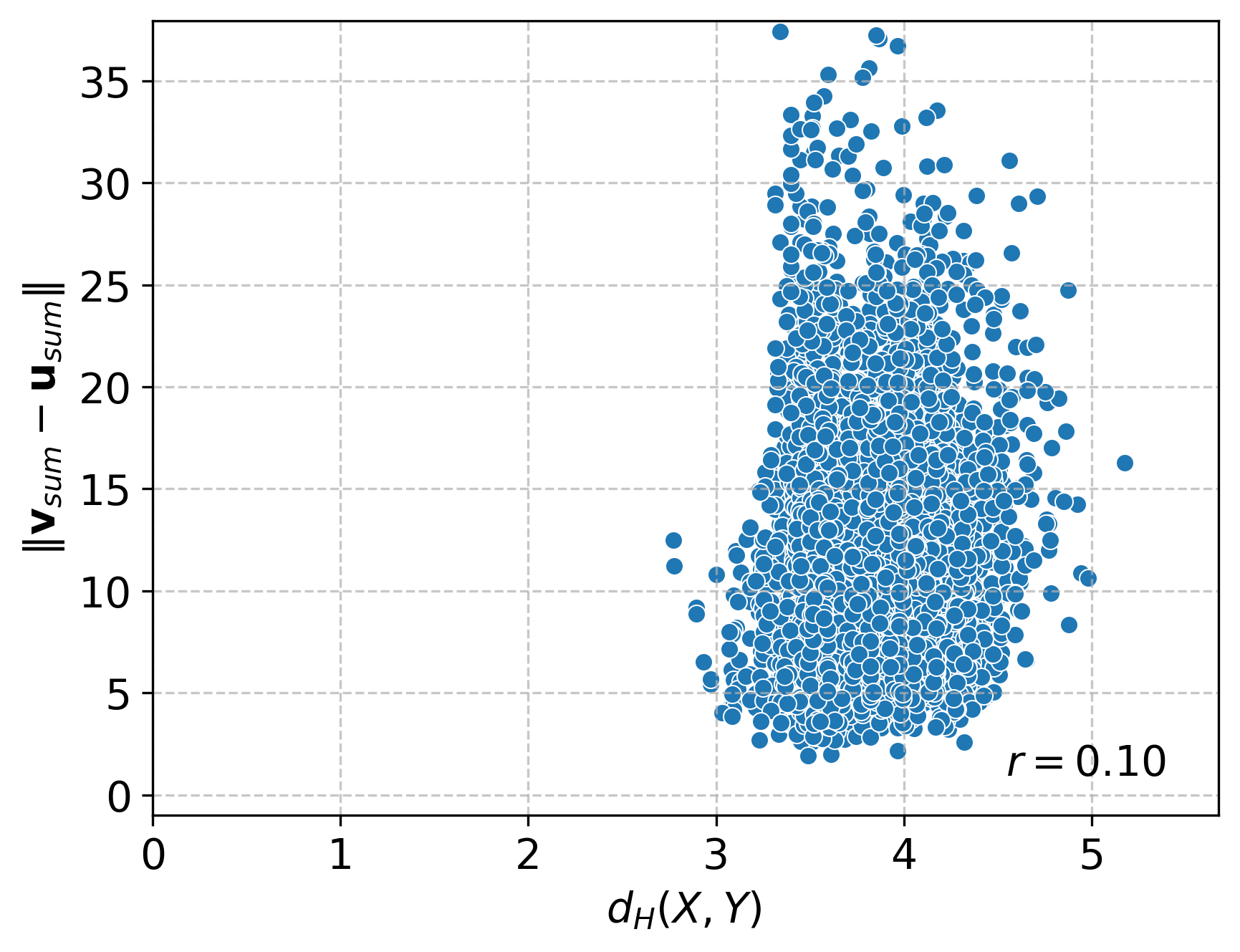}}
    \subfigure{\includegraphics[width=0.3\textwidth]{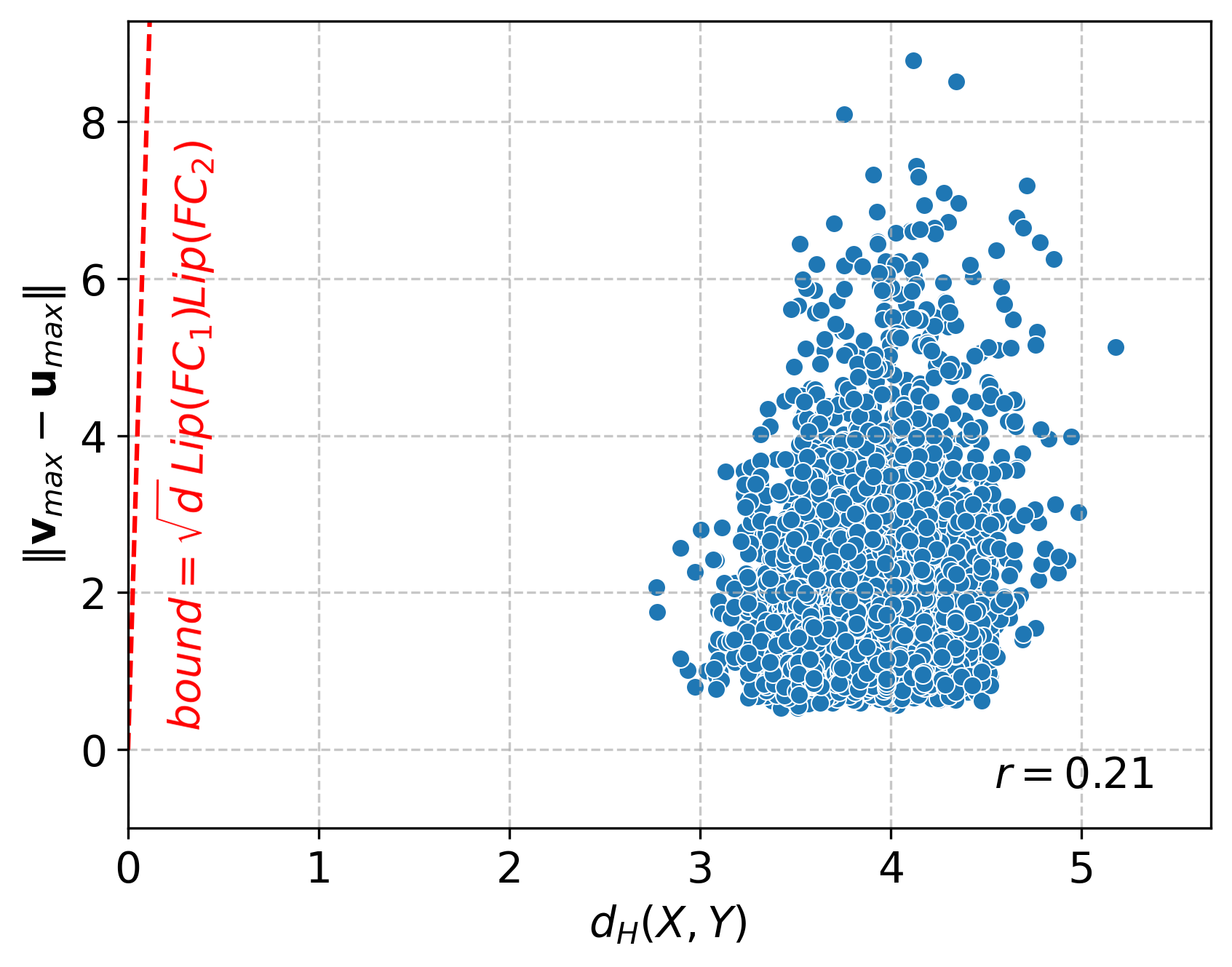}}
    \vspace{-.5cm}
    \caption{Each dot corresponds to a pair of documents from the test set of Polarity that is represented as a multiset of word vectors. Each subfigure compares the distance between the pairs of documents computed by a distance function for multisets (\ie EMD, Hausdorff distance or matching distance) with the Euclidean distance between the representations of the pairs that emerge at the second-to-last layer of $\textsc{NN}_\textsc{mean}$, $\textsc{NN}_\textsc{sum}$ or $\textsc{NN}_\textsc{max}$. The correlation between the two distances is also computed and visualized. The Lipschitz bounds are illustrated with dashed lines.}
    \label{fig:corr_nn_polarity}
\end{figure}

\subsection{Upper Bounds of Lipschitz Constants of Neural Networks for Sets}\label{sec:additional_nns}
We next provide some additional empirical results that validate the findings of Theorem~\ref{thm:thm2} (\ie upper bounds of Lipschitz constants of neural networks for sets).
The results are shown in Figure~\ref{fig:corr_nn_polarity}.
Since the Polarity dataset consists of documents which might differ from each other in the number of terms, by Theorem~\ref{thm:thm2} we can derive upper bounds only for $2$ out of the $9$ combinations of distance functions for multisets and aggregation functions.
While the dash lines upper bound the Euclidean distance of the outputs of the aggregation functions, both bounds are relatively loose on the Polarity dataset.
The correlations between the distances of the representations produced by the neural network for the multisets and the distances produced by distance functions for multisets are much lower than those of the previous experiments.
The highest correlation is equal to $0.55$ (between the neural network that utilizes the \textsc{mean} aggregation function and EMD), while there are even negative correlations.
This is not surprising since for most of the combinations, there are no upper bounds on the Lipschitz constant of the corresponding neural networks.

\begin{figure}[t]
    \centering
    \subfigure{\includegraphics[width=0.3\textwidth]{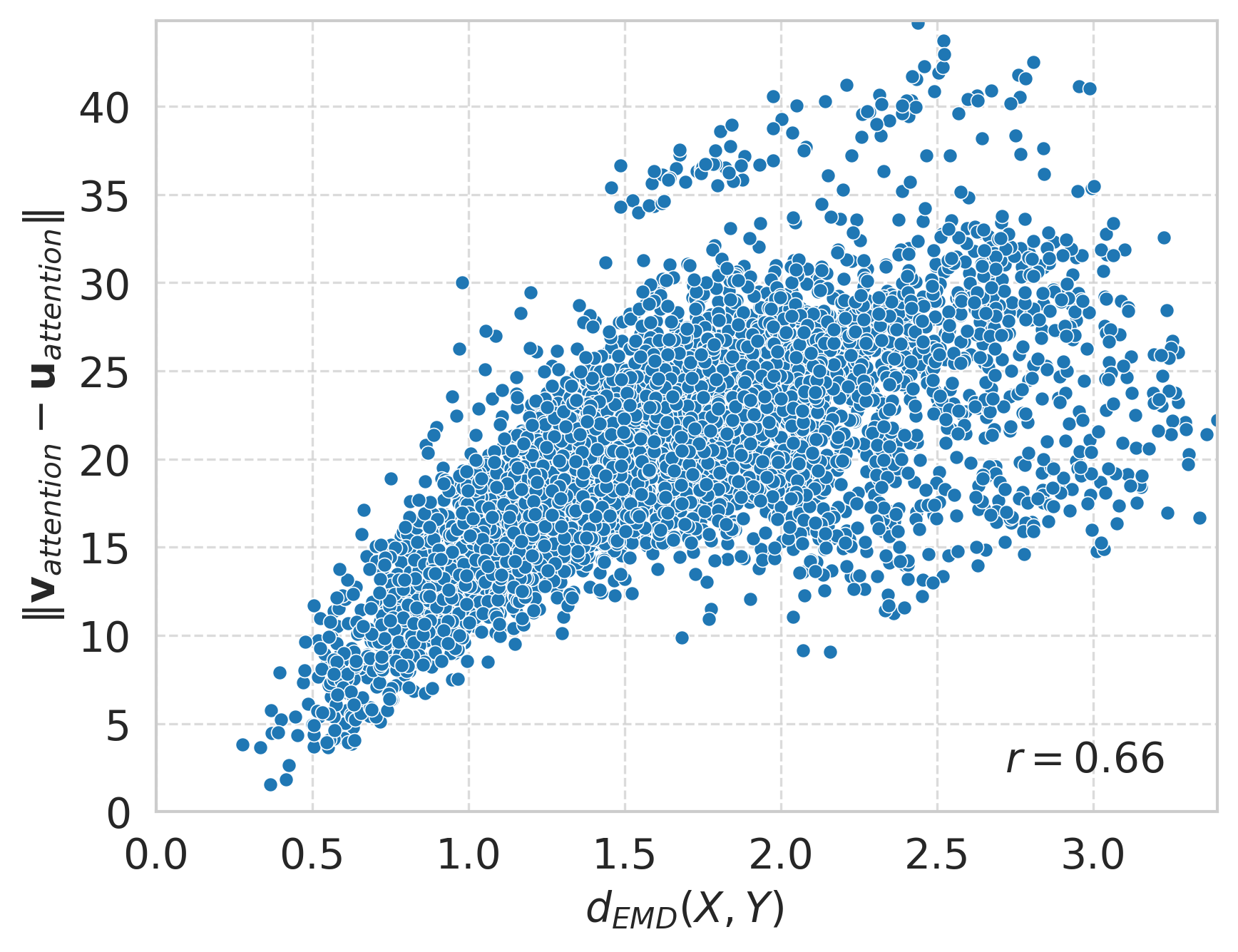}}
    \subfigure{\includegraphics[width=0.3\textwidth]{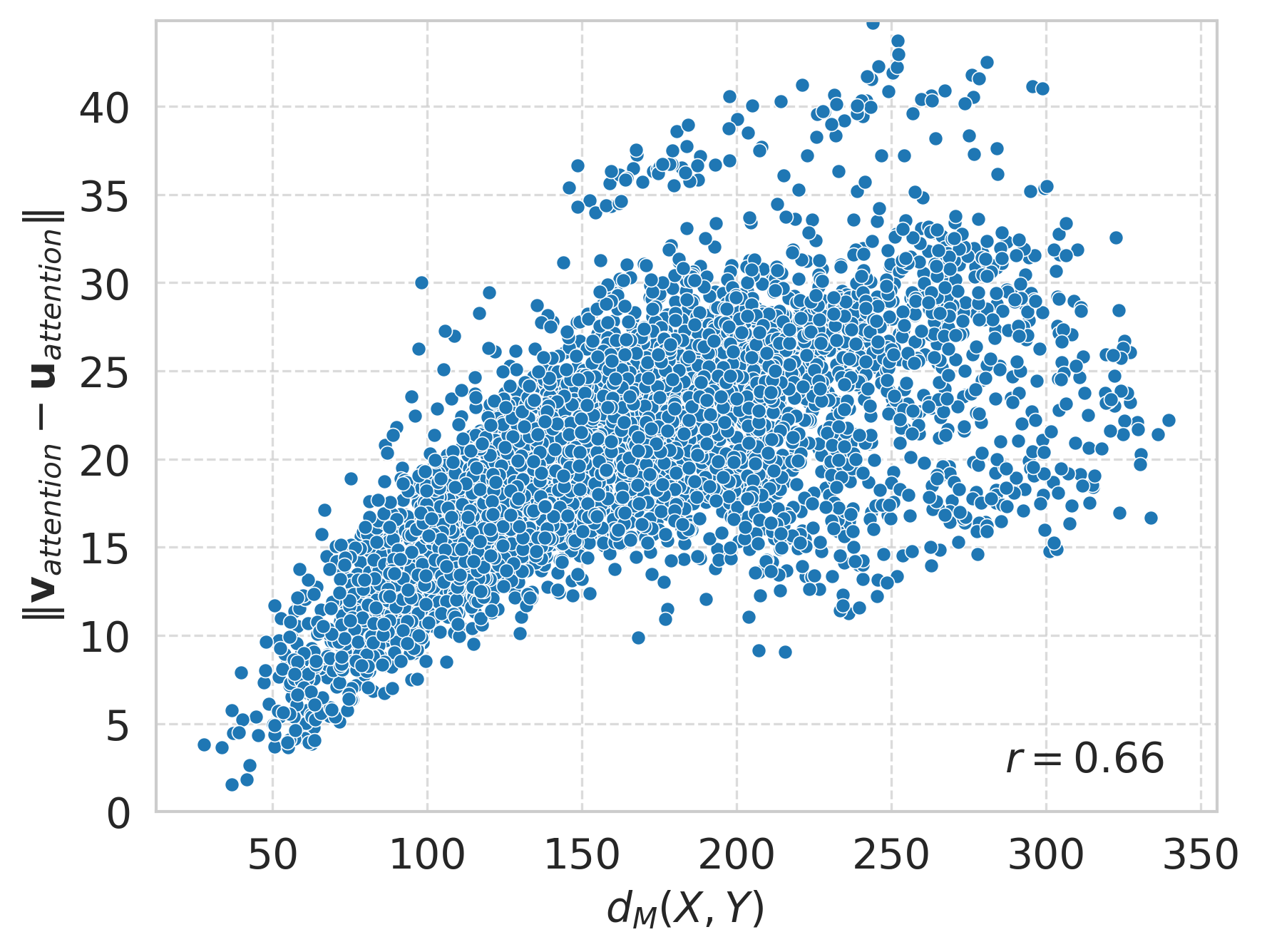}}
    \subfigure{\includegraphics[width=0.3\textwidth]{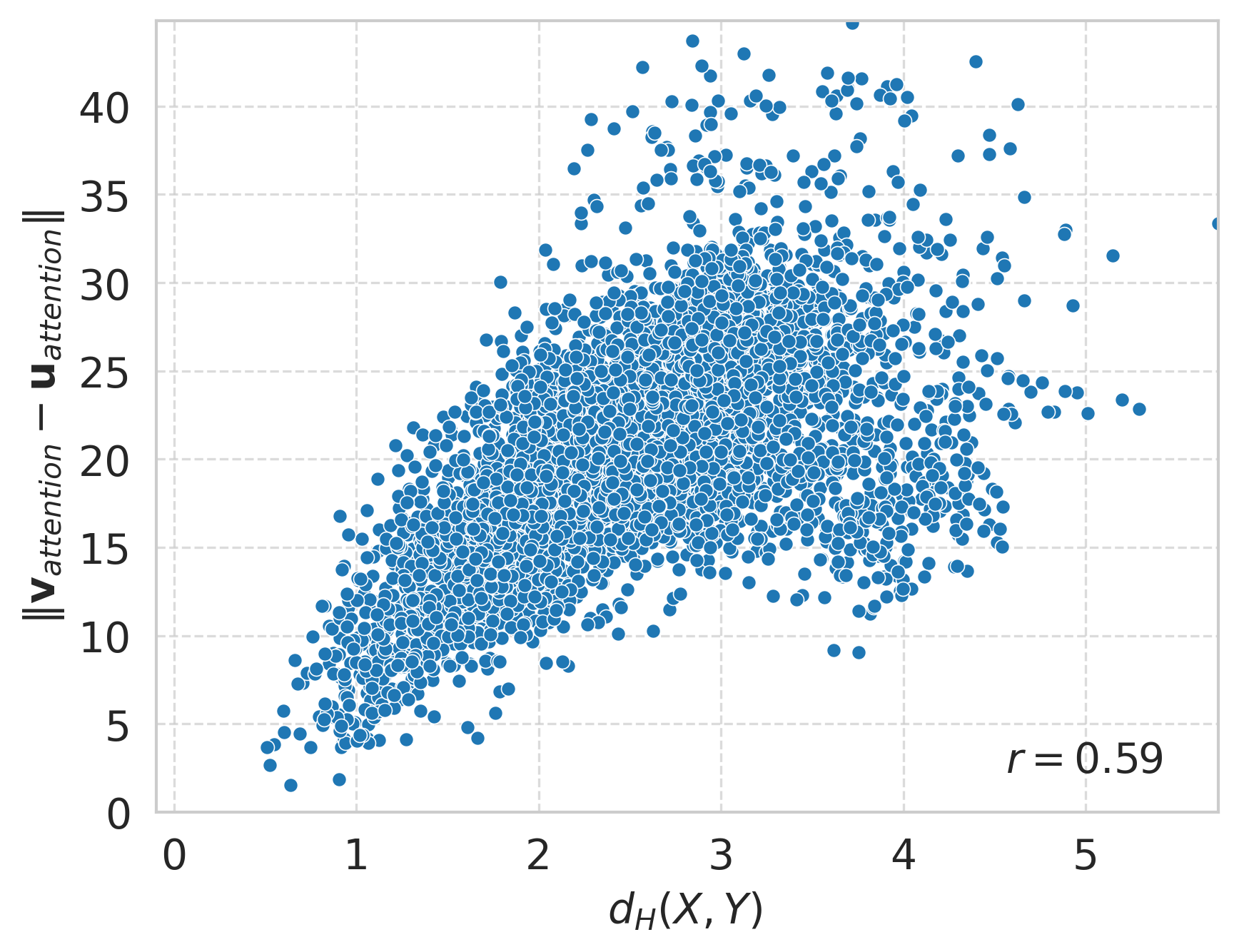}}\\
    \vspace{-.5cm}
    \caption{Each dot corresponds to a pair of point clouds from the test set of ModelNet40. Each subfigure compares the distance between the pairs of point clouds computed by EMD, Hausdorff distance or matching distance with the Euclidean distance between the representations of the pairs that emerge at the second-to-last layer of a neural network that consists of a fully-connected layer, an attention mechanism that aggregates the representations of the elements and two more fully-connected layers. The correlation between the two distances is also computed and visualized.}
    \label{fig:corr_nn_att_modelnet}
\end{figure}

We also visualize the relationship between the output of the three considered distance functions and the Euclidean distance of the multiset representations that are produced by a neural network that utilizes the attention mechanism that is presented in subsection~\ref{sec:lipschitz_aggr}.
Specifically, the neural network is identical to $\textsc{NN}_\textsc{mean}$, $\textsc{NN}_\textsc{sum}$ and $\textsc{NN}_\textsc{max}$, but instead of the standard aggregation functions, it employs the aforementioned attention mechanism.
The experiments are conducted on the ModelNet40 dataset and the results are shown in Figure~\ref{fig:corr_nn_att_modelnet}.
We have shown that the attention mechanism is not Lipschitz continuous with respect to any of the three considered distance functions, and therefore the neural network models that employ this mechanism are also not Lipschitz continuous.
We observe in Figure~\ref{fig:corr_nn_att_modelnet} that the correlations are indeed much lower than those illustrated in Figure~\ref{fig:corr_nn_modelnet} which confirms our theoretical result. 

\subsection{Stability of Neural Networks for Sets under Perturbations}
Here we provide some further details about the experiments presented in subsection~\ref{sec:experiments_stability}.
Specifically, for the experiments conducted on ModelNet40, we attribute the drop in performance of $\textsc{NN}_\textsc{max}$ to the large Hausdorff distances between each test sample and its perturbed version.
For each test sample $X_i$ (where $i \in [2468]$), let $X_i'$ denote the multiset that emerges from the application of Pert. \#1 to $X_i$.
Let also $y_i$ denote the class label of sample $X_i$.
We compute the Hausdorff distance between $X_i$ and $X_i'$ (\ie $d_H(X_i,X_i')$).
We then compute the average Hausdorff distance between $X_i$ and the rest of the test samples that belong to the same class as $X_i$.
Let $\mathcal{S}_i = \{ X_j \colon j \in [2468], y_i = y_j \}$ denote the set of all test samples that belong to the same class as $X_i$.
Then, we compute $\bar{d}_H(X_i, \mathcal{S}_i)$ as follows:
\begin{equation*}
    \bar{d}_H(X_i, \mathcal{S}_i) = \frac{1}{|\mathcal{S}_i|-1} \sum_{Y \in \mathcal{S}_i \setminus \{ X_i \}} d_H(X_i, Y)
\end{equation*}
We then compute $d_H(X_i,X_i') - \bar{d}_H(X_i, \mathcal{S}_i)$.
If this value is positive, then the distance from $X_i$ to $X_i'$ is greater than the average distance of $X_i$ to the other multisets that belong to the same class as $X_i$.
We calculated this value for all test samples and then computed the average value which was found to be $2.63 (\pm 1.10)$.
In general, the perturbation increases the upper bound of the Lipschitz constant of $\textsc{NN}_\textsc{max}$ compared to the upper bound for samples that belong to the same class, and thus $X_i$ and $X_i'$ might end up having dissimilar representations.
On the other hand, for EMD, the average distance is equal to $-1.18 (\pm 0.72)$.
For EMD the upper bound is in general tighter than the bound for pairs of multisets that belong to the same class, and this explains why $\textsc{NN}_\textsc{mean}$ is robust to Pert. \#1.

\subsection{Generalization under Distribution Shifts}
To evaluate the generalization of the two Lipschitz continuous models ($\textsc{NN}_\textsc{mean}$ and $\textsc{NN}_\textsc{max}$) under distribution shifts, we also experiment with the Amazon review dataset~\citep{blitzer2007learning}.
The dataset consists of product reviews from Amazon for four different types of products (domains), namely books, DVDs, electronics and kitchen appliances.
For each domain, there exist $2,000$ labeled reviews (positive or negative) and the classes are balanced.
We construct $4$ adaptation tasks.
In each task, the $\textsc{NN}_\textsc{mean}$ and $\textsc{NN}_\textsc{max}$ models are trained on reviews for a single type of products and evaluated on all domains.

\begin{figure}[t]
    \centering
    \subfigure{\includegraphics[width=0.48\textwidth]{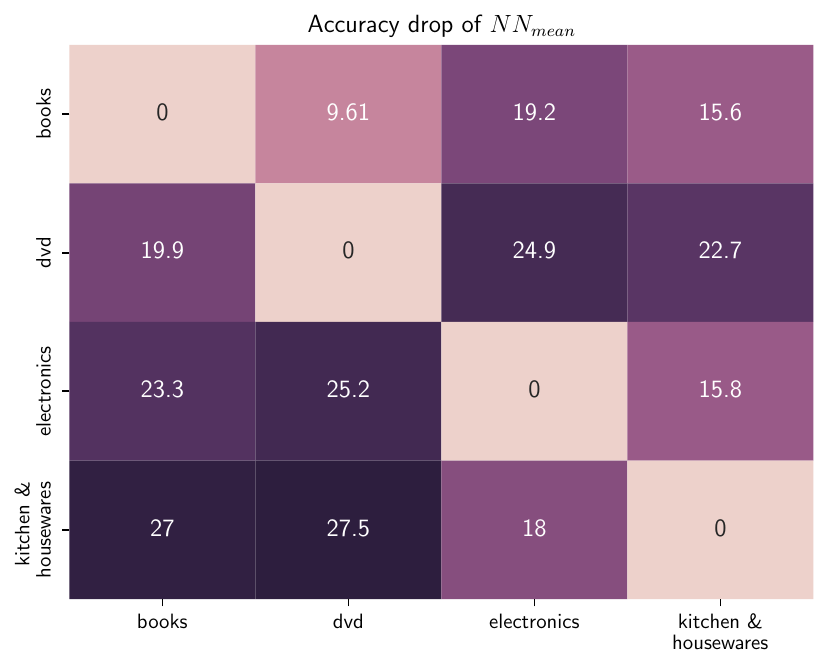}}
    \subfigure{\includegraphics[width=0.48\textwidth]{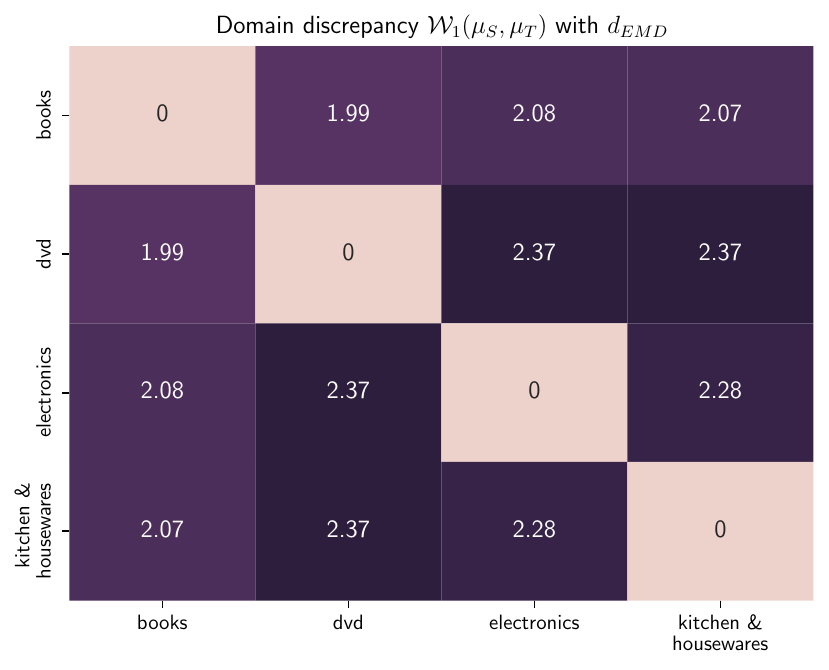}}\\
    \subfigure{\includegraphics[width=0.48\textwidth]{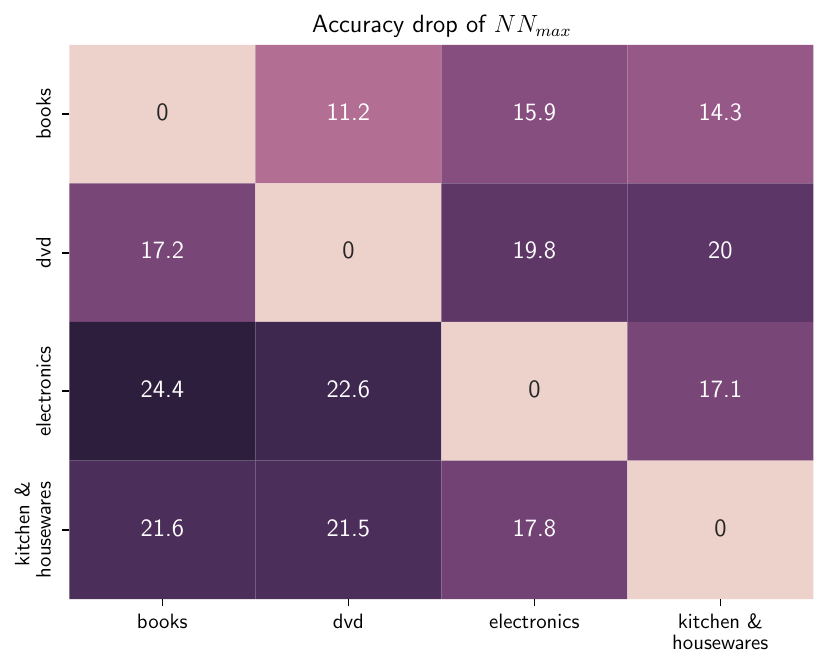}}
    \subfigure{\includegraphics[width=0.48\textwidth]{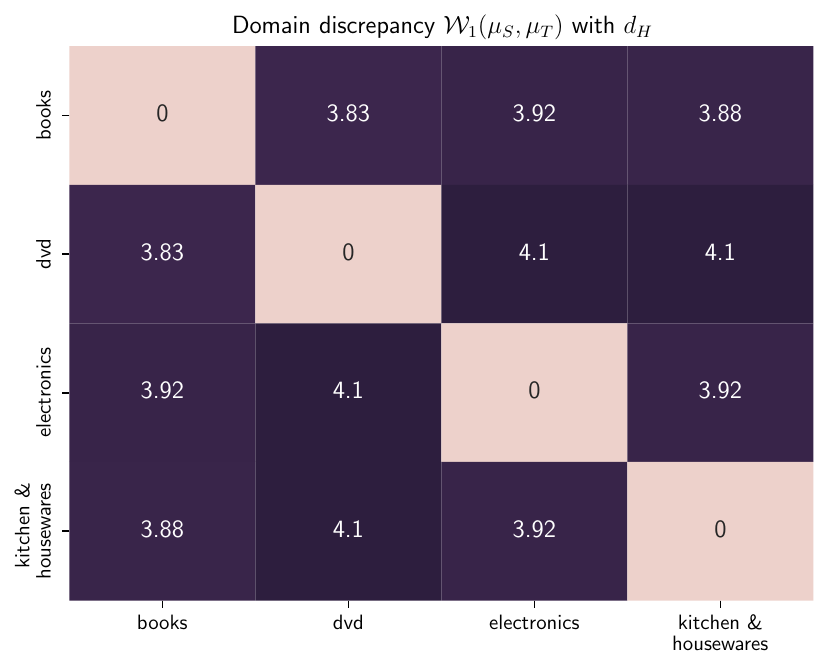}}\\
    \caption{Accuracy drop of the $\textsc{NN}_\textsc{mean}$ and $\textsc{NN}_\textsc{max}$ models, and Wasserstein distance with $p=1$ between groups when the EMD and the Hausdorff distance are used as ground metrics.}
    \label{fig:amazon_domain_shift}
\end{figure}

Each review is represented as a multiset of word vectors.
The word vectors are obtained from a publicly available pre-trained model~\citep{mikolov2013distributed}.
The $\textsc{NN}_\textsc{mean}$ and $\textsc{NN}_\textsc{max}$ models consist of an MLP which transforms the representations of the words, an aggregation function ($\textsc{mean}$ and $\textsc{max}$, respectively) and a second MLP which produces the output.
Both MLPs consist of two hidden layers.
The ReLU function is applied to the outputs of the first layer and also dropout is applied between the two layers with $p=0.2$.
The hidden dimension size is set to $64$ for all hidden layers.
The model is trained for $50$ epochs by minimizing the cross-entropy loss function with the Adam optimizer and a learning rate of $0.001$.
At the end of each epoch, we compute the performance of the model on the validation set, and we choose as our final model the one that achieved the smallest loss on the validation set.

Figure~\ref{fig:amazon_domain_shift} illustrates the Wasserstein distance with $p=1$ between groups when the EMD (Top Right) and the Hausdorff distance (Bottom Right) are used as ground metrics.
It also shows the drop in accuracy when the $\textsc{NN}_\textsc{mean}$ (Top Left) and $\textsc{NN}_\textsc{max}$ (Bottom Left) models are trained on one domain and evaluated on the others.
Each row corresponds to one specific model, \eg the first row represents the model trained on the reviews for books.
We observe that in general the drop in accuracy follows a similar pattern with the distance between the source and target domains, \ie the higher the distance, the higher the drop in accuracy.
We also computed the Pearson correlation between the drop in accuracy and the domain dicrepancies.
We found that the Wasserstein distance based on EMD highly correlates with the accuracy drop of $\textsc{NN}_\textsc{mean}$ ($r =0.917$), while there is an even higher correlation between the Wasserstein distance based on Hausdorff distance and the accuracy drop of $\textsc{NN}_\textsc{max}$ ($r =0.941$).

\subsection{Predictive Performance of Neural Networks for Sets}
We next evaluate the $\textsc{NN}_\textsc{sum}$, $\textsc{NN}_\textsc{mean}$ and $\textsc{NN}_\textsc{max}$ models on four classification and regression datasets, namely ModelNet40, Polarity, IMDB and IMDB-BINARY.
The first two datasets are described in section~\ref{sec:experiments}.
IMDB contains movie reviews from the IMDb database~\citep{maas2011learning}.
The targets are the ratings that accompany the reviews ($10$ different values).
We treat this task as a regression problem.
IMDB-BINARY is a standard graph classification dataset~\citep{yanardag2015deep}, commonly used for evaluating graph kernels and graph neural networks.
Each graph of the IMDB-BINARY dataset was represented as a multiset of the degrees of its nodes.
The results are illustrated in Table~\ref{tab:predictive_perf}.
Each experiment was repeated $5$ times with different random seeds, and for each dataset we report average accuracy (for ModelNet40, Polarity and IMDB-BINARY) or average root mean square error (for IMDB) on the dataset's test set and the corresponding standard deviation.

We can see that $\textsc{NN}_\textsc{max}$ outperforms the other models on ModelNet40.
A possible explanation is that all input multisets have the same size, and in such a setting the max aggregator is Lipschitz continuous with respect to all three considered distance functions.
Therefore, $\textsc{NN}_\textsc{max}$ can effectively capture the distances between the point clouds in ModelNet40.

Polarity consists of short reviews (average number of terms = $20$).
Therefore, whether a review is positive or negative depends primarily on the presence of one or a few terms that indicate sentiment.
These terms can be considered extreme elements, and the Hausdorff distance relies on such extreme elements when comparing its inputs.
This distance function thus seems to be suitable for this task and potentially explains why $\textsc{NN}_\textsc{max}$ is the best-performing model on this dataset.

\begin{table}[t]
    \caption{Average performance (accuracy or root mean square error) of the $\textsc{NN}_\textsc{sum}$, $\textsc{NN}_\textsc{mean}$ and $\textsc{NN}_\textsc{max}$ on the four benchmark datasets.}
    \label{tab:predictive_perf}
    \begin{center}
        \footnotesize
        \begin{sc}
            \begin{tabular}{|l|c|c|c|c|}
            \toprule
            & ModelNet40 $(\uparrow)$ & Polarity $(\uparrow)$ & IMDB $(\downarrow)$ & IMDB-BINARY $(\uparrow)$ \\
            \midrule
            $\textsc{NN}_\textsc{sum}$ & 60.07 $\pm$ 2.12 & 76.93 $\pm$ 2.42 & 0.2210 $\pm$ 0.0085 & 70.20 $\pm$ 1.72 \\
            $\textsc{NN}_\textsc{mean}$ & 63.41 $\pm$ 0.98 & 77.11 $\pm$ 2.34 & 0.2159 $\pm$ 0.0033 & 67.00 $\pm$ 2.52 \\
            $\textsc{NN}_\textsc{max}$ & 77.21 $\pm$ 1.09 & 78.14 $\pm$ 1.94 & 0.2259 $\pm$ 0.0017 & 61.40 $\pm$ 2.57 \\
            \bottomrule
            \end{tabular}
        \end{sc}
    \end{center}
\end{table}

The reviews contained in the IMDB dataset are much longer than those in the Polarity dataset (average number of terms = $254$).
Due to the potential presence of outlier terms, the Hausdorff distance may not accurately capture document similarity.
The matching distance can also be sensitive to document length.
In contrast, EMD captures the overall semantic alignment between documents, and compares documents based on their overall meaning.
This makes EMD more suitable for this task, which is empirically confirmed by the superior performance of $\textsc{NN}_\textsc{mean}$ compared to the other two models.

$\textsc{NN}_\textsc{sum}$ is the best-performing method on the IMDB-BINARY dataset.
In this dataset, capturing both the number of nodes and their degrees is essential.
The matching distance is well suited for this task, and the stronger performance of the $\textsc{NN}_\textsc{sum}$ model, which is Lipschitz continuous with respect to this distance (under certain conditions), supports this intuition.

\end{document}